\definecolor{greyC}{RGB}{180,180,180}
\definecolor{greyL}{RGB}{235,235,235}
\definecolor{shadecolor}{rgb}{0.92,0.92,0.92}
\newcommand{\cX}{\mathcal{X}}
\newcommand{\cY}{\mathcal{Y}}
\newcommand{\bx}{{x}}
\newcommand{\bxtidle}{\tilde{{x}}}
\newcommand{\xadv}{\tilde{{x}}}
\theoremstyle{plain}
\newtheorem{theorem}{Theorem}[section]
\theoremstyle{definition}
\newtheorem{definition}[theorem]{Definition}
\newtheorem{assumption}[theorem]{Assumption}
\newtheorem{property}[theorem]{Property}
\theoremstyle{remark}
\icmltitlerunning{Exploring Model Dynamics for Accumulative Poisoning Discovery}
\begin{document}

\twocolumn[
\icmltitle{Exploring Model Dynamics for Accumulative Poisoning Discovery}



\icmlsetsymbol{equal}{*}

\begin{icmlauthorlist}
\icmlauthor{Jianing Zhu}{hkbu}
\icmlauthor{Xiawei Guo}{ali}
\icmlauthor{Jiangchao Yao}{sjtu,lab}
\icmlauthor{Chao Du}{ali}
\icmlauthor{Li He}{ali}
\icmlauthor{Shuo Yuan}{ali}\\
\icmlauthor{Tongliang Liu}{syd,sydc}
\icmlauthor{Liang Wang}{ali}
\icmlauthor{Bo Han}{hkbu}
\end{icmlauthorlist}

\icmlaffiliation{hkbu}{Department of Computer Science, Hong Kong Baptist University}
\icmlaffiliation{ali}{Alibaba Group}
\icmlaffiliation{sjtu}{CMIC, Shanghai Jiao Tong University}
\icmlaffiliation{lab}{Shanghai AI Laboratory}
\icmlaffiliation{syd}{Mohamed bin Zayed University of Artificial Intelligence}
\icmlaffiliation{sydc}{Sydney AI Centre, The University of Sydney}

\icmlcorrespondingauthor{Bo Han}{bhanml@comp.hkbu.edu.hk}
\icmlcorrespondingauthor{Jiangchao Yao}{Sunarker@sjtu.edu.cn}

\icmlkeywords{Machine Learning, ICML}

\vskip 0.3in
]



\printAffiliationsAndNotice{}  

\begin{abstract}
Adversarial poisoning attacks pose huge threats to various machine learning applications. Especially, the recent accumulative poisoning attacks show that it is possible to achieve irreparable harm on models via a sequence of imperceptible attacks followed by a trigger batch. Due to the limited data-level discrepancy in real-time data streaming, current defensive methods are indiscriminate in handling the poison and clean samples. In this paper, we dive into the perspective of model dynamics and propose a novel information measure, namely, \textit{Memorization Discrepancy}, to explore the defense via the model-level information. By implicitly transferring the changes in the data manipulation to that in the model outputs, Memorization Discrepancy can discover the imperceptible poison samples based on their distinct dynamics from the clean samples. We thoroughly explore its properties and propose Discrepancy-aware Sample Correction (DSC) to defend against accumulative poisoning attacks. Extensive experiments comprehensively characterized Memorization Discrepancy and verified its effectiveness. The code is publicly available at: \url{https://github.com/tmlr-group/Memorization-Discrepancy}.
\end{abstract}

\section{Introduction}
\label{sec:intro}

Machine learning models have achieved remarkable performance on a wide range of tasks in computer vision~\citep{he2016deep} and natural language processing~\citep{devlin-etal-2019-bert}. However, due to the lack of strict supervision in crowd-sourcing~\citep{NIPS2010_0f9cafd0}, data from untrusted sources poses huge threats to machine learning services~\citep{biggio2012poisoning, Goodfellow14_Adversarial_examples}. Specifically, some malicious adversaries~\citep{paudice2018label,goldblum2022dataset} hidden in training data can significantly deteriorate the model performance~\citep{feng2019learning, huang2020unlearnable, tao2021better, fowl2021adversarial}, causing concerns~\citep{bommasani2021opportunities} in those safety-critical applications like autonomous driving or medical intelligence.

\begin{figure}[t]
    \centering
    \hspace{-0.05in}
    \includegraphics[scale=0.145]{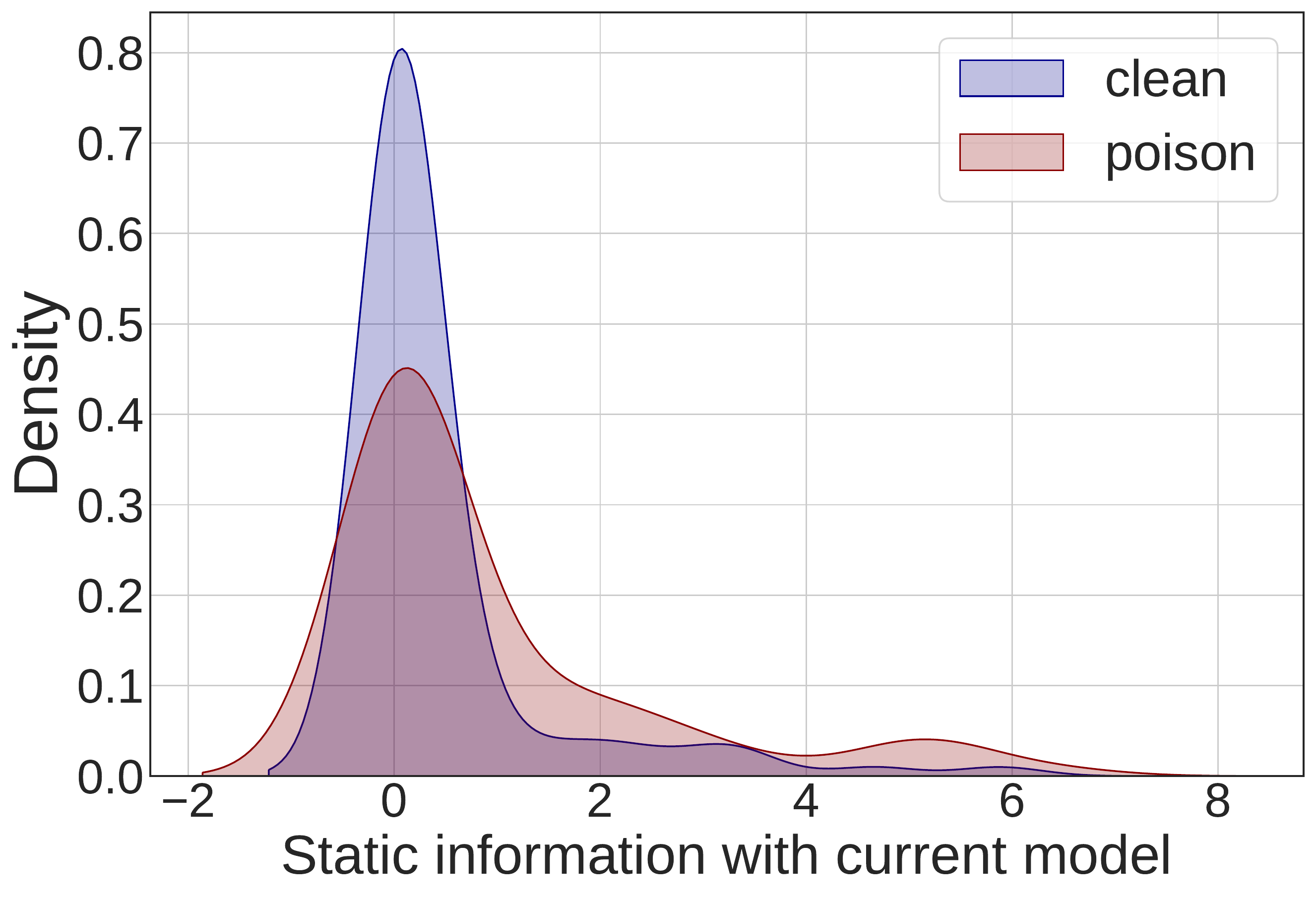}
    \hspace{0.01in}
    \includegraphics[scale=0.145]{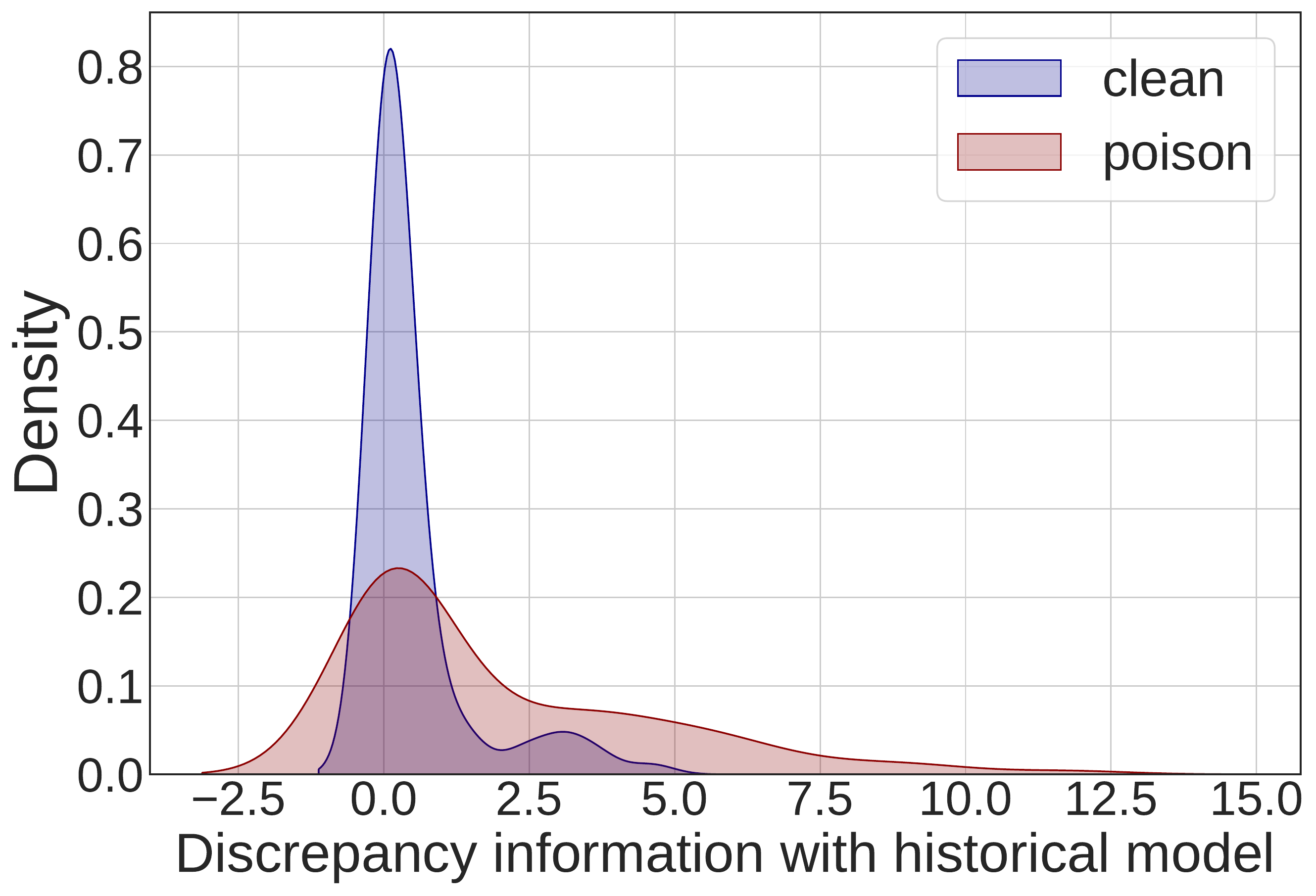} 
    \vspace{-2mm}
    \caption{
    Left: Comparison of the distributions using static information (i.e., the output of the current model); Right: Comparison of the distributions using the discrepancy information (i.e., the output discrepancy of current and historical models). The experiment simulates the accumulative poisoning attack~\citet{pang2021accumulative} in real-time data streaming using CIFAR-10 dataset. The generated poison samples can be better distinguished from clean samples by the discrepancy information, i.e., Memorization Discrepancy. Here the static information is also about the output of the model but is defined as the output difference before and after the model optimized on 1 epoch of data. Considering the interval can be nearly ignored compared with the historical model (before 20 epochs), so termed "static". The detailed operation is illustrated in Figure~\ref{fig:reason}.
    }
    \label{fig:motivation}
    \vspace{-4mm}
\end{figure}

Different from previous well-explored attacks under the offline setting~\citep{li2016data,fowl2021adversarial,goldblum2022dataset}, accumulative poisoning attacks~\citep{pang2021accumulative} are recently proposed and demonstrated to be more imperceptible in real-time data streaming~\citep{wang2018data,zhang2020online}. Employing the newly introduced accumulative batches for pre-poisoning, it will not cause significant harm to the model during the first phase but leverages the trigger batch to induce dramatic degradation of the model performance instantly.
Considering the imperceptibility and the limited knowledge about
accumulative poisoning samples, previous works~\citep{feinman2017detecting,steinhardt2017certified,ma2018characterizing} that depend on the offline data statistics cannot sufficiently handle this type of sneaky adversary. It naturally raises a new challenge: \textit{how can we identify and defend against the imperceptible accumulative poisoning attacks in real-time data streaming?}

Currently, the most possible ways to defend against accumulative poisoning attacks are gradient clipping~\citep{pascanu2013difficulty} and the variants of adversarial training~\citep{tao2021better,geiping2021doesn}, which have both pros and cons. Specifically, although gradient clipping~\citep{pascanu2013difficulty} shows promise to mitigate the poisoning effect, it still can be deceived by samples with small gradient norms in the accumulative phase and has a side-effect on slowing down the training convergence~\citep{pang2021accumulative}. As for adversarial training methods~\citep{Madry_adversarial_training,Zhang_trades}, it has been demonstrated that the natural risk of training with poison samples can be upper bounded by the adversarial risk~\citep{tao2021better}. Therefore, it is natural to adopt the reverse adversarial generation to correct the newly captured samples. Unfortunately, the indiscriminate sample calibration in adversarial training when applying to clean samples is detrimental~\citep{Zhang_trades} to performance (e.g., as illustrated in Figure~\ref{fig:method}) due to the over-correction.

In this paper, we introduce a new measure, termed as \textit{Memorization Discrepancy} (i.e., Eq.~(\ref{eq:memorization_discrepancy}) in Section~\ref{sec:memorization_discrepancy}), which is surprisingly aware of the imperceptible accumulative poisoning samples by backtracking earlier historical model (e.g., as illustrated in Figure~\ref{fig:motivation}). Diving into the model dynamics, we compute the discrepancy by leveraging the historical model's output on the same sample. It can be found in Figure~\ref{fig:reason} that with the increase in the backtracking intervals, poison samples can be more distinguishable from clean samples. The underlying mechanism is to transfer imperceptible manipulation into significant model-level changes (as further explained in Figure~\ref{fig:reason_2}). 
Then, some observed properties (i.e., Properties~\ref{pro:1} and~\ref{pro:2}) like monotonically increasing and the existence of highly discriminative backtracking interval can be used to handle poisoning discovery for the sneaky adversary, which show promising in identifying poison samples with imperceptible constraint from clean samples or other natural samples with distribution shift. 


Based on the above insights, we accordingly design a new defense algorithm, namely, \textit{Discrepancy-aware Sample Correction} (DSC), which incorporates Memorization Discrepancy to selectively calibrate the potential poison samples in real-time data streaming. At the high level, we relax the inner-minimization of reverse adversarial generation (i.e., Eq.~(\ref{eq:objective}) in Section~\ref{sec:proposed_method}) and construct a learning filter capable of calibrating oriented poison samples (as shown in Figure~\ref{fig:method}) to avoid over-calibration. In detail, our DSC employs the early-stopping in sample correction and utilizes the historical model to be an auxiliary inspector for Memorization Discrepancy. Our main contributions are summarized as,
\begin{itemize}
    
    \item We make the first effort to explore identifying the accumulative poisoning attack for the real-time data streaming from the perspective of model dynamics, i.e., considering model changes in poison discovery. 
    
    \item We introduce a novel information measure, i.e., Memorization Discrepancy, to distinguish the imperceptible poison samples by leveraging model-level information from backtracking the historical models. (in Section~\ref{sec:memorization_discrepancy})

    \item We accordingly propose a new learning method, i.e., Discrepancy-aware Sample Correction (DSC), which incorporates the proposed Memorization Discrepancy to selectively calibrate the potential poison samples with only a historical auxiliary model. (in Section~\ref{sec:proposed_method})
    
    \item We conduct extensive experiments to comprehensively characterize the Memorization Discrepancy, and verify the effectiveness of DSC in improving the model robustness against accumulative poisoning attacks using a range of benchmarked datasets. (in Sections~\ref{sec:exp})

\end{itemize}

\section{Backgrounds}
\label{sec:background}

In this section, we briefly review the background of delusive attack and accumulative poisoning attack~\citep{pang2021accumulative}, and discuss some existing defense methods.

\subsection{Delusive Attack}
\label{sec:background_delusive}

Delusive attack~\citep{newsome2006paragraph,feng2019learning} belongs to data poisoning attacks~\citep{barreno2010security,biggio2012poisoning,goldblum2022dataset}, which aim to degrade the model performance via manipulating the training data. The general malicious objective can be formulated as,
\begin{align}
\label{eq:mal_1}
\begin{split}
    \max_{\mathcal{P}}\mathcal{L}(S_{val};\theta^*), s.t.\; \theta^*\in \arg\min_{\theta}\mathcal{L}(\mathcal{P}(S_{train});\theta),
\end{split}
\end{align}
where $S_{train}$ is the training set consisting of natural examples, $S_{val}$ is the validation set, $\mathcal{P}(\cdot)$ denotes the transformation that manipulates $S_{train}$ into a poisoned version and $\mathcal{L}(S;\theta)$ denotes the empirical learning objective of a dataset $S=\{x_i,y_i\}^N_{i=1}$ with the model parameter $\theta$. Specifically, delusive attack targets to deteriorate the overall accuracy of the test data by only manipulating the input feature of the training data~\citep{newsome2006paragraph,barreno2010security,feng2019learning}, instead of attacking the specific class~\citep{koh2017understanding} or triggering the backdoors~\cite{shafahi2018poison}. Generally, the delusive attack can be formulated as the optimization problem through the gradient-based methods (e.g., Project Gradient Decent (PGD)~\citep{Madry_adversarial_training}), and limits the manipulation into a small constraint (e.g., $\ell_\infty$-norm adopted in adversarial attack~\citep{Goodfellow14_Adversarial_examples,kumar2020adversarial}).  

\subsection{Accumulative Poisoning Attack}
\label{sec:background_accumulative}

Different from previous studies which focus on poisoning offline datasets~\citep{feng2019learning, fowl2021adversarial,tao2021better}, \citet{pang2021accumulative} recently proposed the accumulative poisoning attack for the real-time data stream to simulate the poisoning on the online settings~\citep{chechik2010large}. The major difference between this attack from the ordinary delusive attack is that it can interact with the training process and dynamically manipulate the data according to the model status. Through this, it spreads the poisoning effect over multiple learning statuses to further avoid distinct modifications on clean samples. The certain objective for accumulative poisoning attack can be formulated as,
\begin{align}
\label{eq:mal_2}
\min_{\mathcal{P},\mathcal{A}}\nabla_{\theta}\mathcal{L}(S_{val};\mathcal{A}(\theta^T))^\top\nabla_{\theta}\mathcal{L}(\mathcal{P}(S_T);\mathcal{A}(\theta^T)),
\end{align}
where $\mathcal{A}$ denotes an accumulative phase to inject secrete poison samples, $\mathcal{A}(\theta^T)$ denotes the model parameter at round $T$ obtained after the accumulative phase and $\nabla_\theta$ denotes the gradient. Specifically, the whole process can be divided into two parts given a pre-trained burn-in model for several epochs on the data stream. First, the model will be secretly poisoned by the samples in the accumulative phase $\mathcal{A}$, while keeping test accuracy in a heuristically reasonable range of variation. Then a trigger batch $\mathcal{P}(S_T)$ will be fed into the model. By jointly optimizing the accumulative phase and the trigger batch $\mathcal{P}(S_T)$, the accumulative poisoning attack can result in a severe drop in the model performance in a single step (e.g., one batch). More details about the accumulative poisoning attacks can be referred to in Appendix~\ref{app:poison_details}.


\subsection{Existing Defenses}
\label{sec:background_defense}

To combat data poisoning, there are many strategies proposed for defending against poisoning attacks, like detection-based methods~\citep{steinhardt2017certified, collinge2019defending} to find and filter the poison data according to the feature statistics,  robust training methods~\citep{borgnia2021strong,li2021anti} that is designed for targeted or backdoor attacks. Considering the characteristic of the real-time data streaming and the imperceptibility of delusive attack, it is computationally expensive and impractical to analyze the statistics for the incoming data~\citep{pang2021accumulative, kumar2020adversarial}. For the accumulative poisoning attack, except the gradient clipping discussed in~\citet{pang2021accumulative} that constrains the poisoning effect by small gradients, a principled defense~\citep{tao2021better,geiping2021doesn} based on adversarial training can also serve as the major technique to calibrate poison samples. However, both of them are indiscriminate in handling the poison and clean samples. Different from the previous methods, we introduce a novel information measure to discover the imperceptible poison samples by considering the model dynamics.

\begin{figure}[t!]
    \centering
    \includegraphics[scale=0.33]{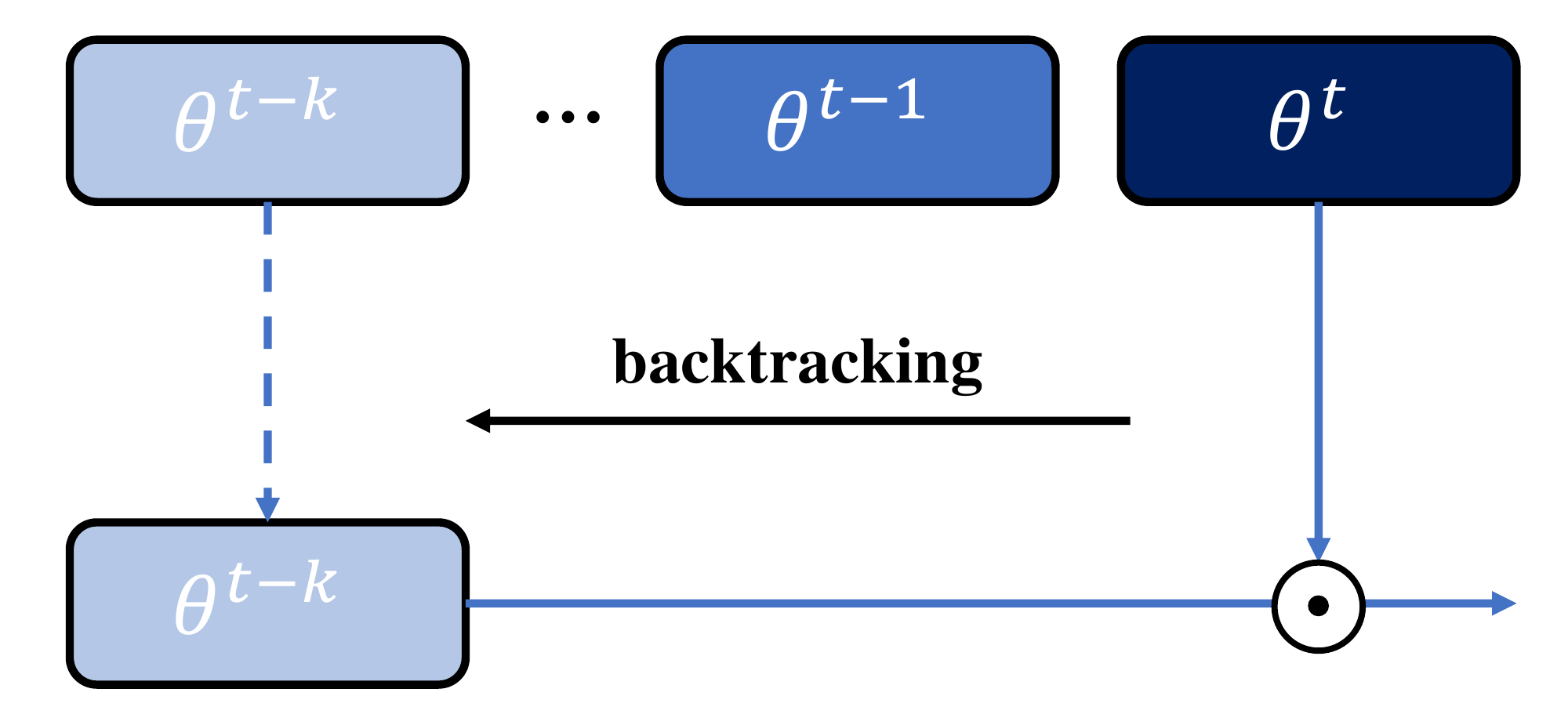}
    \vspace{2mm}
    \\
    \includegraphics[scale=0.19]{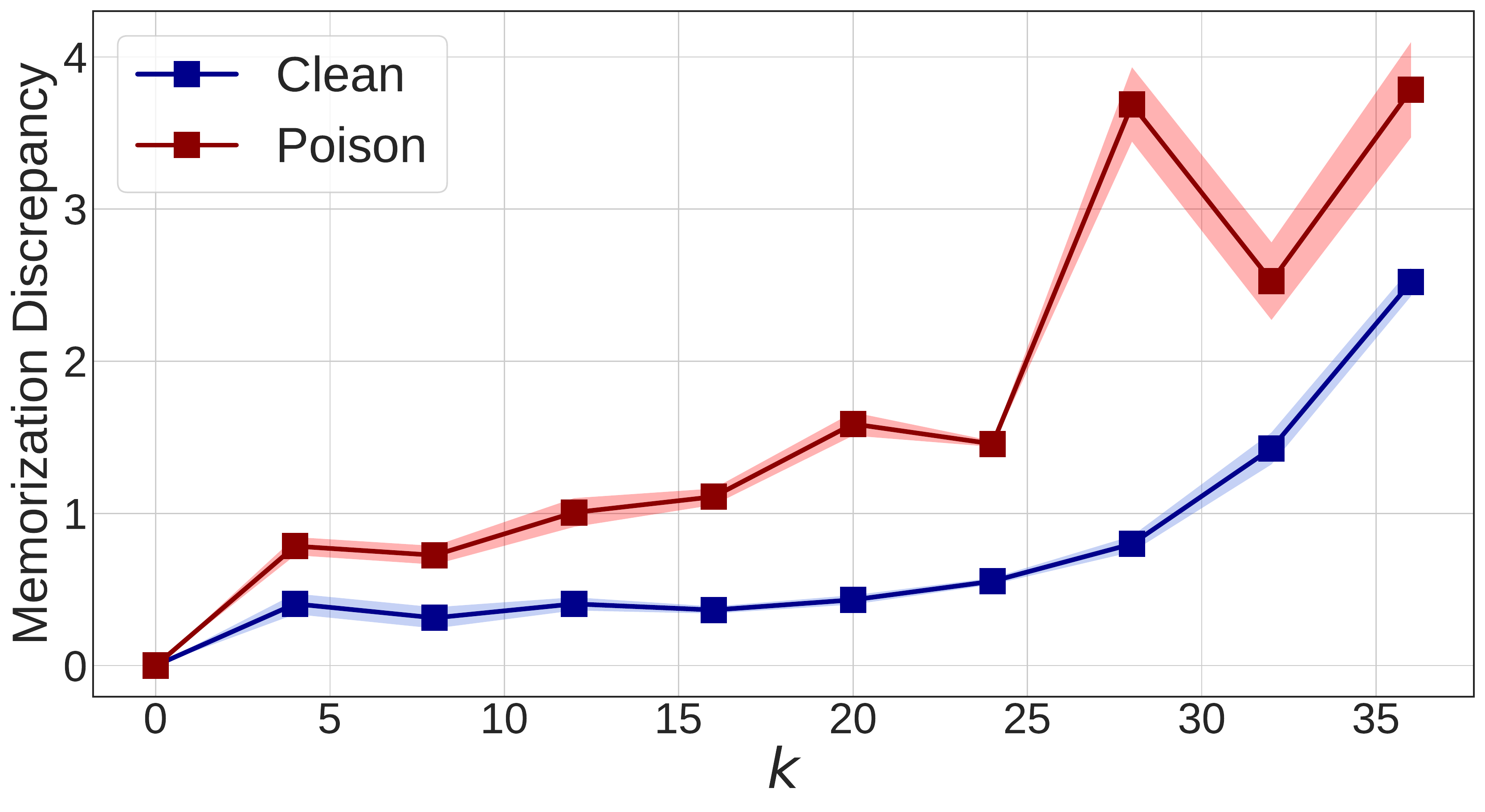}
    \vspace{-1mm}
    \caption{
    Top: illustration of the concrete operation to obtain the discrepancy information, i.e., Memorization Discrepancy. Bottom: the mean values of the Memorization Discrepancy on clean and poisoning batch data w.r.t. the backtracking interval $k$ (epochs). The $\theta^t$ denotes the current model which is used by the attacker to generate poison samples, and $\theta^{t-1}$ to $\theta^{t-k}$ are the historical model we backtracked. The discrepancy is measured by the output of the data using current and historical models. The difference between the Memorization Discrepancy on poison samples from that on clean samples is more distinguishable along with the enlargement of $k$. The underlying mechanism is further elaborated in Figure~\ref{fig:reason_2}.
    }
    \label{fig:reason}
    \vspace{-2mm}
\end{figure}

\section{Memorization Discrepancy}
\label{sec:memorization_discrepancy}
In this section, we present the new information measure \textit{Memorization Discrepancy} to explore the poison sample discovery through the lens of model dynamics during the training process. We first discuss our motivation, and then formally introduce the assumption and the definition of Memorization Discrepancy. Finally, we conduct experiments to empirically explore its corresponding properties.

\subsection{Motivation}
\label{sec:memo_motivation}

Different from the offline poisoning adversaries~\citep{li2016data,fowl2021adversarial}, the accumulative poisoning attack is allowed to interact with the model status to update its poison samples dynamically in the training process. Considering the practical situation, without sufficient knowledge of the original natural sample captured in the data streaming and the imperceptible characteristic of delusive attacks, the static information provided by the model from the single dimension seems to be hopeless to differentiate the poisoning and clean samples (e.g., the left panel of Figure~\ref{fig:motivation}). However, one critical component that is so far overlooked but easily backtracked~\citep{kumar2020adversarial} in training, is the historical model information. Since the accumulative poisoning attack utilize the sequential order property of real-time data streaming, we raise the following question,
\begin{quote}
\textit{Can we also exploit the information of model dynamics to gain some useful clues to identify the imperceptible accumulative poisoning attacks?}
\end{quote}
The answer is affirmative. As shown in the right panel of Figure~\ref{fig:motivation}, we can find the distributions of clean and poison samples are much different compared with the left panel in Figure~\ref{fig:motivation}. Such a significant difference is computed by taking the backtracked historical model into consideration (as illustrated at the top of Figure~\ref{fig:reason}). Intuitively, to achieve a better poisoning effect, the close interaction with the current model~\citep{pang2021accumulative} better optimizes the malicious objective (e.g., Eq.~\eqref{eq:mal_1}) for poisoning than other checkpoints, but it also ignores the changes in the historical model. This motivates us to further explore the poison discovery from the perspective of the dynamic changes in historical models. 

\begin{figure}[t!]
    \centering
    \includegraphics[scale=0.38]{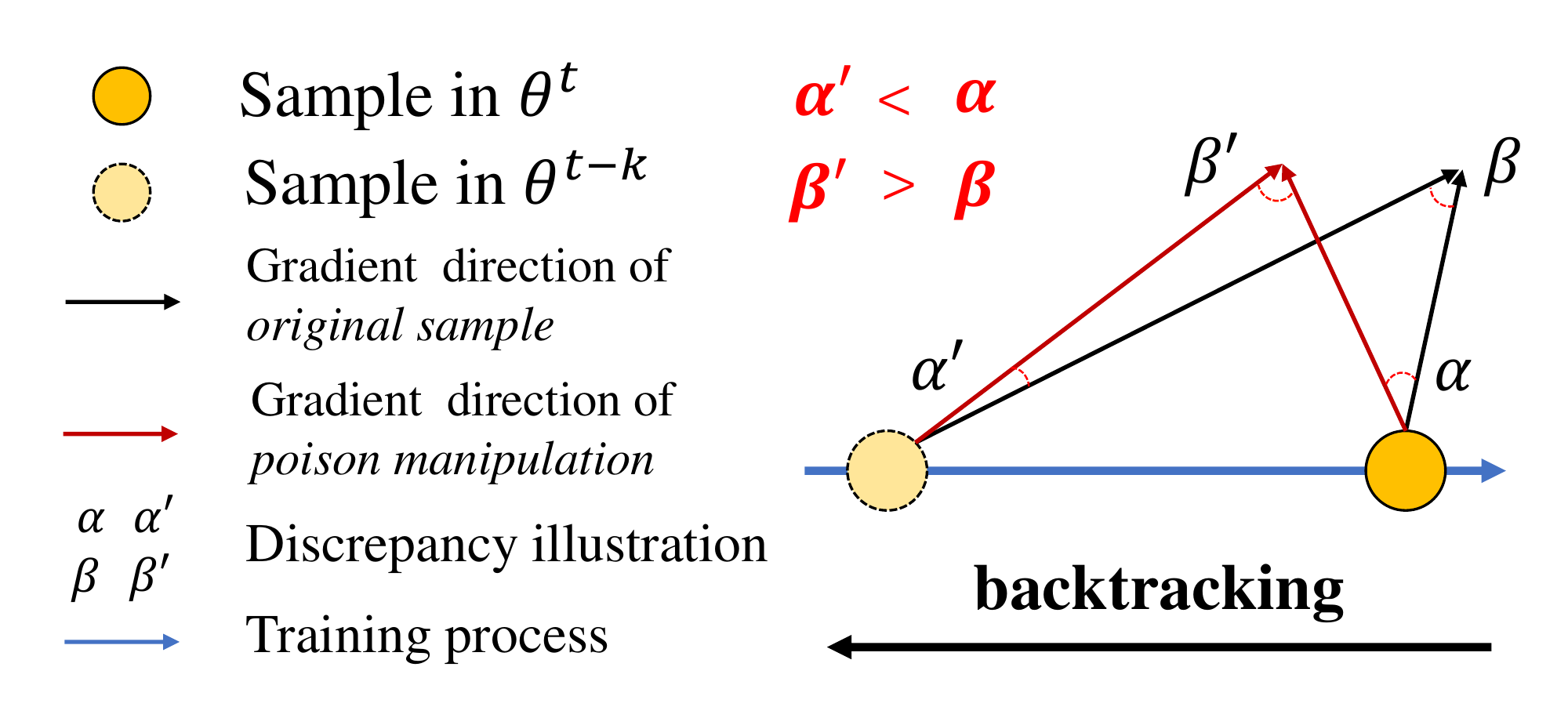}\vspace{2mm}\\
    \includegraphics[scale=0.20]{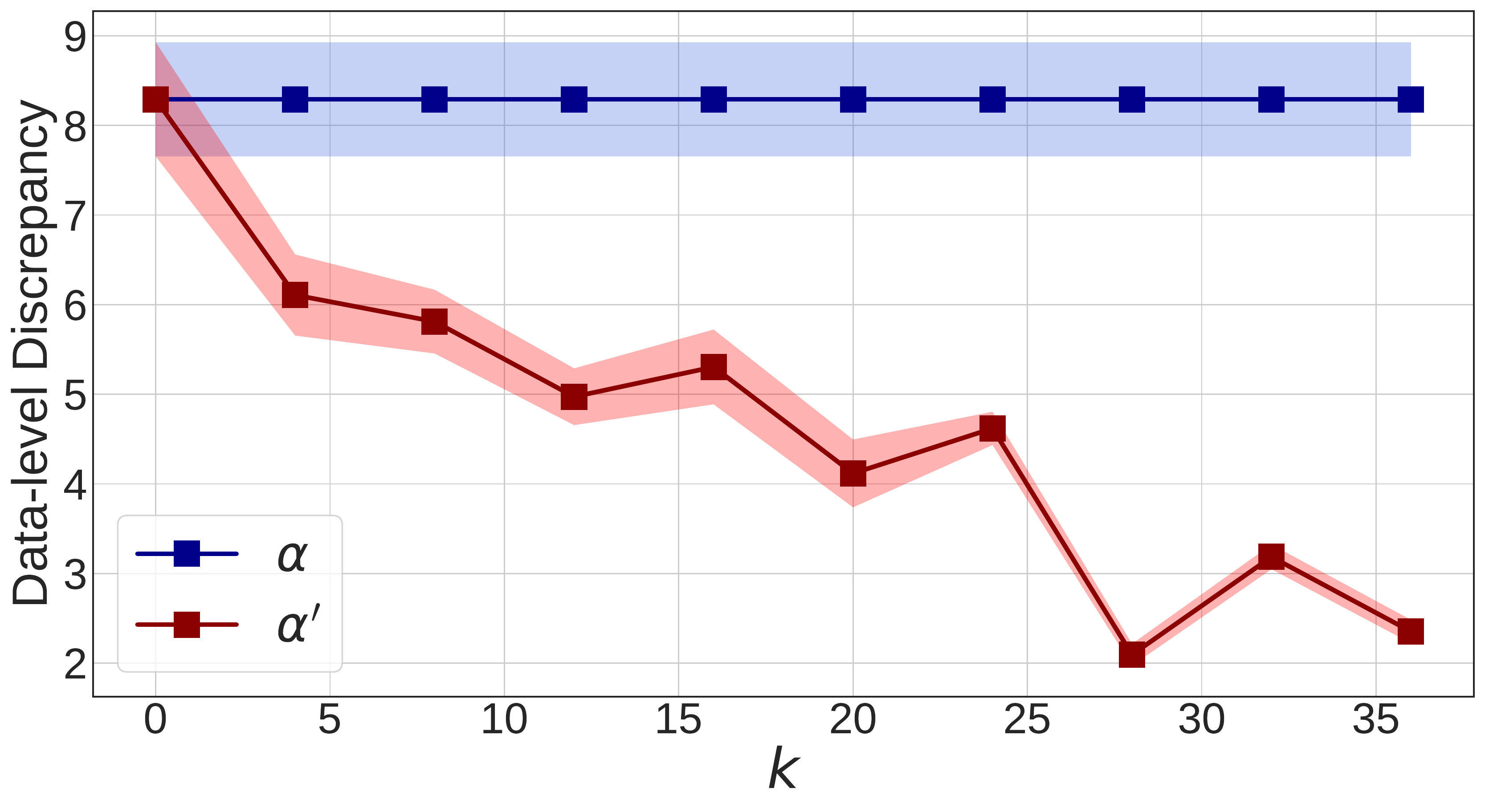}
    \vspace{-1mm}
    \caption{ 
    Top: illustration of model dynamics, which shows different effects (e.g., $\alpha$ and $\alpha'$) of the same poisoning manipulation on different model statuses (on the same original sample). Bottom: empirical verification about the above discrepancy by backtracking the model status. Here the $\alpha$ is the illustration of the discrepancy between two different optimization directions (or the gradient direction of the model $\theta^t$) approximated by using the outputs on clean and poison samples, respectively. And the $\alpha^{'}$ is the illustration of discrepancy on the historical model $\theta^{t-k}$. Models at different statuses will have different output changes for the same data manipulation, the discrepancy can be naturally captured using the historical model backtracked in the training process. The bottom figure empirically justifies that $\alpha'<\alpha$ and $\beta'>\beta$, which explains the underlying mechanism of previous trend in Figure~\ref{fig:reason}. 
    }
    \label{fig:reason_2}
    \vspace{-1mm}
\end{figure}

\subsection{Proposed Definition}
\label{sec:memo_empirical}

As the model is changed along with training on streaming data, it is natural to make the following assumption of model dynamics, which is about different model outputs with poison samples generated based on the victim model.

\begin{assumption}[Model Dynamics]
\label{ass:1}
Let $\theta^t$ and $\theta^{t-k}$ denote the current model at round $t$ and the historical model at round $t-k$, $\hat{x}(\theta^t)$ is the adversarial manipulation from $x$ by the model $\theta^t$, $\mathbb{D}$ indicate the general distribution discrepancy measurement\footnote{Note that, in the most experiments of this paper, we adopt Kullback–Leibler divergence~\citep{Joyce2011} in computation.}. Then, we have the following inequality, 
\begin{equation}\label{eq:assumption_discrepancy}
\begin{split}
    \mathbb{D}(f(\hat{x}(\theta^t); \theta^{t}), &f(x; \theta^{t})) \neq \\  &\mathbb{D}(f(\hat{x}(\theta^t); \theta^{t-k}), f(x; \theta^{t-k})).    
\end{split}
\end{equation}
\end{assumption}
The above inequality indicates that the poison sample generated on the victim model has a different effect on the output changes of a different model. Since the poison manipulation added to the clean samples targets the malicious learning objective that is different from the original one, the left side of Eq.~(\ref{eq:assumption_discrepancy}) actually reflects the difference between poison samples from clean samples. However, considering the practical situation of real-time data streaming, it is impractical to know whether the newly captured training samples are poisoned in advance. This motivates us to introduce another measure to leverage the information characteristic of historical models. Here we draw further theoretical analyses behind the Assumption~\ref{ass:1}, which construct the relationship on the difference between the poisoning objective (e.g. Eq.~\eqref{eq:mal_1}) and the original objective. We leave complete discussion and verification in Appendixes~\ref{app:proof} and~\ref{app:further_discuss}.

\begin{theorem}\label{theorem:correlation}
Let $f(x;\theta^t)$ denote the output about the sample $x$ at epoch $t$, k denotes the interval rounds, and $S$ denotes a clean dataset. Considering the opposite between objective $\min\mathcal{L}(S,\theta^*)$ and the poisoning objective $\max\mathcal{L}(S,\theta^*)$ where $\theta^*$ is the well-trained model respectively, there exists a learning period where we have,
\begin{equation}
\label{eq:theo}
\begin{split}
    \mathbb{D}(f(\hat{x}(\theta^{t}); \theta^{t}), f(\hat{x}(\theta^{t-k}); \theta^{t-k}))-&\\\mathbb{D}(f(x; \theta^{t}), f(x; \theta^{t-k}))  
    \propto \mathcal{L}(S; \theta^{t-k})&-\mathcal{L}(S; \theta^{t}).
\end{split}
\end{equation}
\end{theorem}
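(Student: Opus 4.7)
The plan is to attack both KL divergences on the left-hand side of Eq.~(\ref{eq:theo}) by the same first-order expansion in the parameter update $\Delta\theta = \theta^{t} - \theta^{t-k}$, then exploit the fact that $\hat{x}(\theta^{t})$ is generated to maximize $\mathcal{L}(\cdot;\theta^{t})$ to show that the ``poison'' expansion carries a strictly larger leading coefficient than the ``clean'' expansion. The stated proportionality should then follow by identifying both the discrepancy gap and the loss decrement $\mathcal{L}(S;\theta^{t-k}) - \mathcal{L}(S;\theta^{t})$ with quadratic forms in $\Delta\theta$ during a phase in which SGD is actually reducing the empirical loss. Because the conclusion is only a proportionality ``over a learning period,'' absolute constants need not be tracked, but signs and leading-order scaling have to be preserved carefully.

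First I would Taylor-expand each $\mathbb{D}$ in the parameter direction; because the KL divergence vanishes to second order at coincidence and its Hessian is the Fisher information metric $F(x;\theta)$, this yields
\begin{equation*}
    \mathbb{D}(f(x;\theta^{t}), f(x;\theta^{t-k})) \;\approx\; \tfrac{1}{2}\,\Delta\theta^{\top} F(x;\theta^{t-k})\,\Delta\theta,
\end{equation*}
with an analogous expression for the poison term but with $F(\hat{x}(\theta^{t});\theta^{t-k})$ in place of $F(x;\theta^{t-k})$, together with a first-order correction in $\hat{x}(\theta^{t}) - \hat{x}(\theta^{t-k})$. Next I would use the defining property of $\hat{x}(\theta^{t})$ --- a small $\ell_{\infty}$ step in the direction $\nabla_{x}\mathcal{L}(x;\theta^{t})$, i.e.\ along the \emph{opposite} of the training gradient --- to argue that the poison Fisher $F(\hat{x}(\theta^{t});\theta^{t-k})$ dominates $F(x;\theta^{t-k})$ in the direction of $\Delta\theta$, so that the discrepancy gap on the LHS is a strictly positive multiple of $\|\Delta\theta\|^{2}$ up to a data-dependent spectral constant. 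Matching this against the standard descent identity $\mathcal{L}(S;\theta^{t-k}) - \mathcal{L}(S;\theta^{t}) \approx \eta\sum_{s=t-k}^{t-1}\|\nabla_{\theta}\mathcal{L}(S;\theta^{s})\|^{2}$, which during a genuine learning period scales like $\|\Delta\theta\|^{2}/\eta$, identifies both sides of Eq.~(\ref{eq:theo}) with the same quadratic functional of $\Delta\theta$ and delivers the proportionality.

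The main obstacle will be justifying that the first-order correction coming from $\hat{x}(\theta^{t}) \neq \hat{x}(\theta^{t-k})$ has the correct sign and does not upset the quadratic identification, since this is exactly the place where the opposition between the poisoning objective $\max\mathcal{L}$ and the learning objective $\min\mathcal{L}$ has to do all the work. I would handle this by restricting attention precisely to the ``learning period'' hedge in the statement --- a stretch in which $\mathcal{L}(S;\cdot)$ is monotonically decreasing along $\{\theta^{s}\}$, the update $\Delta\theta$ is small, and the smoothness of $f$ in both $x$ and $\theta$ is uniformly bounded --- so that the correction aligns with the descent direction by the same gradient-ascent construction used to generate $\hat{x}$. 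In that regime the symbol $\propto$ should be interpreted as equality up to a strictly positive, data-dependent constant, with the careful bookkeeping of the higher-order remainders deferred to Appendix~\ref{app:proof}.
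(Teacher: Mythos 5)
Your route is genuinely different from the paper's, and it breaks down at exactly the step you yourself flag as ``the main obstacle.'' The paper's argument (Appendix~\ref{app:proof}) never expands the divergences locally: it posits (Assumption~\ref{ass:correlation}) that the \emph{cross-sample} discrepancy $\mathbb{D}(f(\hat{x}(\theta^{s});\theta^{s}), f(x;\theta^{s}))$ at a fixed checkpoint is proportional to the remaining gap $\max\mathcal{L}(S;\theta^{*})-\mathcal{L}(S;\theta^{s})$ to the poisoning optimum, instantiates this at $s=t$ and $s=t-k$, subtracts so that the unknown $\max\mathcal{L}(S;\theta^{*})$ cancels, and then re-brackets the difference of the two cross-sample discrepancies as the difference of the two cross-time discrepancies in Eq.~\eqref{eq:theo}. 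All of the work done by the opposition of the two objectives is absorbed into that assumption; nothing is Taylor-expanded, and no descent identity is invoked.

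Two concrete problems with your plan. First, the poison term in Eq.~\eqref{eq:theo} is $\mathbb{D}(f(\hat{x}(\theta^{t});\theta^{t}), f(\hat{x}(\theta^{t-k});\theta^{t-k}))$, where the input co-varies with the checkpoint; if you freeze the input, your expansion leaves only $\tfrac{1}{2}\Delta\theta^{\top}\bigl(F(\hat{x})-F(x)\bigr)\Delta\theta$ with $F(\hat{x})-F(x)=O(\epsilon)$, and the statement degenerates. The co-variation term is therefore where the theorem's content lives, yet you defer it to a sign argument ``by the same gradient-ascent construction'' --- but $\theta\mapsto\hat{x}(\theta)$ is a PGD map built from $\mathrm{sign}(\cdot)$ and a projection, so it admits no first-order correction in $\hat{x}(\theta^{t})-\hat{x}(\theta^{t-k})$; moreover the relevant $k$ is tens of epochs (Figure~\ref{fig:reason} only shows separation for large $k$), so $\Delta\theta$ is not small and the Fisher expansion is outside its regime. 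Second, your scale matching is internally inconsistent in the regime the theorem addresses: over a genuine descent stretch with roughly aligned gradients, $\mathcal{L}(S;\theta^{t-k})-\mathcal{L}(S;\theta^{t})$ is first order in $\|\Delta\theta\|$ (the identification with $\|\Delta\theta\|^{2}/\eta$ holds only when successive gradients are uncorrelated), whereas your left-hand side is second order in $\Delta\theta$ and first order in $\epsilon$, so the two sides cannot be ``the same quadratic functional of $\Delta\theta$.'' Given that the claim is only a proportionality over some learning period, a heuristic derivation is acceptable in principle, but yours needs either the paper's objective-level assumption or some other mechanism making the poison/clean gap first order in $\Delta\theta$ before the proportionality can be asserted.
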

The above positive relationship constructs the discrepancy relationship of different samples (e.g., natural sample $x$ and poisoned sample $\hat{x}$) with the model learning dynamics. Due to the different poisoning effect results of sample $\hat{x}$ on different model stages, the previous discrepancy in Assumption~\ref{ass:1} is constructed on the different sample ($x$ and $\hat{x}$) with the same model (either current model at round $t$ or the historical model at round $t-k$). The underlying intuition of Eq.~\eqref{eq:theo} reflects the differences between the natural learning objective and the poisoning objective, where we evaluate their differences in model outputs. Hence, the sample-wise discrepancy can be transferred to the differences of each sample on different models. Since we can not know whether the newly captured samples are poisoned or not in advance, the latter formulation for the information measure on the same sample is more practical for utilization and help us to provide the final definition of Memorization Discrepancy.



\begin{figure}[t!]
    \centering
    \subfigure[Under different poisoning capacities (k=1)]{
    \includegraphics[scale=0.13]{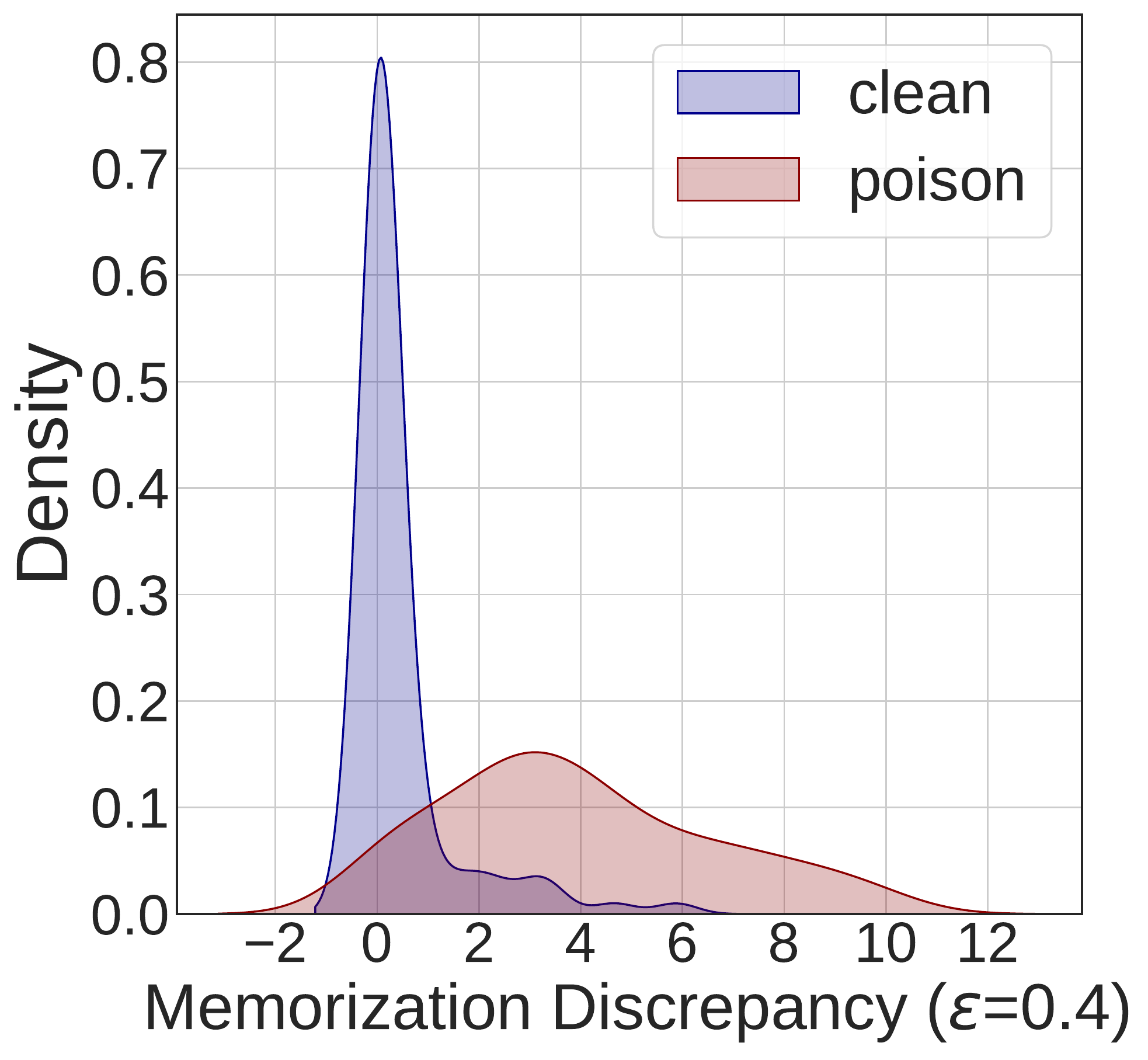}
    \includegraphics[scale=0.13]{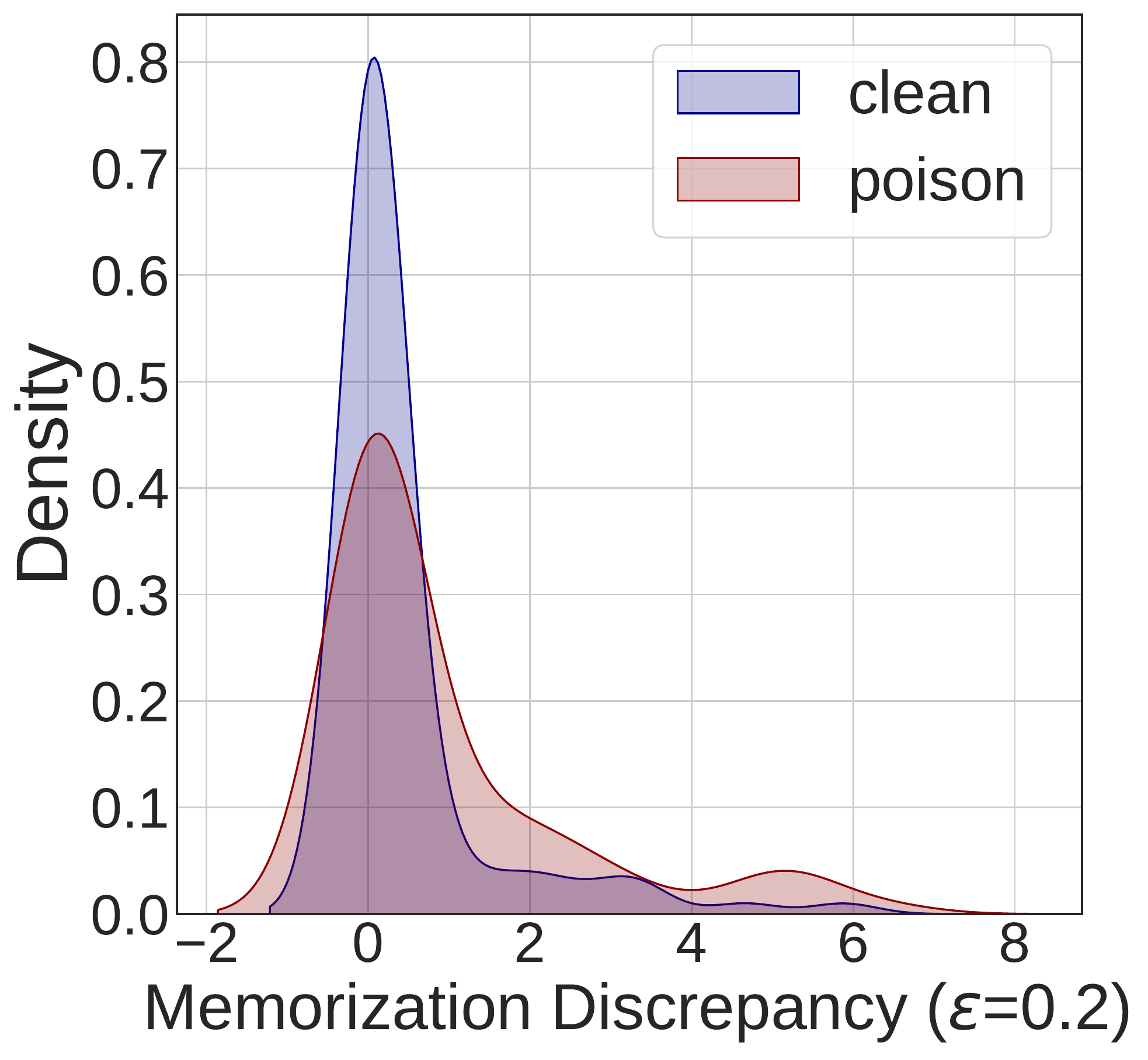}
    \includegraphics[scale=0.13]{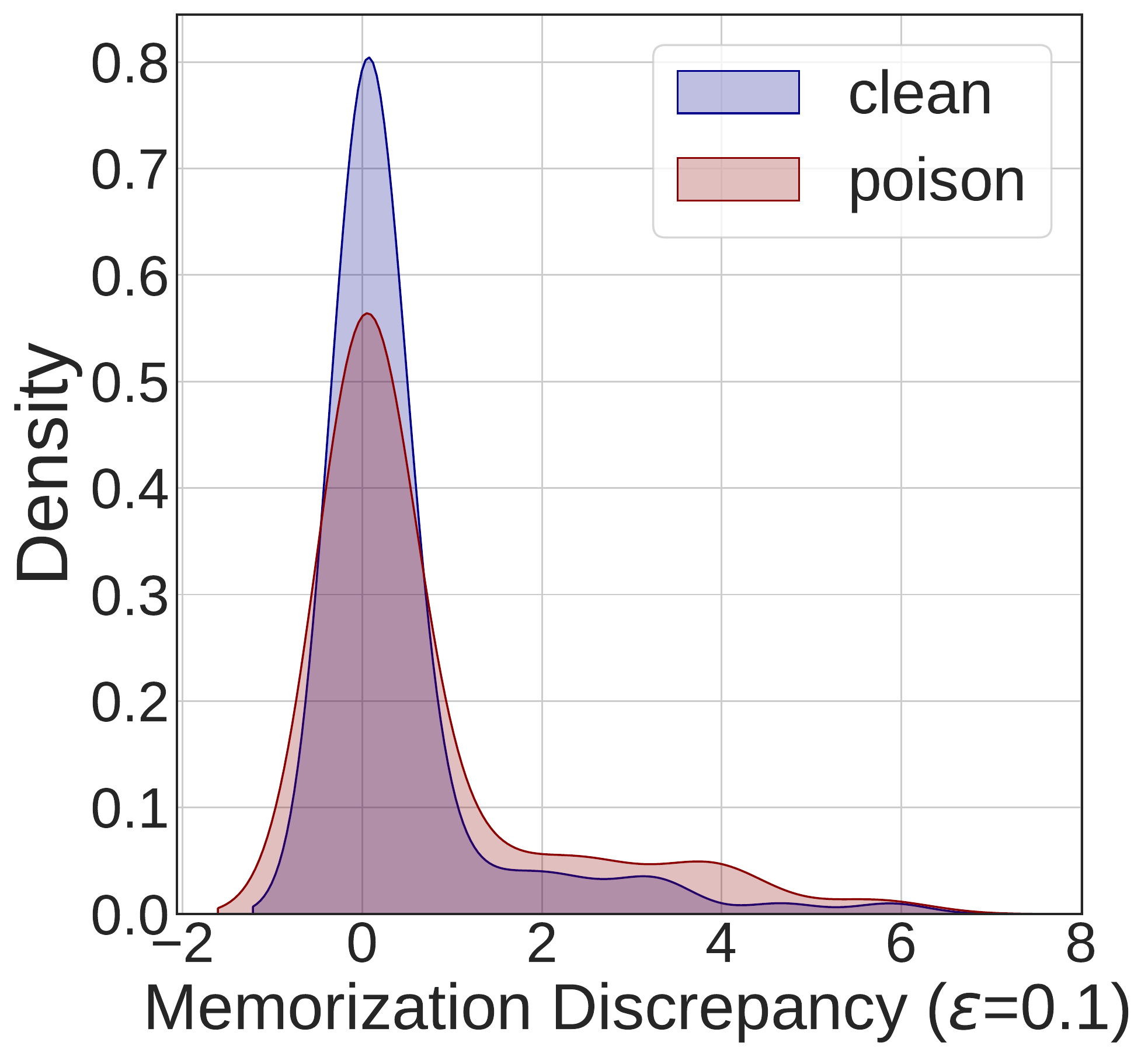}
    \label{fig4:a}
    }\\
    \subfigure[Under different backtracking intervals ($\epsilon$=0.1)]{
    \includegraphics[scale=0.13]{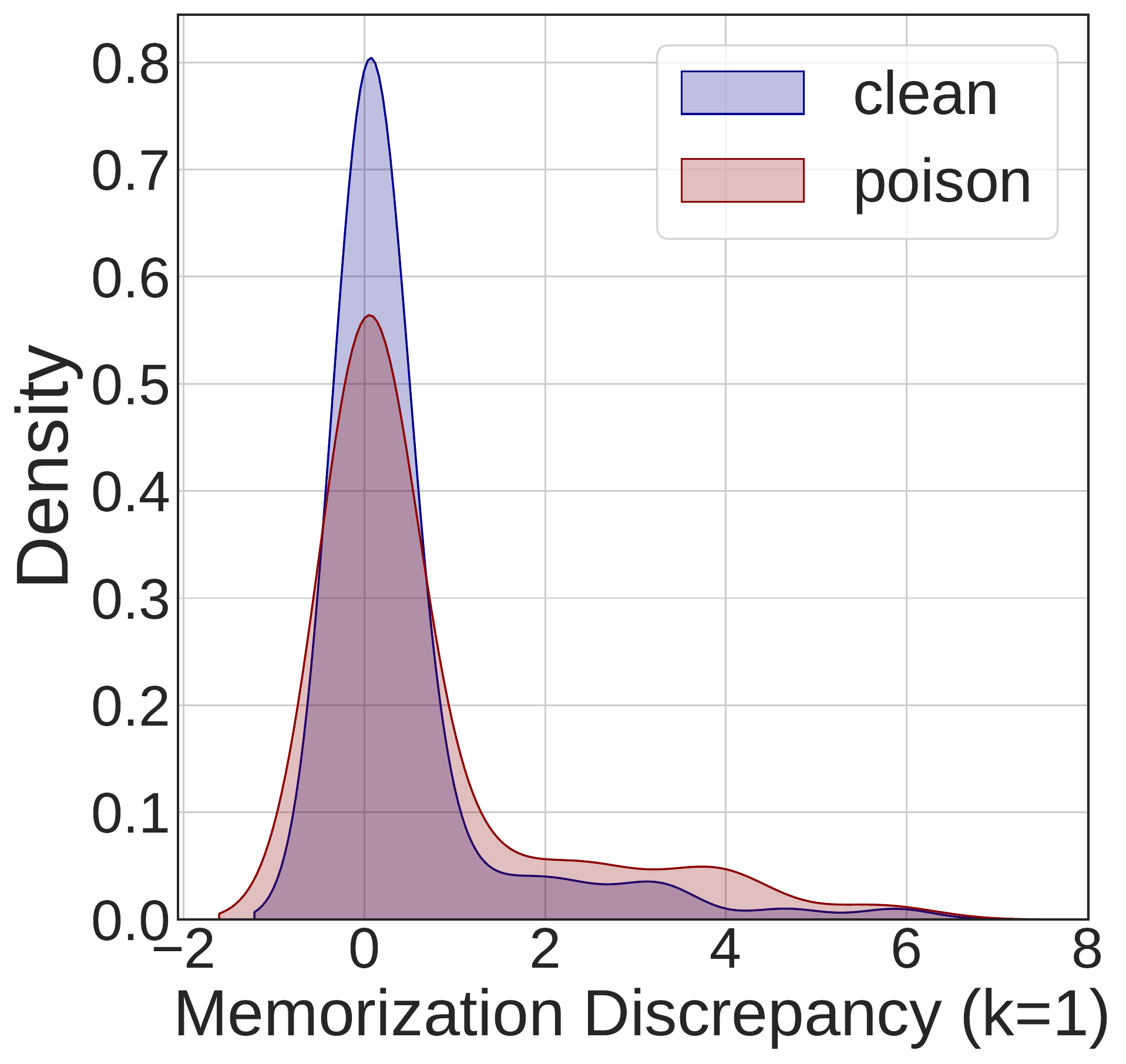}
    \includegraphics[scale=0.13]{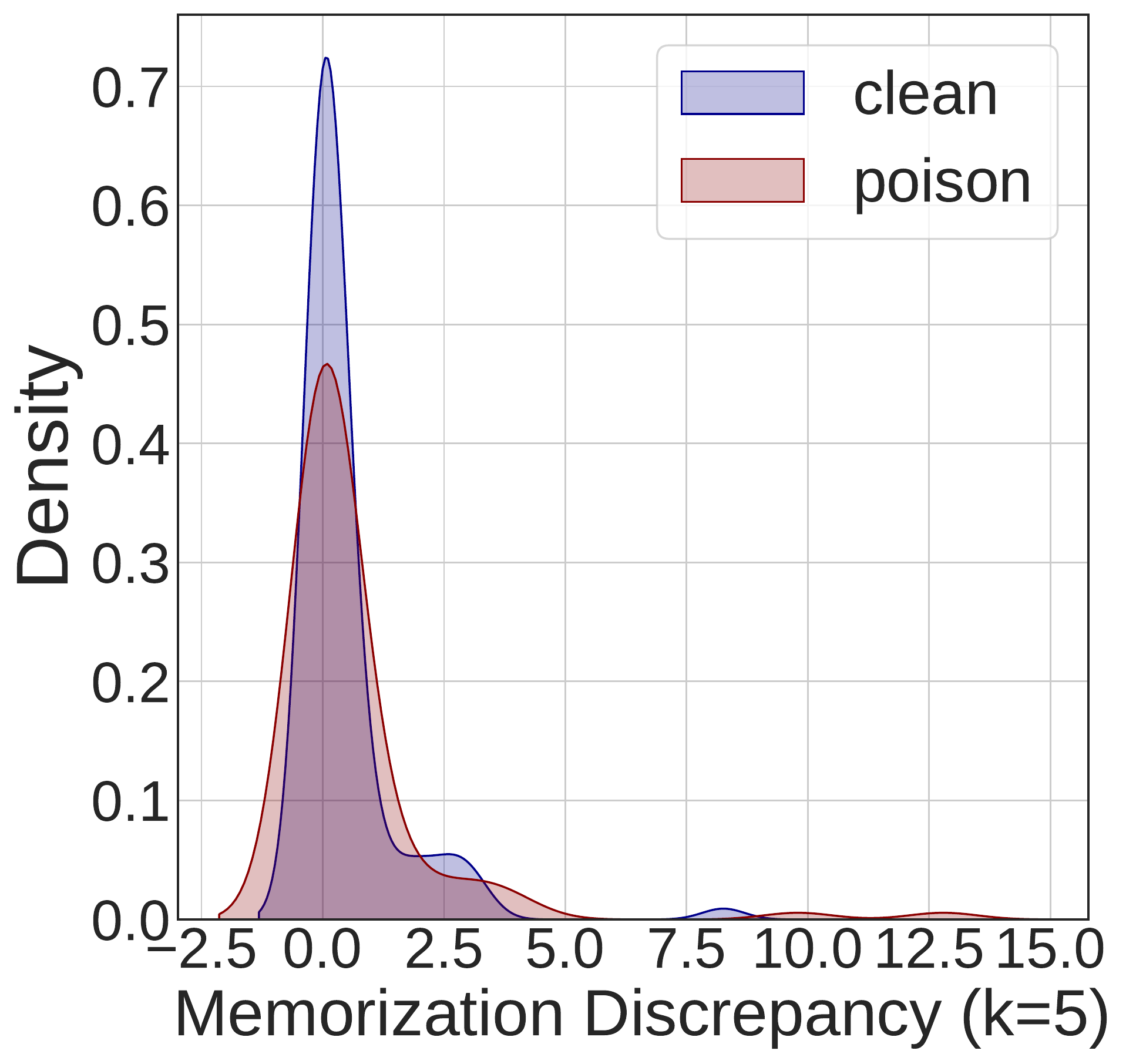}
    \includegraphics[scale=0.13]{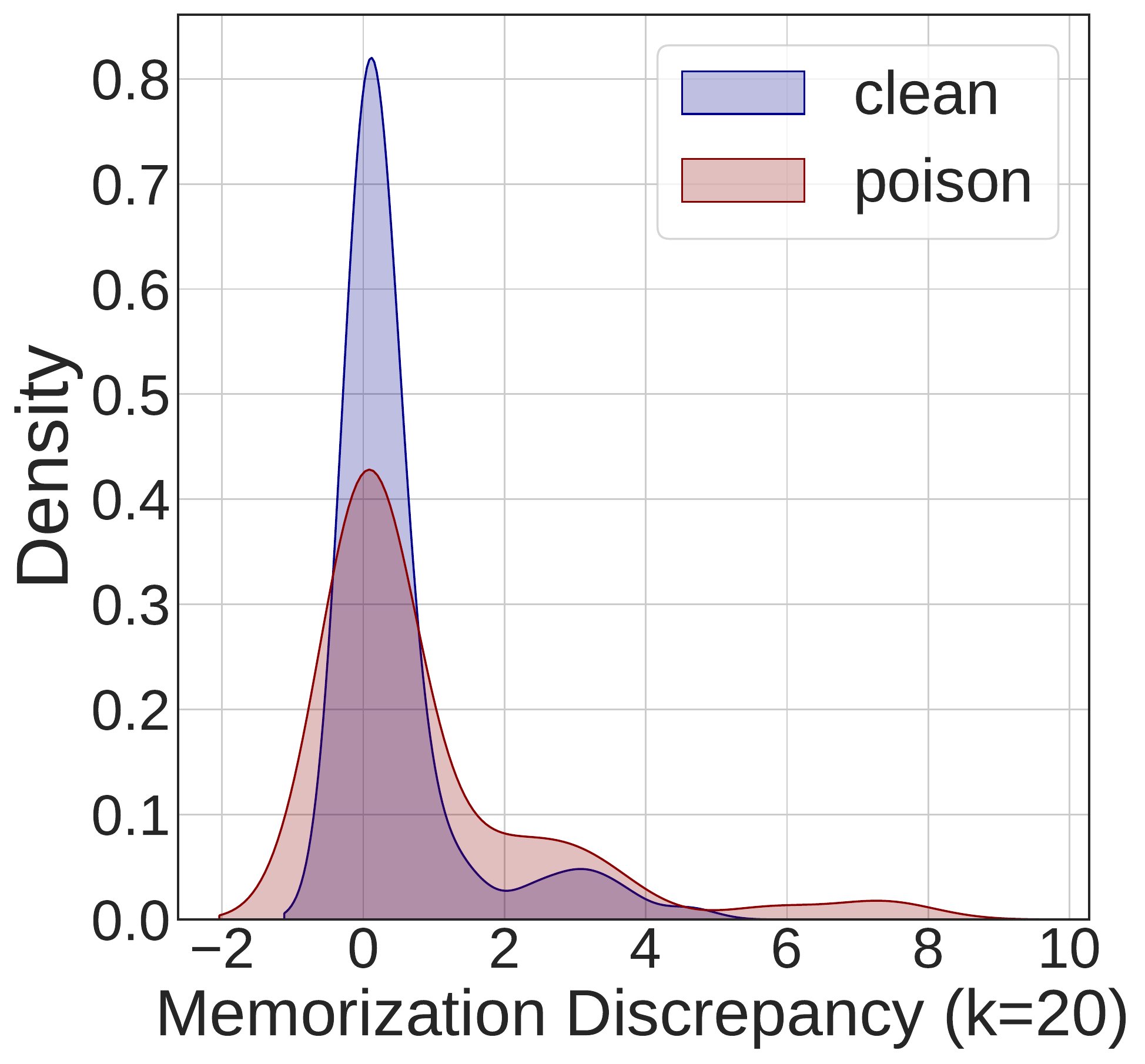}
    \label{fig4:b}
    }
    \vspace{-1mm}
    \caption{Empirical exploration about poisoning discovery using Memorization Discrepancy. (a) The distribution discrepancy of clean and poison data can be constrained by controlling the poisoning capacity, e.g., the perturbation radius $\epsilon$; (b) The poison samples can be more distinguishable from clean samples by enlarging the backtracking interval in Memorization Discrepancy to find a highly discriminative status, which involves model-level information.}
    \label{fig:investigate_1}
    \vspace{-2mm}
\end{figure}

Below we formally introduce the new information measure,
\begin{definition}[Memorization Discrepancy]
Consider $f:~\mathbb{R}^d\rightarrow \Delta^C$ that maps the input feature to the $C$-dimensional simplex, $\hat{x}(\theta^t)$ is disturbed from $x$ on the model $f(\cdot;\theta^t)$, and $\theta^{t-k}$ means the parameters of the $k$-interval historical model compared to the current $t$. Then, we define \textit{Memorization Discrepancy} on $\hat{x}(\theta^t)$ based on the current parameter $\theta^t$ and the historical parameter $\theta^{t-k}$ as,
\begin{equation}\label{eq:memorization_discrepancy}
    \mathbb{D}(f(\hat{x}(\theta^t); \theta^{t-k}), f(\hat{x}(\theta^t); \theta^{t})),
\end{equation}
which measures the discrepancy of the different model's outputs on the same $\hat{x}$ generated on $\theta^t$. 
\end{definition}

The underlying mechanism of Memorization Discrepancy is to capture the model dynamic on the same sample during the training process, which explicitly reflects the imperceptible poisoning manipulation in the training samples via the difference in model outputs. In Figure~\ref{fig:motivation}, we can find that the discrepancy value of both clean and poison samples is enlarged when we backtrack the historical models. Especially, the value of poison samples increases more than that of clean samples. According to this phenomenon, we have two following conjectures on Memorization Discrepancy.

\begin{property}[Monotonically Increasing Interval]
\label{pro:1}
There exists an interval $k$ from $t$ to $t-k$ where the value of $\mathbb{D}(f(x^*; \theta^{t-k}), f(x^*; \theta^{t}))$ is monotonically increasing from $0$ to $k$, where $x^*$ can indicate either the original clean sample $x$ or the poison sample $\hat{x}$.
\end{property}

\begin{property}[Highly Discriminative Status]
\label{pro:2}
There exists an model status $\theta^{t-k}$ where the mean value of poison samples $\mathbb{D}(f(\hat{x}(\theta^t); \theta^{t-k}), f(\hat{x}(\theta^t); \theta^{t}))$ is much larger than that of clean samples $\mathbb{D}(f(x; \theta^{t-k}), f(x; \theta^{t}))$.
\end{property}


\begin{figure}[t!]
    \centering
    \subfigure[Visualization of clean, poison, and out-of-distribution data]{
    \includegraphics[scale=0.105]{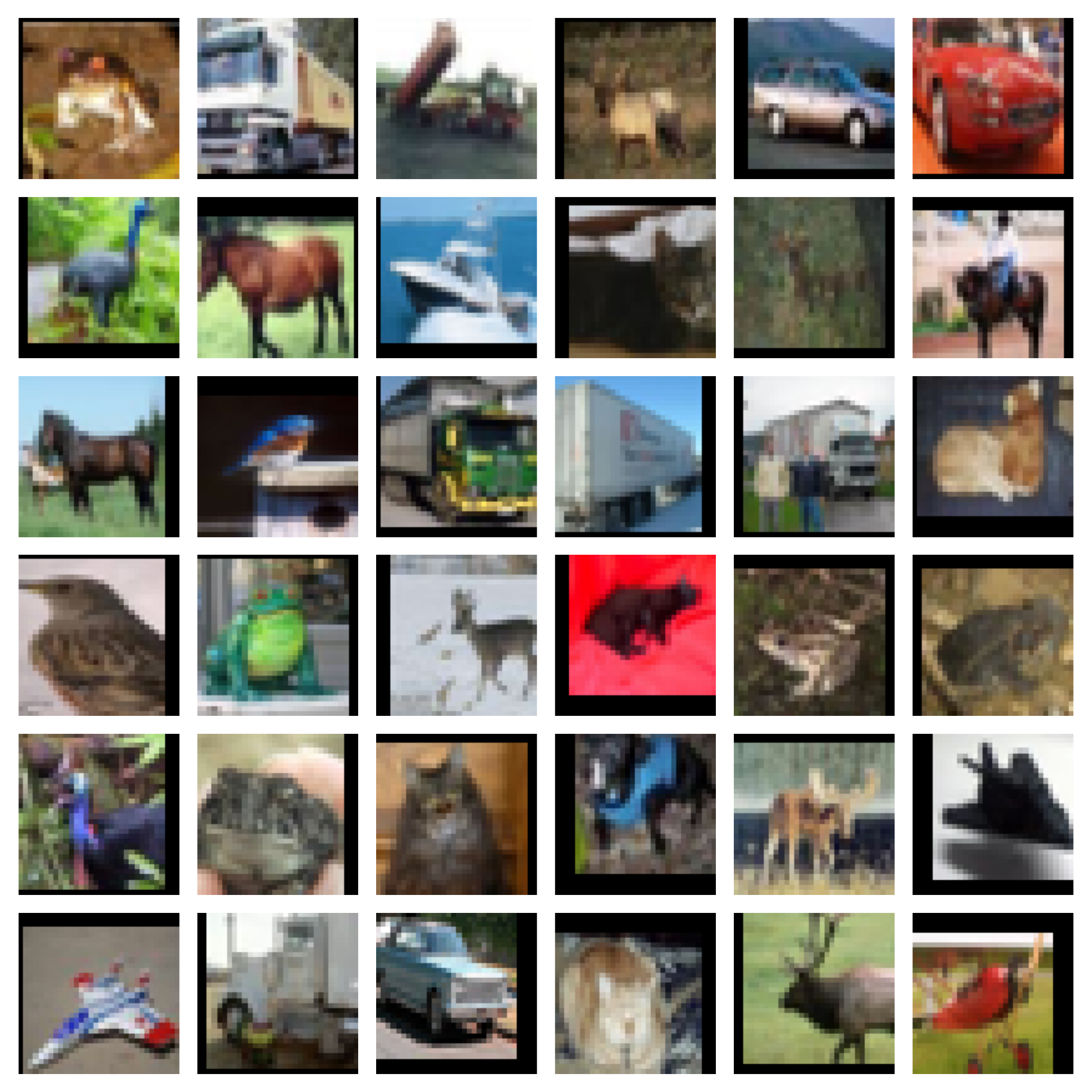}
    \hspace{0.02in}
    \includegraphics[scale=0.105]{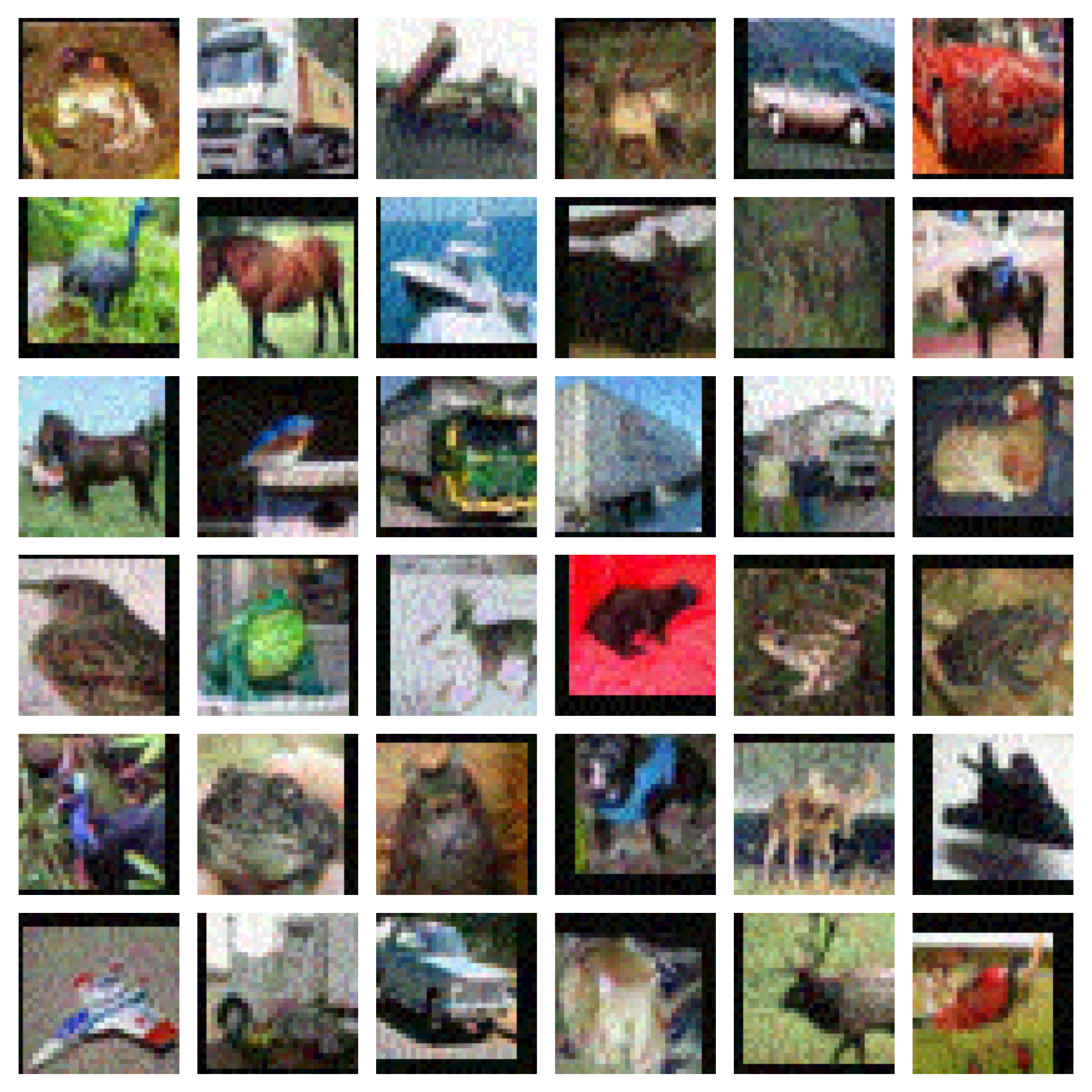}
    \hspace{0.02in}
    \includegraphics[scale=0.105]{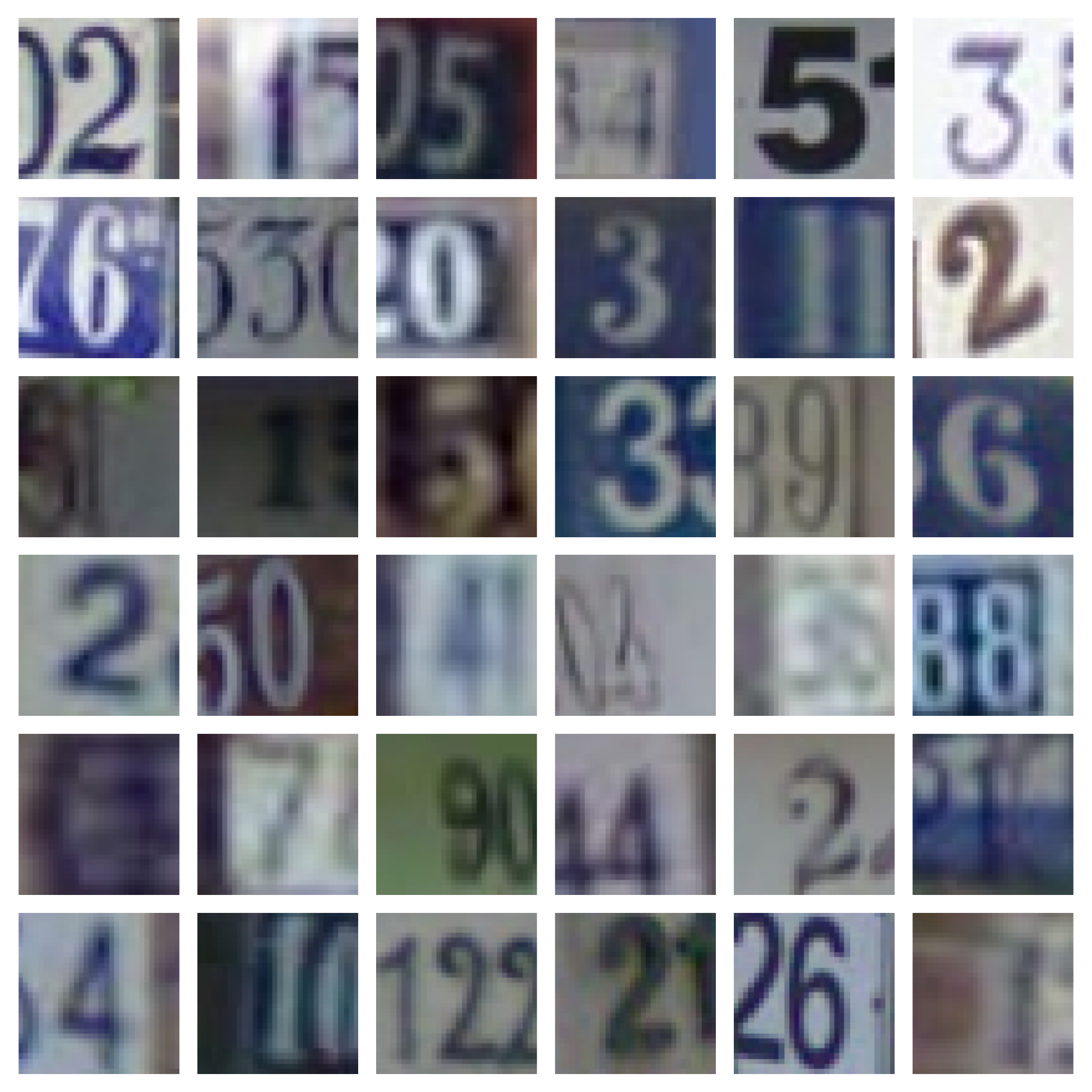}
    }\\
    \subfigure[Comparisons of the above three type of data]{
    \includegraphics[scale=0.13]{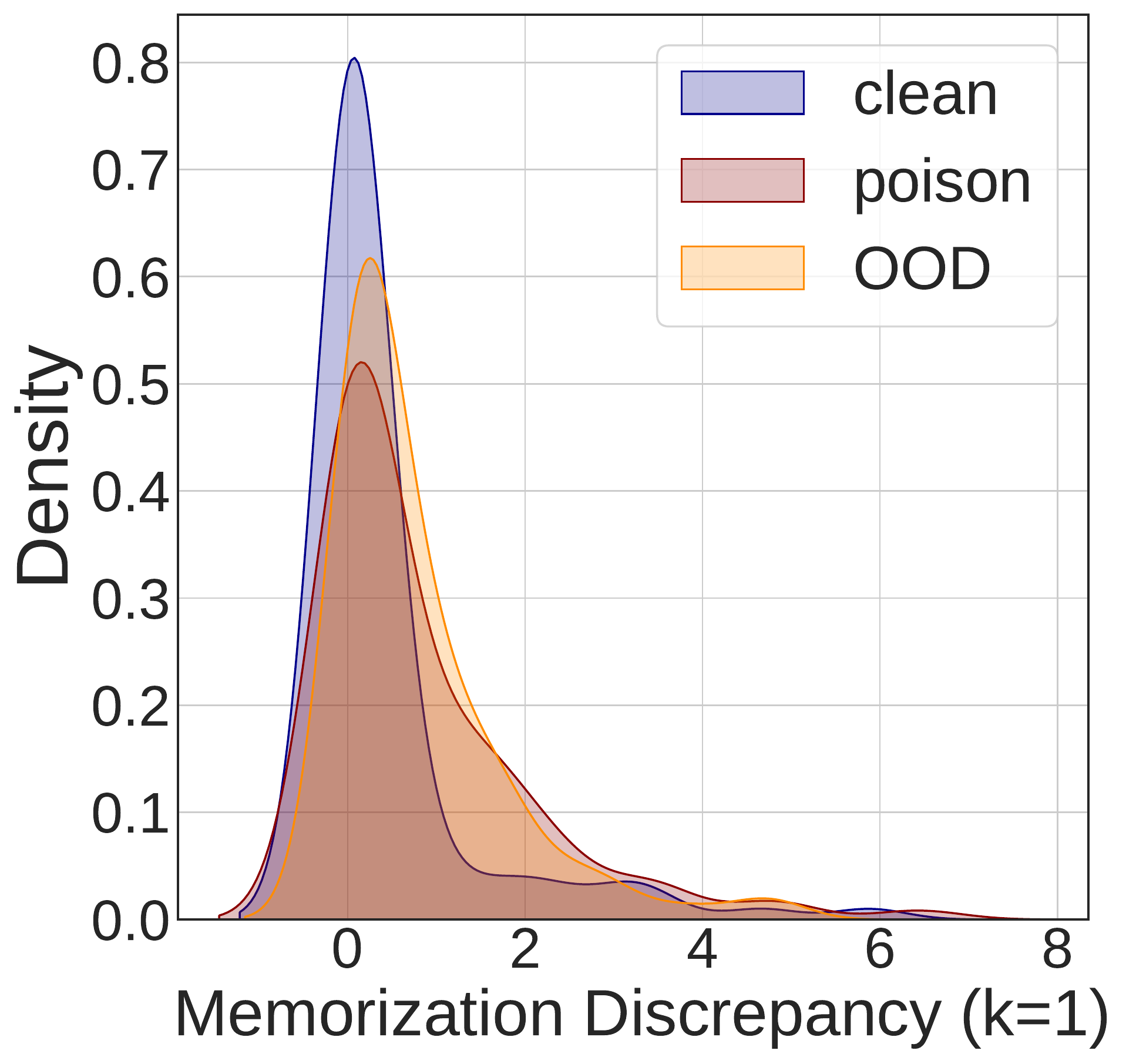}
    \includegraphics[scale=0.13]{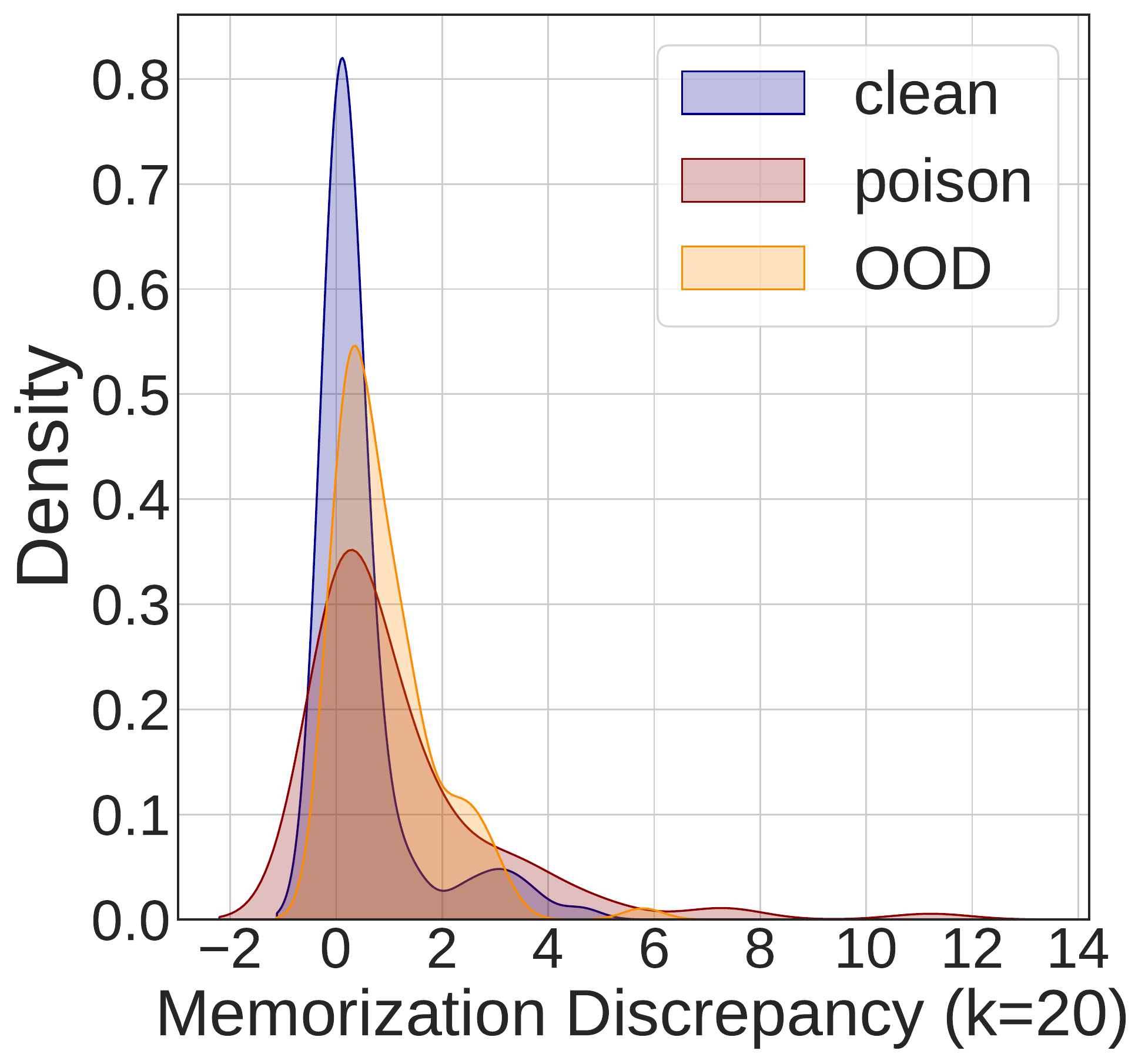}
    \includegraphics[scale=0.138]{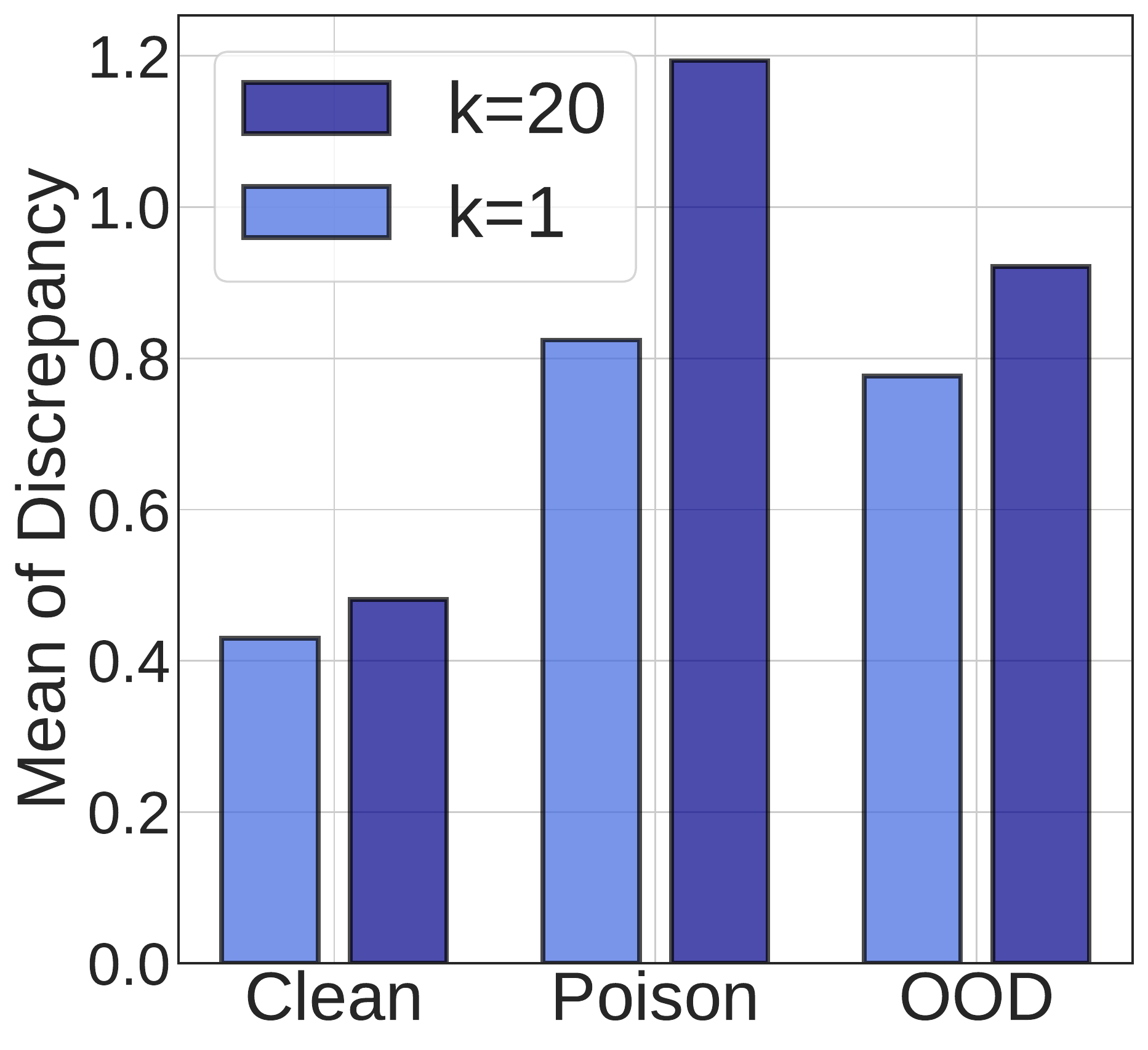}
    }
    \vspace{-1mm}
    \caption{Comparisons about optimized and statical distribution shift distinguished using Memorization Discrepancy. (a) Three types of data from left to right: clean data (CIFRA-10), poison data ($\epsilon$=0.064), and out-of-distribution (OOD) data (SVHN); (b) Compared with the statical distribution shift (OOD), the poison samples which are optimized for the targeted model are more sensitive to the backtracking interval in Memorization Discrepancy.}
    \label{fig:investigate_2}
    \vspace{-2mm}
\end{figure}

\subsection{Empirical Study on the Properties}

Here we study the Memorization Discrepancy through the simulated experiments on CIFAR-10 dataset following~\citet{pang2021accumulative}, and the detailed setups can be found in Section~\ref{sec:exp_setup}. The below empirical results 
respectively justify the previous Assumption~\ref{ass:1}, Properties~\ref{pro:1} and~\ref{pro:2}. More exploration of the discrepancy is provided in Appendix~\ref{app:more_dynamics}.

In Figure~\ref{fig:reason}, we illustrate the pipeline to obtain the Memorization Discrepancy. Specifically, by comparing the auxiliary historical model's output and the current model's output, the Memorization Discrepancy can be easily calculated. From the figure, we can find that the mean values are almost monotonically increasing from $0$ to $25$ epochs with the increasing $k$, which empirically verifies its general trend. 

In Figure~\ref{fig:reason_2}, we give the underlying explanation behind the dynamics of Memorization Discrepancy and empirically justify Assumption~\ref{ass:1} via the approximation results for $\alpha$ and $\alpha'$ in the top panel. According to the top of the figure, the Memorization Discrepancy of clean and poison samples can be denoted by $\beta$ and $\beta'$, and their relationship can be reflected by the change on models $\theta^t$ and $\theta^{t-k}$, i.e., $\alpha$ and $\alpha'$. 
In practice, as the defense party does not know whether the data is poisoning, Memorization Discrepancy is a good choice while Eq.~\eqref{eq:assumption_discrepancy} assumes the poisoning fact by default.

In Figure~\ref{fig:investigate_1}, we present that Memorization Discrepancy can distinguish the poison sample with limited poisoning capacities (e.g., the perturbation constraint $\epsilon$ is small to guarantee imperceptible data-level manipulation) by backtracking the historical models as an auxiliary inspector, which enlarges the distribution discrepancy as shown in Figure~\ref{fig4:b}. In Figure~\ref{fig:investigate_2}, we also consider another kind of natural data that is out-of-distribution (OOD) and may be confusedly reflected with higher Memorization Discrepancy. Fortunately, under the comparison, the poison sample optimized on the victim model shows being more sensitive to the dynamic changes in historical models than static OOD samples. On the other hand, those OOD samples with noticeable visual differences can be easier to clean up than the imperceptible poison ones.

\subsection{Discrepancy-aware Sample Correction}
\label{sec:proposed_method}

Inspired by the previous properties of Memorization Discrepancy, we propose the \textit{Discrepancy-aware Sample Correction} (DSC) to utilize the model dynamics which can capture the differences between the potential poison samples from the clean ones by an auxiliary historical model. 

The high-level intuition is to employ the Memorization Discrepancy to the previous principled reverse adversarial generation~\citep{tao2021better} as guidance for the sample correction. Concretely, we summarize the detailed procedure of DSC in Algorithm~\ref{alg:dsc}. In each mini-batch training, we will leverage the Memorization Discrepancy to validate whether the sample is a potential poison sample. The multi-step reverse adversarial generation will then be conducted through the following objective, 
\begin{align}\label{eq:objective}
\begin{split}
    \Tilde{{x}} = \arg\min_{\Tilde{x}\in\mathcal{B}[{x}, \epsilon]} &\ell(f(x), y), \\ 
    &\text{s.t.}, \mathbb{D}(f(\Tilde{{x}}; \theta),f(\Tilde{{x}}; \theta^*))>P,
\end{split}
\end{align}
where $\Tilde{x}$ is the calibrated sample, $\mathcal{B}[{x},\epsilon] = \{\Tilde{x} \mid d_{\infty}(\bx,\Tilde{x})\le\epsilon\}$
be the closed ball of radius $\epsilon>0$ centered at the training sample $\bx$, $P$ is the estimated discrepancy threshold, and $\theta^*$ is the historical auxiliary model.
In addition, we also record the Memorization Discrepancy using a certain measurement (e.g., KL divergence). According to Property~\ref{pro:2} and previous empirical results, the poison data has a larger discrepancy value than the clean data. Thus, we adopt an early-stopping here to relax the minimization objective for sample correction. The multi-step correction will stop if Memorization Discrepancy is smaller than an adjustable threshold during the pre-defined correction steps. This operation can avoid over-calibration for those clean samples, and we empirically justify its effectiveness in Figure~\ref{fig:method}.

\begin{figure}[t!]
    \centering
    \includegraphics[scale=0.17]{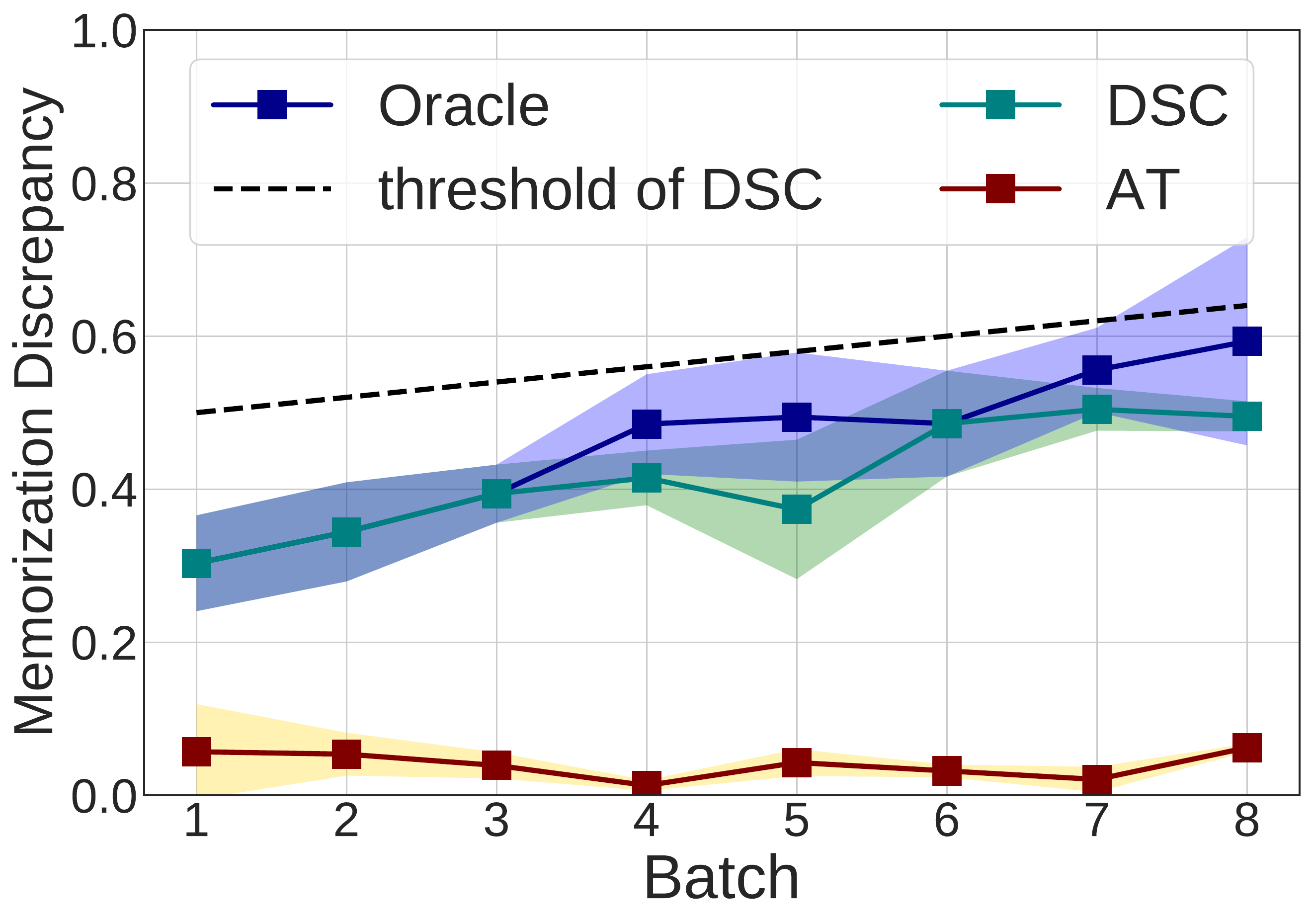}
    \hspace{0.02in}
    \includegraphics[scale=0.17]{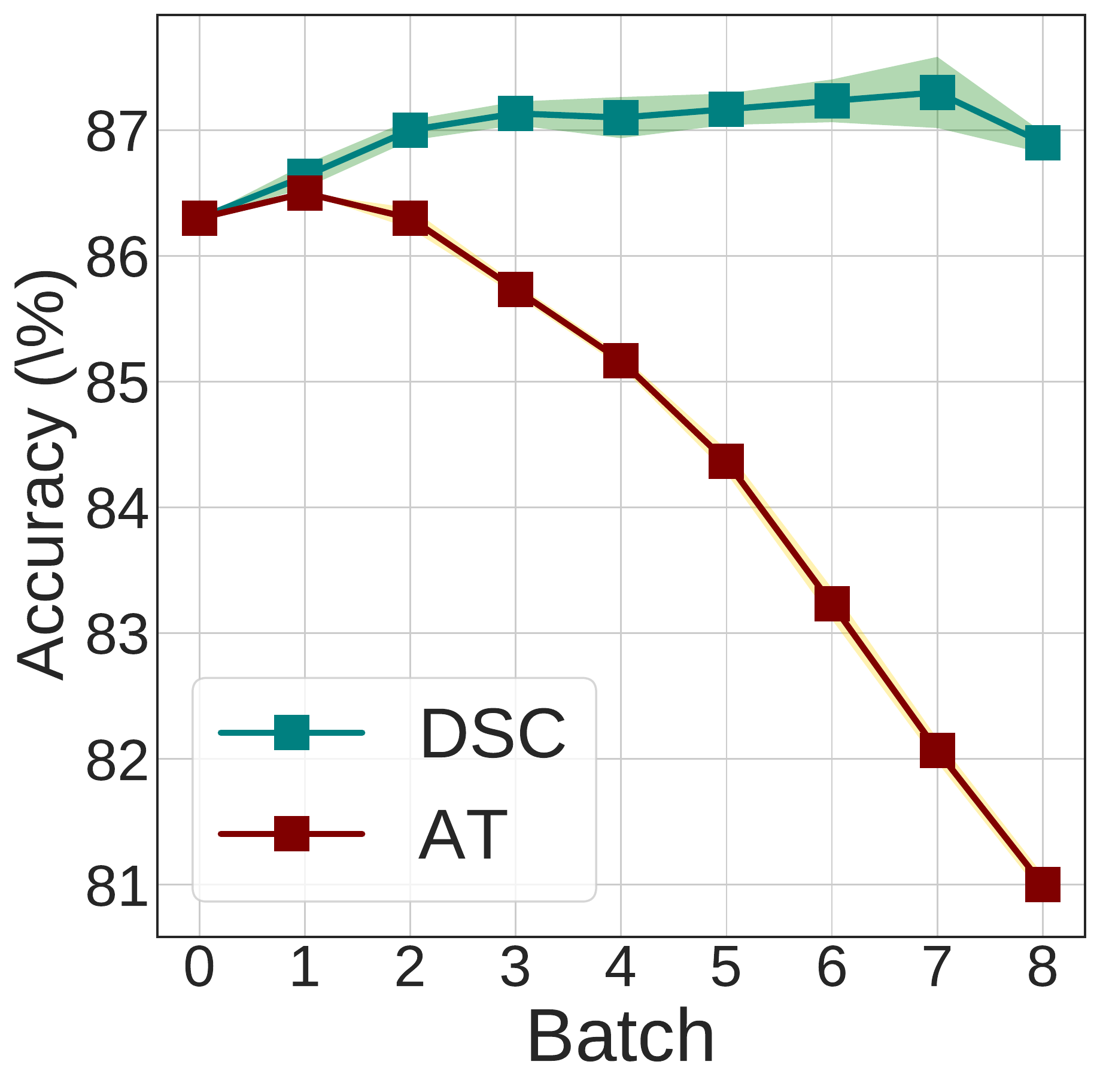}
    \vspace{-2mm}
    \caption{
    Left panel: the Memorization Discrepancy corresponding to the training samples in real-time data streaming. Right panel: the test accuracy of the AT-based method and our DSC. The reverse adversarial perturbations over-calibrate the clean samples and result in lower accuracy while our DSC can filter the clean sample with an estimated threshold by Memorization Discrepancy. 
    }
    \label{fig:method}
    \vspace{-4mm}
\end{figure}

\section{Experiments}
\label{sec:exp}

In this section, we present a comprehensive analysis of the Memorization Discrepancy and verify the effectiveness of our proposed DSC with the previous baseline methods for defending against accumulative poisoning attacks. More details and supplementary can be referred to in Appendix~\ref{sec:app_exp}.

\subsection{Experiment Setups}
\label{sec:exp_setup}


\paragraph{Training simulation.}
Following~\citet{pang2021accumulative}, we simulate the real-time data streaming using the SVHN~\citep{netzer2011reading_SVHN}, CIFAR-10 and CIFAR-100~\citep{krizhevsky2009learning_cifar10} datasets. The overall learning process consists of two specific phases with different training data (i.e., clean samples and poison samples). The first phase is named \emph{burn-in phase}, like model pre-training, the model will be trained on natural data before 
taking the training examples from other untrusted sources~\citep{biggio2018wild}. The second phase is termed as \emph{victim phase}, in which the adversaries begin to inject the poison samples to attack the current model. Same as~\citep{pang2021accumulative}, we train ResNet-18~\citep{he2016deep} using the SGD optimizer with the learning rate $0.1$, momentum $0.9$, and weight decay $0.0001$. During the whole process, we keep the batchsize of data streaming at 100.


\begin{table*}[t!]
\renewcommand\arraystretch{0.95}
\centering
\caption{Test accuracy (\%) of the simulated experiments on real-time data streaming (Mean$\pm$Std).}
\footnotesize
\label{table:exp_performance_cifar10}
\begin{tabular}{c|c|c|c|c|c|c}
\toprule[1.7pt]
\midrule[0.6pt]
CIFAR-10 &  Defense & Accuracy: Start & Batch & Accuracy: +Poison & Accuracy: + Trigger & $\Delta$ \\
\midrule[0.6pt]
\multirow{3}{*}{Clean Oracle} & ST & \multirow{9}{*}{86.3} & - & 84.4 & 84.4 & - \\
~ & GC & ~ &- & 86.2 & 86.2 & - \\
~ & AT & ~ &-& 77.2 & 77.2 & - \\
~ & \cellcolor{greyL}\textbf{DSC} & ~ &\cellcolor{greyL}-& \cellcolor{greyL}84.7 & \cellcolor{greyL}84.7 & \cellcolor{greyL}-  \\

\cmidrule(lr){0-1} \cmidrule(lr){4-7}
\multirow{3}{*}{Accu. Poison} & ST & ~ &1& 75.7$\pm$3.33 & 50.4$\pm$5.03 & -25.3$\pm$4.13 \\
~ & GC & ~& 3 & 79.7$\pm$0.25 & 75.1$\pm$0.05 & -4.6$\pm$0.26 \\
~ & AT & ~& 3 & 80.1$\pm$0.10 & 75.3$\pm$0.26 & -4.7$\pm$0.20  \\
~ & \cellcolor{greyL}\textbf{DSC} & ~& \cellcolor{greyL}3 & \cellcolor{greyL}\textbf{81.2$\pm$0.35 } & \cellcolor{greyL}\textbf{77.3$\pm$0.58} & \cellcolor{greyL}\textbf{-3.8$\pm$0.31}  \\

\midrule[0.6pt]


\midrule[0.6pt]
SVHN &  Defense & Accuracy: Start & Batch & Accuracy: +Poison & Accuracy: + Trigger & $\Delta$ \\
\midrule[0.6pt]
\multirow{3}{*}{Clean Oracle} & ST & \multirow{9}{*}{94.6} &-& 93.4 & 93.4 & - \\
~ & GC & ~ &-& 94.5 & 94.5 & - \\
~ & AT & ~ &-& 89.9 & 89.9 & - \\
~ & \cellcolor{greyL}\textbf{DSC} & ~ &\cellcolor{greyL}-& \cellcolor{greyL}94.7 & \cellcolor{greyL}94.7 & \cellcolor{greyL}-  \\

\cmidrule(lr){0-1} \cmidrule(lr){4-7}
\multirow{3}{*}{Accu. Poison} & ST & ~ &3& 85.4$\pm$3.54 & 70.4$\pm$9.16 & -15.4$\pm$6.2 \\
~ & GC & ~&7& 89.7$\pm$0.06 & 88.3$\pm$0.26 & -1.4$\pm$0.30 \\
~ & AT & ~&7& 89.6$\pm$0.21 & 88.7$\pm$0.20 & \textbf{-0.9$\pm$0.06}  \\
~ & \cellcolor{greyL}\textbf{DSC} & ~&\cellcolor{greyL}9& \cellcolor{greyL}\textbf{89.9$\pm$0.01} & \cellcolor{greyL}\textbf{88.8$\pm$0.26} & \cellcolor{greyL}-1.1$\pm$0.26  \\

\midrule[0.6pt]


\midrule[0.6pt]
CIFAR-100 & Defense & Accuracy: Start & Batch & Accuracy: +Poison & Accuracy: + Trigger & $\Delta$ \\
\midrule[0.6pt]
\multirow{3}{*}{Clean Oracle} & ST & \multirow{9}{*}{59.0} &-& 55.8 & 55.8 & - \\
~ & GC & ~ &-& 60.2 & 60.2 & - \\
~ & AT & ~ &-& 49.5 & 49.5 & - \\
~ & \cellcolor{greyL}\textbf{DSC} & ~ &\cellcolor{greyL}-& \cellcolor{greyL}55.0 & \cellcolor{greyL}55.0 & \cellcolor{greyL}-  \\

\cmidrule(lr){0-1} \cmidrule(lr){4-7}
\multirow{3}{*}{Accu. Poison} & ST & ~ & 3 & 42.9$\pm$2.74 & 32.6$\pm$2.84 & -10.3$\pm$0.29 \\
~ & GC & ~&4& 49.8$\pm$0.12 & 43.8$\pm$0.29 & -6.1$\pm$0.25 \\
~ & AT & ~&5& 47.7$\pm$0.25 & 44.4$\pm$0.21 & -3.2$\pm$0.42  \\
~ & \cellcolor{greyL}\textbf{DSC} & ~&\cellcolor{greyL}5& \cellcolor{greyL}\textbf{48.6$\pm$0.91} & \cellcolor{greyL}\textbf{45.4$\pm$1.39} & \cellcolor{greyL}\textbf{-3.2$\pm$0.65}  \\

\midrule[0.6pt]
\bottomrule[1.7pt]
\end{tabular}
\end{table*}

\paragraph{Poisoning attack.} After the burn-in phase in which the model is pre-trained for 40 epochs, we begin to inject the accumulative poison samples~\citep{pang2021accumulative}. Specifically, the crafted sample is generated by PGD under the $\ell_\infty$-norm constraint. Different from those regular poisoning generations, this poisoning attacker is allowed to intervene during training and tune the poisoning strategies dynamically with the model states. Since its poisoning target is the single-step drop of model accuracy, the poisoning effects of the secretly injected data will be accumulated and triggered in the final batch (termed as trigger batch). To simulate the monitor process in real-time data streaming, this final batch will be triggered when the model training loss is amplified by a monitored threshold of previous poison samples, and we adopted the same threshold values in~\citet{pang2021accumulative}.


\paragraph{Defense target.} To defend the attacks in real-time data streaming, there are three aspects that need to be considered. The first is the single-step drop in model accuracy. The second is the final accuracy, which reflects the overall defense effectiveness for the accumulative poisoning attacks. The third is the test accuracy of learning with clean samples, since we assume that the defender does not know when the poison sample is injected.  For the threshold schedule, we set $\mu=0.5, \tau=0.02$ for both CIFAR-10 and SVHN datasets, and $\mu=1.7, \tau=0.1$ for CIFAR-100 dataset.


\paragraph{Threshold adjustment.} The certain threshold for Memorization Discrepancy can be estimated based on the value of the controllable clean samples used in the burn-in phase. Similar to the tuning strategies in gradient clipping~\citep{pascanu2013difficulty,goodfellow2016deep}, we can set a lower threshold to conduct more correction steps for a conservative optimization for the online model. Based on the illustration in Figure~\ref{fig:reason_2} and the Property~\ref{pro:1}, the value of Memorization Discrepancy will increase as the model training. Thus, we adopt a fixed auxiliary model $\theta^*$ as the $\theta^{t-k}$ in discrepancy calculation. The threshold value will increase when training with the real-time data streaming from the untrusted sources~\citep{biggio2018wild} and it requires a dynamical threshold for filtering the clean samples with poison samples. To this end, we introduce the schedule as  $P = \mu + \tau * m$, where $P$ is our threshold, the initial value $\mu$, the dynamical growing interval $\tau$ is estimated by our controllable clean examples, and $m$ is the batch number. We provide further discussion on it in Appendix~\ref{app:condition}.

\begin{figure*}[t!]
    \centering
    \includegraphics[scale=0.175]{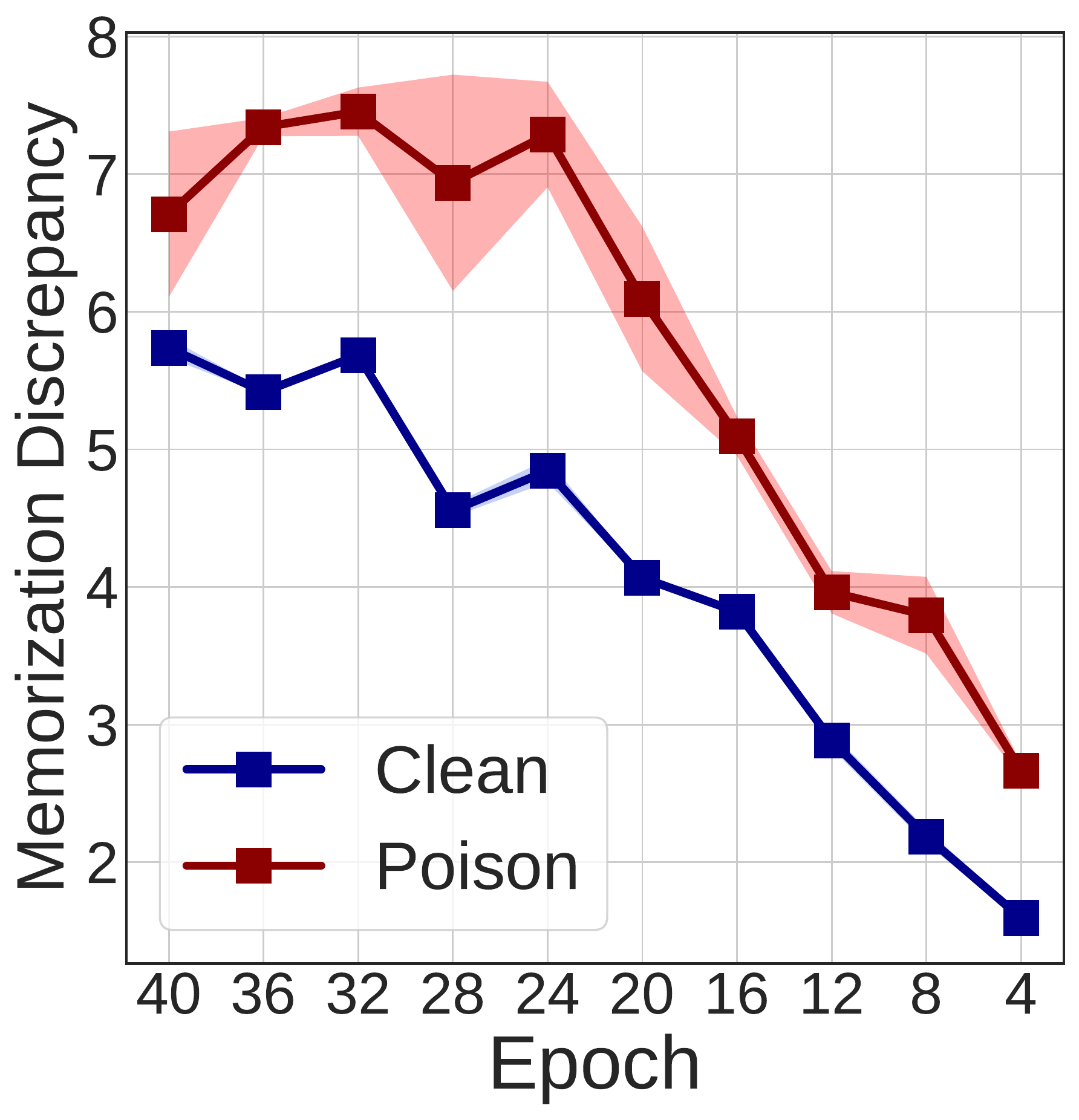}
    \includegraphics[scale=0.175]{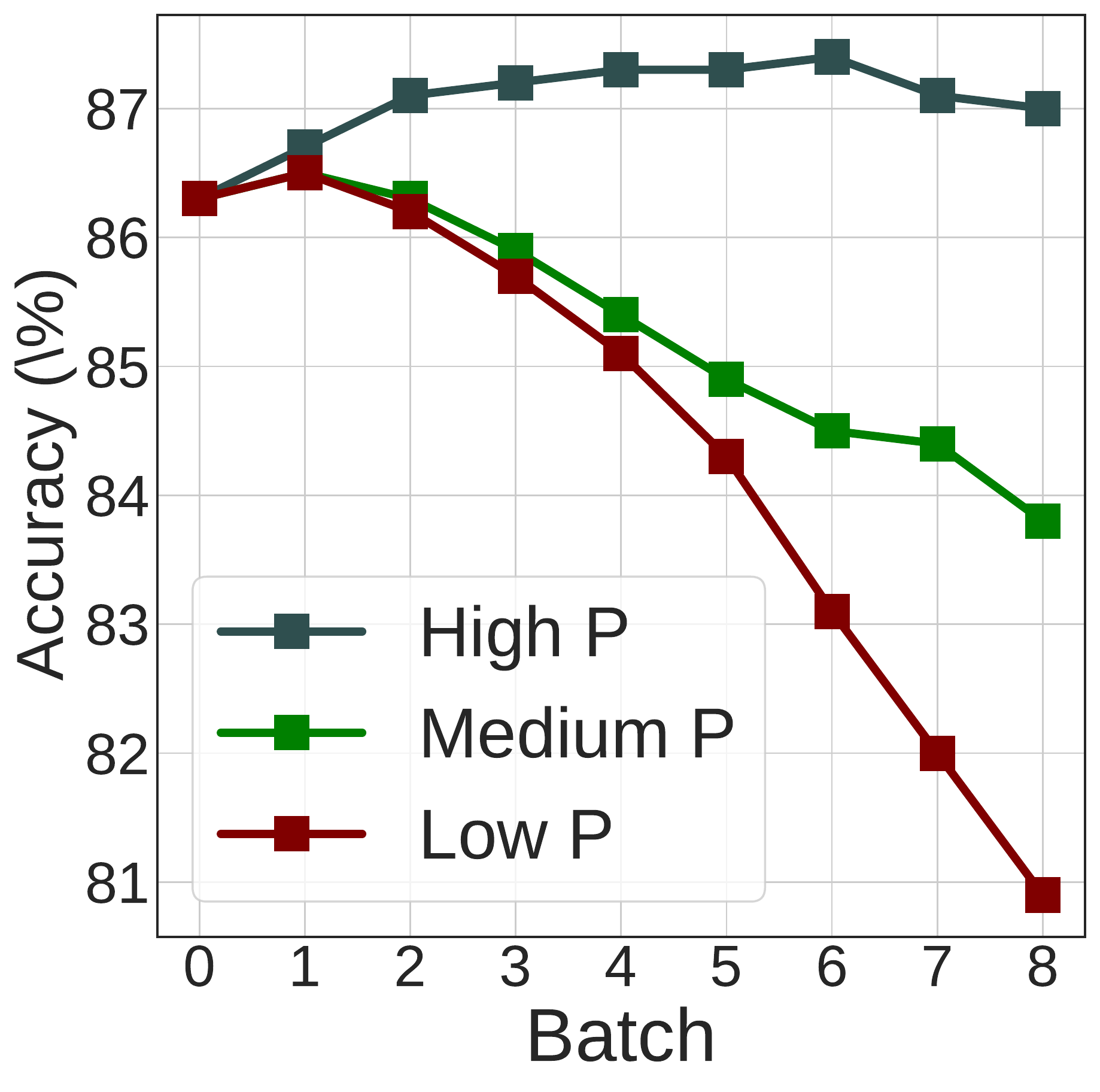}
    \includegraphics[scale=0.175]{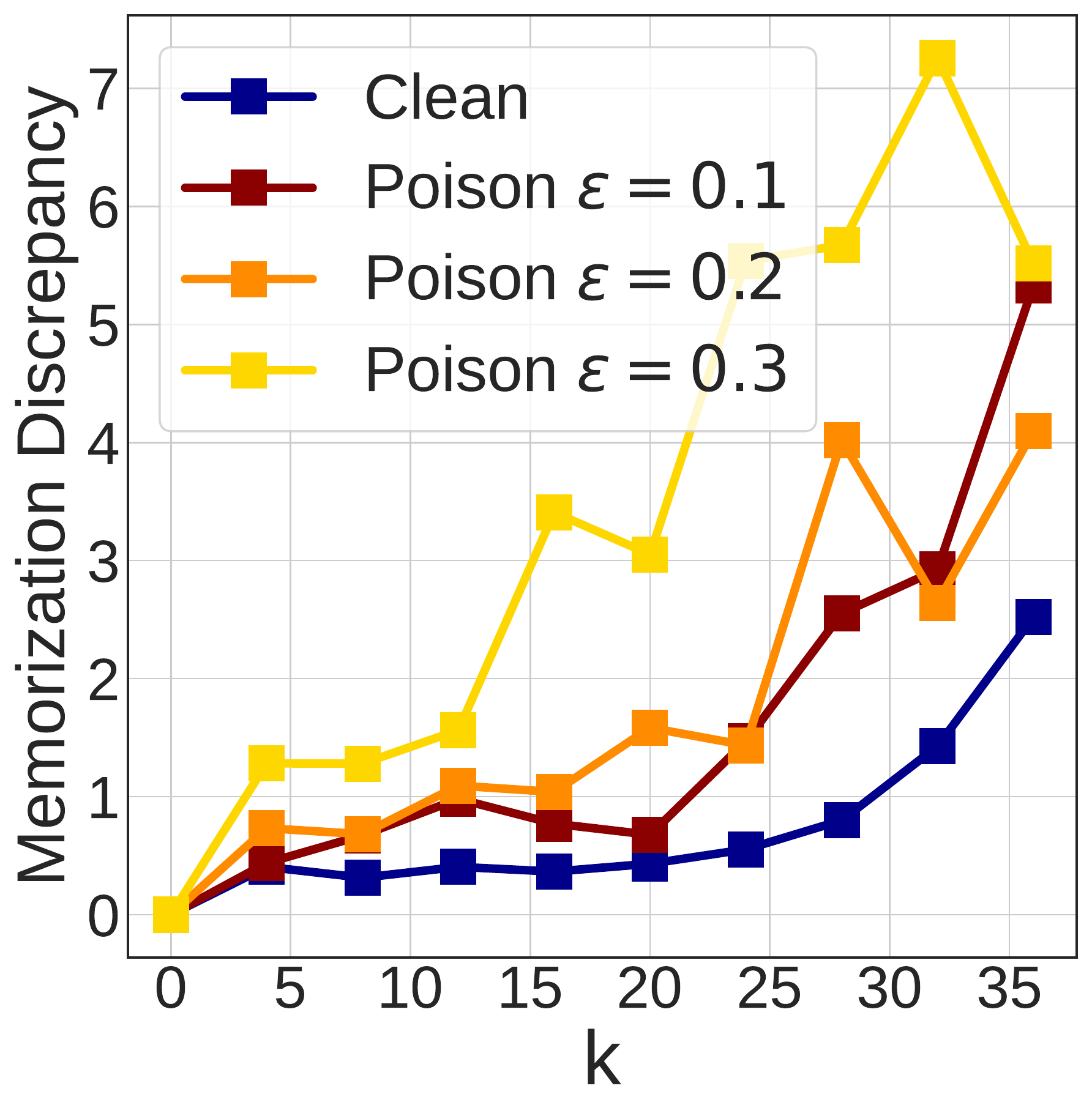}
    \includegraphics[scale=0.175]{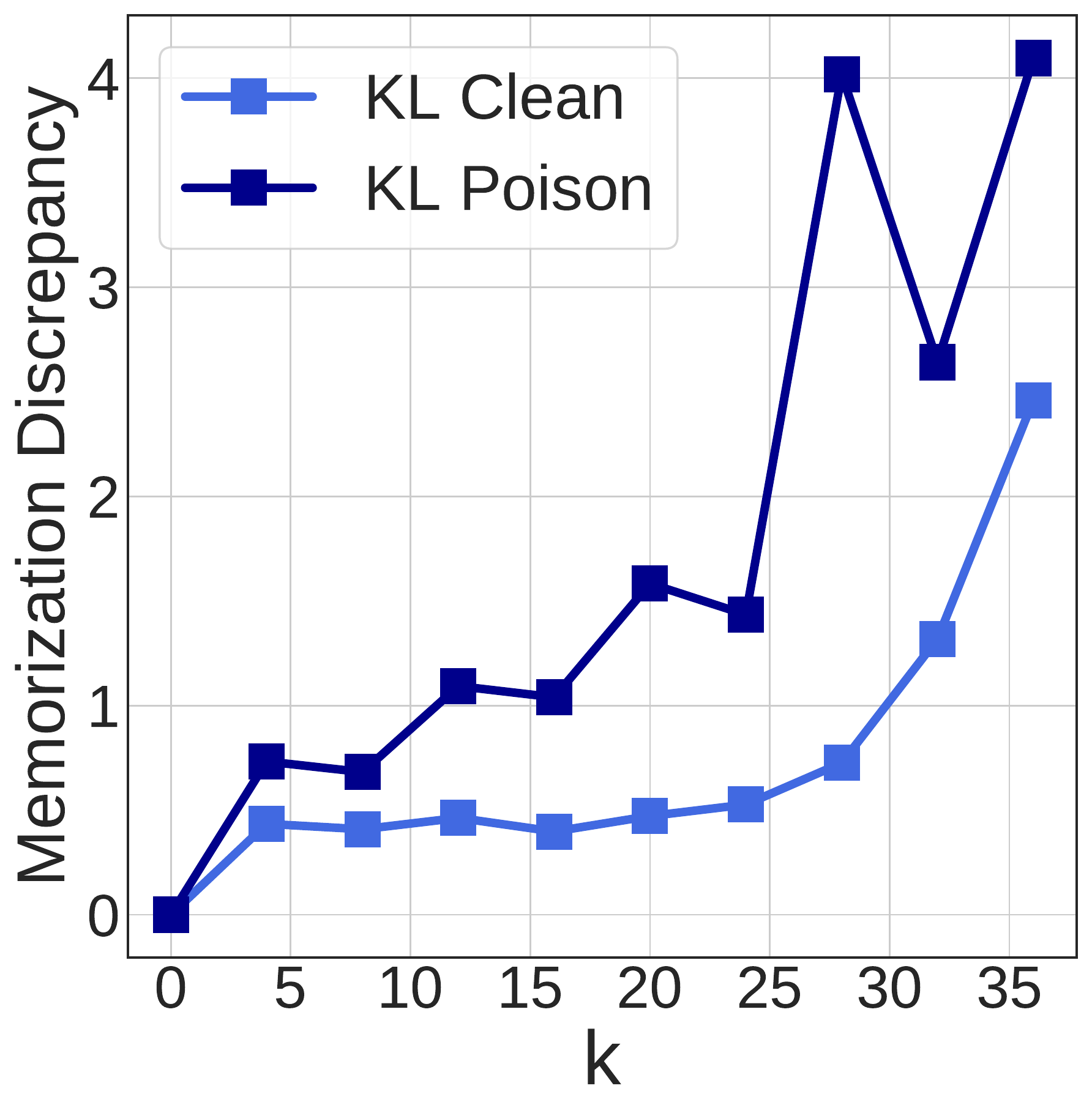}
    \includegraphics[scale=0.175]{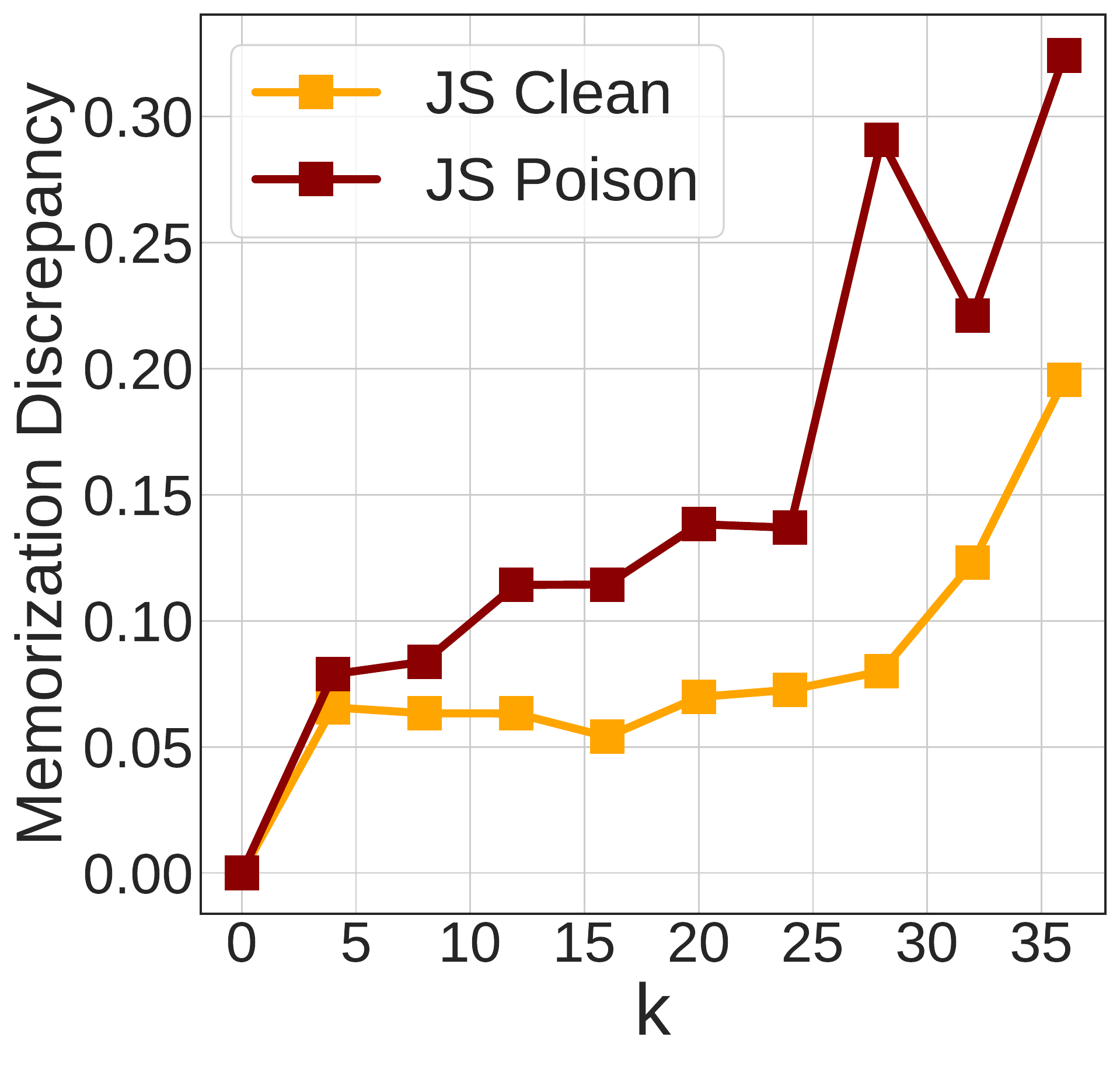}
    \vspace{-1mm}
    \caption{
    Ablation study. Left panel: Memorization Discrepancy between model $\theta^t$ in Eq.~\eqref{eq:memorization_discrepancy} and the model at Epoch 1; Left-middle panel: test accuracy with the threshold of different levels; Middle panel: Memorization Discrepancy under different poisoning capacities (imperceptibility); Right-middle and Right panel: Memorization Discrepancy corresponding to different discrepancy measurements.
    }
    \label{fig:ablation_study}
    \vspace{-2mm}
\end{figure*}

\subsection{Baseline Performance}
\label{sec:exp_baseline}

In this part, we compare our DSC with previous baseline methods (i.e., Standard Training (ST), Gradient Clipping~\citep{pascanu2013difficulty} (GC) and  Adversarial Training as Poisoning Defense~\citep{tao2021better} (AT)) on several benchmarked datasets to verify its effectiveness. In Table~\ref{table:exp_performance_cifar10}, we present the results of Clean Oracle to show the unaffected capacity of learning with clean samples and Accu. Poison to show the defense effectiveness against the secret poisoning attack. Specifically, we report four metrics according to different statuses: 1) Accuracy: +Poison, the accuracy after training with the secret poisoning batches; 2) Accuracy: +Trigger, the accuracy after training with the final trigger batch; 3) Batch, the number of batches before training loss is amplified to the monitored threshold; 4) $\Delta$, the accuracy drop of after the trigger batch. Since there are all clean samples in Clean Oracle, other accuracy values are equal to the final accuracy after training with 100 batches.

\begin{table}[t!]
\centering  
\caption{Test accuracy (\%) of adopting different adversarial optimization losses in our defense testing on the CIFAR-10 dataset. }
\renewcommand\arraystretch{0.9}
\label{table:exp_other_attack}
\small
\resizebox{0.48\textwidth}{!}{
\begin{tabular}{c|c|c|c|c}
\toprule[1.5pt]
\multicolumn{2}{c|}{Defense/Attack}  & PGD & KL (TRADES) & CW$_{\infty}$   \\
\midrule[0.6pt]
\midrule[0.6pt]
\multirow{5}*{DSC} & Start & \multicolumn{3}{c}{86.3} \\
\cmidrule{2-5}
~ & Acc. +Poison  & 81.4 & 80.3 & 81.8  \\

~ & Acc. +Trigger & 78.7 & 77.3 & 79.3  \\
~ & $\Delta$ & -2.69 & -3.04 & -2.54  \\
\midrule[0.6pt]
\bottomrule[1.5pt]
\end{tabular}}
\vspace{-2mm}
\end{table}

According to Table~\ref{table:exp_performance_cifar10}, we can find all the defensive methods can resist more batches than ST before triggering the pre-defined threshold. As for Accu. Poison, our DSC can achieve better accuracy consistently after going through the poisoning batches and the final trigger batches. Compared with GC, DSC and AT result in a smaller accuracy drop for the final single batch, it is much more important to those real-world applications since the model recovery with worse performance is a large cost~\citep{kairouz2019advances}. As for Clean Oracle, GC can achieve comparable or even higher accuracy than the pre-trained model since the clipped gradient also slow down the training process with a small gradient~\citep{pang2021accumulative}. Due to the indiscriminate correction, AT over-optimizes the clean samples and leads to much lower accuracy than the pre-trained model. In contrast, our DSC can still achieve comparable performance with ST through the selective correction of Memorization Discrepancy. Overall, the experiments running multiple times verified the general effectiveness of our DSC.

\subsection{Ablation Study and Further Discussion}
\label{sec:exp_ablation}

In this part, we conduct various experiments on CIFAR-10 to provide a thorough understanding of our presented Memorization Discrepancy and DSC. 
More ablations from different perspectives can be referred to in Appendix~\ref{sec:app_exp}.

\paragraph{Training status $\theta^t$.} In the left panel of Figure~\ref{fig:ablation_study}, we investigate the training status $\theta^t$ in Memorization Discrepancy. Specifically, we generate the accumulative poisoning attack based on the $\theta^t$ and calculate the discrepancy with the model checkpoint in Epoch 1. As can be seen, the mean values of Memorization Discrepancy on poison samples are consistently distinguishable from that on clean ones. This phenomenon provides us a chance to set just one auxiliary model for checking the dynamics instead of several historical models used in Figure~\ref{fig:reason} to fix the interval $k$.

\paragraph{Interval $k$.} In our previous illustration of Figure~\ref{fig:reason}, we visualize the discrepancy with the fixed interval $k$ (e.g., $k\in [4,36]$). The Memorization Discrepancy of both poison samples and clean samples increases and becomes more distinguishable with the increasing of the interval $k$. However, it is hard to use general criteria to choose the best interval or the previously analyzed training status. In the left panel of Figure~\ref{fig:ablation_study}, we adopt a dynamical interval $k$ which increases with the training status with a fixed auxiliary model (i.e., $\theta^{t-k}$) at Epoch 1. A similar trend with the distinguishable values can also be captured during the training process.

\begin{table}[t!]
\centering  
\caption{Test accuracy (\%) of considering adaptive attacks being aware of Memorization Discrepancy on the CIFAR-10 dataset. }
\renewcommand\arraystretch{0.9}
\label{table:exp_adaptive_attack}
\small
\resizebox{0.48\textwidth}{!}{
\begin{tabular}{c|c|c|c|c|c}
\toprule[1.5pt]
\multicolumn{2}{c|}{Constraint $\beta$}  & 0 & 0.05 & 0.1  & 0.2  \\
\midrule[0.6pt]
\midrule[0.6pt]
\multirow{5}*{ST} & Start & \multicolumn{4}{c}{86.3} \\
\cmidrule{2-6}
~ & Acc. +Poison  & 81.4 & 80.9 & 80.9  & 80.3\\

~ & Acc. +Trigger & 51.3 & 59.9 & 69.8 & 74.9 \\
~ & $\Delta$ & -30.04 & -20.97 & -11.12 & -5.43 \\
\midrule[0.6pt]
\bottomrule[1.5pt]
\end{tabular}
}
\vspace{-2mm}
\end{table}

\begin{table*}[!t]
    \centering
    \caption{Comparison of Memorization Discrepancy along the backtracking interval across different backbones.}
    \vspace{2mm}
    \resizebox{\textwidth}{!}{
    \begin{tabular}{c|c|c|c|c|c|c|c|c|c|c|c}
    \toprule[1.7pt]
    \midrule[0.6pt]
        Dataset & Model/Interval k & Discrepancy on & 4 & 8 & 12 & 16 & 20 & 24 & 28 & 32 & 36 \\
        \midrule[0.6pt]
        \multirow{6}{*}{CIFAR-10} & \multirow{2}{*}{ResNet} & Clean & 0.43444 & 0.40864 & 0.46287 & 0.39770 & 0.47189 & 0.52683 & 0.72566 & 1.31921 & 2.45922 \\
        ~ & ~ & Poison & 0.73276 & 0.68203 & 1.09318 & 1.03971 & 1.58336 & 1.43465 & 4.01915 & 2.64194 & 4.09608 \\
        \cmidrule{2-12}
        ~ & \multirow{2}{*}{VGG-11} & Clean & 0.52409 & 0.41497 & 0.51741 & 0.82906 & 0.94531 & 1.32099 & 1.97665 & 2.82950 & 5.69943 \\ 
        ~ & ~ & Poison & 0.33073 & 0.44917 & 0.41988 & 0.80057 & 0.89317 & 1.25389 & 3.65167 & 6.73917 & 14.53571 \\ 
        \cmidrule{2-12}
        ~ & \multirow{2}{*}{SmallCNN} & Clean & 0.42651 & 0.67719 & 0.48234 & 0.53239 & 0.46322 & 0.64273 & 1.05432 & 0.65214 & 1.89398 \\ 
        ~ & ~ & Poison & 1.04400 & 1.49517 & 1.08457 & 0.94815 & 1.91275 & 1.88436 & 3.73850 & 3.34128 & 5.79640 \\ 
        \bottomrule[1.7pt]
    \end{tabular}}
    \label{table:exp_model}
    \vspace{-1mm}
\end{table*}

\paragraph{Threshold $P$.} In the left-middle panel of Figure~\ref{fig:ablation_study}, we validate our proposed DSC with different levels of the threshold $P$. The intuition behind the threshold is to better utilize the distinguishable Memorization Discrepancies of poison samples and clean samples to filter out the specific samples. With a high threshold $P$, the test accuracy would not drop significantly when training with clean samples, since the sample correction can early-stop to avoid over-calibration. In contrast, using low threshold results in a severe accuracy drop since we conduct the indiscriminate correction. To further investigate the characteristics of the threshold $P$, we conduct additional experiments about Eq.~\eqref{eq:objective} with the threshold $P$ in Appendix~\ref{sec:app_exp}. To sum up, on the one hand, the results on natural data confirm that its discrepancy value shares a similar trend (e.g., increasing along the training process as indicated in Table~\ref{table:exp_model}) across different datasets, while the specific threshold $P$ needs different setups according to different training data. On the other hand, we can find that the performance of DSC can be stable during a specific range of the threshold value for identifying the poison data. 



\paragraph{Poisoning capacity.} In the middle panel of Figure~\ref{fig:ablation_study}, we check the effect of the poisoning capacity, i.e., the imperceptibility, on the value of Memorization Discrepancy. The imperceptibility is controlled by a parameter $\epsilon$ which corresponds to the manipulations. As the same in adversarial attacks~\citep{Goodfellow14_Adversarial_examples}, the larger $\epsilon$ indicates more perturbations and lower imperceptibility. The results show that the discrepancies between the two values of poison and clean samples also increase along with the enlargement of $\epsilon$. 

\paragraph{Different backbones.} To verify the generality of Memorization Discrepancy, we conduct experiments using different model structures (e.g., ResNet~\citep{he2016deep}, VGG-11~\citep{simonyan2014very}, SmallCNN~\citep{Zhang_trades}) on both clean and poison samples to check the discrepancy value in Table~\ref{table:exp_model}. The results confirm the phenomenon generally exists across different backbones in our experiments on the CIFAR-10 dataset, e.g., the difference between the Memorization Discrepancy on poison samples from that on clean samples is generally more distinguishable along with the enlargement of backtracking interval $k$.

\paragraph{Discrepancy measurement.} In the rest two panels of Figure~\ref{fig:ablation_study}, we also investigate another discrepancy measurement to check the relationship between poison and clean samples. Here we adopt the Jensen–Shannon divergence~\citep{dagan1997similarity} (JS) to calculate it and compare the results with that calculated on KL divergence. Both discrepancy measurements can capture a similar trend for their Memorization Discrepancy. Due to the different definitions for the measurement, there exists a difference on the scale of specific discrepancy values. The overall results show that the distinguishable relationship between two Memorization Discrepancies is not a consequence of 
a certain measurement but all of them, and the general intuition behind the discrepancy can also be captured by other measurements.

\paragraph{Discussion on the adaptive attacker.} In Tables~\ref{table:exp_other_attack} and~\ref{table:exp_adaptive_attack}, we consider different adversarial methods for generating imperceptible poison samples. The results demonstrate the robust effectiveness of DSC on different attacks. Furthermore, we also discuss a potential adaptive attacker~\citep{tramer2020adaptive} which is aware of Memorization Discrepancy, and try to incorporate it into its generation constraint to escape from identifying. However, the constraint can directly mitigate the poisoning effect that is reflected by the $\Delta$ in Table~\ref{table:exp_adaptive_attack}, where the poison sample is generated under a constraint controlled by $\beta$ with the historical model.


In addition, we also provide more explorations of Memorization Discrepancy and the DSC from different perspectives in Appendix~\ref{sec:app_exp}, including extra experiments of DSC in different learning and identification settings, the effects of different components, and corresponding discussions.




\section{Conclusion}
\label{sec:conclusion}

In this work, we investigated the accumulative poisoning attacks in real-time data streaming through the views of model dynamics. Through the exploration of the dynamic changes, we present a novel measure, i.e., {Memorization Discrepancy}, which is aware of the imperceptible manipulation added to the clean samples. Based on the novel measure, we propose the Discrepancy-aware Sample Correction method, which can selectively calibrate the poison samples. We present a comprehensive understanding of the discrepancy, and also various experiments to show the effectiveness of the DSC. We believe the underlying spirit of our Memorization Discrepancy, i.e., the dynamic changes in different models, can also motivate other defensive methods or applications. 

\section*{Acknowledgements}

JNZ and BH were supported by NSFC Young Scientists Fund No. 62006202, Guangdong Basic and Applied Basic Research Foundation No. 2022A1515011652, Alibaba Innovative Research Program, and HKBU CSD Departmental Incentive Grant. JCY was supported by the National Key R\&D Program of China (No. 2022ZD0160703),  STCSM (No. 22511106101, No. 22511105700, No. 21DZ1100100), 111 plan (No. BP0719010).

\clearpage

\bibliography{example_paper}
\bibliographystyle{icml2023}

\newpage
\appendix
\onecolumn

\section*{Appendix}

\section*{Reproducibility Statement}

We provide the repository of our source codes to ensure the reproducibility of main experimental results: \url{https://github.com/tmlr-group/Memorization-Discrepancy}. All experiments are conducted with multiple runs on NVIDIA GeForce RTX 3090 GPUs.

\section{Property Insights of Memorization Discrepancy}
\label{app:proof}

In this part, we provide the formal analysis of the property insights (e.g. Theorem~\ref{theorem:correlation} introduced in the main text) of our Memorization Discrepancy. To reveal the underlying mechanism of the proposed information measure, we start by revisiting the different targets of poisoning adversaries from the original training objective. Without specifying any detailed strategy for generating poison samples, the malicious objective generally targets to deteriorate the model performance on clean inputs, e.g., $\max\mathcal{L}(S; \theta^{*})$, where $\theta^*$
is assumed to be well-trained in the given samples. 

However, considering a model that is updated with clean training data, it gradually approaches to different side (e.g., $\min\mathcal{L}(S; \theta^{*})$) of the previous target. Based on that, we can naturally make the following assumption about the sample-wise discrepancy with the difference between the current and target loss value,

\begin{assumption}\label{ass:correlation}
Let $f(x;\theta^t)$ denote the model dynamics about the sample $x$ and at round $t$, k denotes the interval rounds for backtracking. Considering the ordinary objective $\min\mathcal{L}(S; \theta^{*})$ and the poisoning objective $\max\mathcal{L}(S; \theta^{*})$ with the clean inputs set $S$ and a poisoned set $P$, we have,
\begin{equation}
    \mathbb{D}(f(\hat{x}(\theta^{t}); \theta^{t}), f(x; \theta^{t})) \propto \max\mathcal{L}(S; \theta^{*})-\mathcal{L}(S; \theta^{t}), \quad s.t.\; \theta^* \in \arg\min_{\theta}\mathcal{L}(P;\theta)
\end{equation}
\end{assumption}

Intuitively, it indicates that the model output of the poison sample will be much more different from that of the clean sample when the model is well-trained on the clean training data (i.e., has a small loss value on clean set $S$). In other words, the poisoning adversary needs a larger effort to achieve the malicious target, since the model has already performed well on the clean set. 

Here we present the Theorem~\ref{theorem:correlation} again (i.e., the same as the following Theorem~\ref{theorem:correlation_part2}) to start the analysis and the further discussion on the critical property of the defined Memorization Discrepancy.

\begin{theorem}\label{theorem:correlation_part2}
Let $f(x;\theta^t)$ denote the output about the sample $x$ at epoch $t$, k denotes the interval rounds, and $S$ denotes a clean dataset. Considering the opposite between objective $\min\mathcal{L}(S,\theta^*)$ and the poisoning objective $\max\mathcal{L}(S,\theta^*)$ where $\theta^*$ is the well-trained model respectively, there exists a learning period where we have,
\begin{equation}\label{eq:corr}
    \mathbb{D}(f(\hat{x}(\theta^{t}); \theta^{t}), f(\hat{x}(\theta^{t-k}); \theta^{t-k}))-\mathbb{D}(f(x; \theta^{t}), f(x; \theta^{t-k})) \propto \mathcal{L}(S; \theta^{t-k})-\mathcal{L}(S; \theta^{t}),
\end{equation}
\end{theorem}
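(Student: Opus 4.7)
The plan is to build on the natural ``sample-wise'' companion of Assumption~\ref{ass:1}, namely a proportionality between $\mathbb{D}(f(\hat{x}(\theta);\theta), f(x;\theta))$ and the gap $\max\mathcal{L}(S;\theta^*)-\mathcal{L}(S;\theta)$ between the poisoning target and the current clean loss (this is the dual reading of Assumption~\ref{ass:1} in terms of the two opposing objectives stated in the theorem). First I would instantiate this companion at $\theta=\theta^t$ and at $\theta=\theta^{t-k}$, and subtract the two proportionality statements; the unknown constant $\max\mathcal{L}(S;\theta^*)$ cancels, yielding
\begin{equation*}
\mathbb{D}(f(\hat{x}(\theta^t);\theta^t), f(x;\theta^t)) - \mathbb{D}(f(\hat{x}(\theta^{t-k});\theta^{t-k}), f(x;\theta^{t-k})) \propto \mathcal{L}(S;\theta^{t-k}) - \mathcal{L}(S;\theta^t),
\end{equation*}
which already matches the right-hand side of the claim.

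The remaining task is to connect this ``sample-wise'' difference to the ``model-wise'' difference that appears on the left-hand side of Eq.~\eqref{eq:theo}. Labeling the four output-space vectors $A=f(\hat{x}(\theta^t);\theta^t)$, $B=f(\hat{x}(\theta^{t-k});\theta^{t-k})$, $C=f(x;\theta^t)$, $D=f(x;\theta^{t-k})$, the first step established $\mathbb{D}(A,C)-\mathbb{D}(B,D)\propto \mathcal{L}(S;\theta^{t-k})-\mathcal{L}(S;\theta^t)$, whereas the theorem asks about $\mathbb{D}(A,B)-\mathbb{D}(C,D)$. I would view these as comparing opposite edges of a quadrilateral and trade them using a quadrilateral-type relation: for a metric-like divergence through a triangle inequality on the two diagonals, or for the KL choice actually used in the paper through a first-order Pinsker-style expansion of $\mathbb{D}(A,B)$ and $\mathbb{D}(C,D)$ around the reference points $C$ and $D$. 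Either route converts $\mathbb{D}(A,B)-\mathbb{D}(C,D)$ into $\mathbb{D}(A,C)-\mathbb{D}(B,D)$ plus a cross term.

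The main obstacle, and precisely the step that forces the ``there exists a learning period'' qualifier, is controlling that cross term. I would impose local Lipschitz smoothness of $f$ in $\theta$ and Taylor-expand $f(\cdot;\theta^t)-f(\cdot;\theta^{t-k})$ in the update $\theta^t-\theta^{t-k}$; because the poison $\hat{x}(\theta^t)$ differs from $x$ only by an imperceptible perturbation, the parameter-direction shifts $A-B$ and $C-D$ are nearly parallel. Restricting to an interval $k$ on which $\mathcal{L}(S;\cdot)$ is monotone (which defines the learning period) gives the cross term a definite sign, so the swap preserves direction rather than exact magnitude, consistent with the informal $\propto$ used throughout the paper. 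Concatenating with the first step yields Eq.~\eqref{eq:theo}. I expect the hardest part to be making the ``cross term has a definite sign over a learning period'' argument precise; it will rely on smoothness of $f$ in $\theta$, the smallness of $\hat{x}-x$, and the monotone-training-loss regime implicit in the hypothesis.
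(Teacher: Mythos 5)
Your proposal follows essentially the same route as the paper's own proof: both instantiate the sample-wise proportionality $\mathbb{D}(f(\hat{x}(\theta);\theta),f(x;\theta))\propto\max\mathcal{L}(S;\theta^*)-\mathcal{L}(S;\theta)$ at $\theta^t$ and at $\theta^{t-k}$, subtract the two relations so that the unknown optimal value cancels, and then trade the resulting sample-wise differences for the model-wise differences on the left-hand side of the claim. Your handling of that final trade --- the quadrilateral/cross-term argument and the observation that controlling its sign is precisely what forces the ``there exists a learning period'' qualifier --- is in fact more explicit than the paper's, which performs the swap by assertion (``constructing discrepancy for each side'') without supplying the smoothness or monotonicity conditions you identify.
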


\begin{proof}[proof of Theorem~\ref{theorem:correlation_part2}.]
The correlation of the two parts in Eq.~\eqref{eq:corr} can be formulated in the following. 

Given the two approximate optimization targets as,
\begin{align}
\begin{split}
    &\theta^t - \beta\nabla_{\theta^t}\mathcal{L}(f(x;\theta^t), y) \rightarrow \min\mathcal{L}(S;\theta^{t+1})\\
    &\theta^t - \beta\nabla_{\theta^t}\mathcal{L}(f(\hat{x}(\theta^t);\theta^t), y) \rightarrow \max\mathcal{L}(S;\theta^{t+1}),
\end{split}
\end{align}
we can obtain the correlation about these two opposite target parts as,
\begin{align}
    \mathbb{D}(\nabla_{\theta}\mathcal{L}(f(x), y, \theta^t),\nabla_{\theta}\mathcal{L}(f(\hat{x}(\theta^t)), y, \theta^t))\propto \max\mathcal{L}(S;\theta^{*})-\mathcal{L}(S;\theta^{t}),
\end{align}
where $\theta^* \in \arg\min_{\theta}\mathcal{L}(P;\theta)$ is the model parameter well-trained on the poison samples. Similarly, we can also get the following equation via backtracking,
\begin{align}
    \mathbb{D}(\nabla_{\theta}\mathcal{L}(f(x), y, \theta^{t-k}),\nabla_{\theta}\mathcal{L}(f(\hat{x}(\theta^{t-k})), y, \theta^{t-k}))\propto \max\mathcal{L}(S;\theta^{*})-\mathcal{L}(S;\theta^{t-k}),
\end{align}
Since the two gradient parts share the same anchor of model parameter and the labels, we can get the consistent relationship that similar to Assumption~\ref{ass:correlation} as, 
\begin{align}
\begin{split}\label{eq:dis_dis}
    \mathbb{D}(\mathcal{L}(f(x), y, \theta^t),\mathcal{L}(f(\hat{x}(\theta^t)), y, \theta^t))&\propto \max\mathcal{L}(S;\theta^{*})-\mathcal{L}(S;\theta^{t}),\\
    \mathbb{D}(\mathcal{L}(f(x), y, \theta^{t-k}),\mathcal{L}(f(\hat{x}(\theta^{t-k})), y, \theta^{t-k}))&\propto \max\mathcal{L}(S;\theta^{*})-\mathcal{L}(S;\theta^{t-k}),
\end{split}
\end{align}
By accumulate the approximate discrepancy correlation with historical models, we can introduce the discrepancy considering the samples of same type,
\begin{align}
\begin{split}
    \mathbb{D}(f(x; \theta^{t-k})&, f(x; \theta^{t})),\\
    \mathbb{D}(f(\hat{x}(\theta^{t-k}); \theta^{t-k})&, f(\hat{x}(\theta^{t}); \theta^{t})),
\end{split}
\end{align}
Using the above discrepancy on model outputs, we can explicitly obtain the formulation by constructing discrepancy for each side of Eq.~\eqref{eq:dis_dis},
\begin{align}
\mathbb{D}(f(\hat{x}(\theta^{t}); \theta^{t}), f(\hat{x}(\theta^{t-k}); \theta^{t-k}))-\mathbb{D}(f(x; \theta^{t}), f(x; \theta^{t-k})) \propto \mathcal{L}(S; \theta^{t-k})-\mathcal{L}(S; \theta^{t}).
\end{align}
This gives the property insights on the dynamics of the Memorization Discrepancy. 

\end{proof}

In summary, the above correlation of the Memorization Discrepancy and the loss discrepancy between two different model stages is built on the high-level target discrepancy. The Eq.~\eqref{eq:corr} indicates that we can enlarge the discrepancy of the two information values on clean and poison samples via construct the proper loss discrepancy. Backtracking the historical model can serve this goal since it naturally reflects the dynamical behavior of learning with the ordinary objective. 

As enlarging the backtracking interval $k$, the loss discrepancy is further enlarged. The corresponding poison and clean samples become more distinguishable on the basis of our proposed information value. It is consistent with previous empirical results in Figures~\ref{fig:motivation} and~\ref{fig:reason}. This property exactly meets our requirement described in Section~\ref{sec:memo_motivation}, i.e., to gain useful information about imperceptible poison samples via model dynamics. To be specific, as presented in Figure~\ref{fig:motivation}, the two distributional statics become more distinguishable when we construct the discrepancy by involving the historical models. Similar in Figure~\ref{fig:method}, the Memorization Discrepancy of poison samples is larger than that of clean samples. It is general and has no specific assumption about the poisoning generation.

From the new perspective, the proposed Memorization Discrepancy can accumulate the target-level discrepancy in model dynamics for better distinguishing poison samples from clean samples, which is appropriate to figure out the accumulative poisoning attacks since the adversary try to spread the perceived risk over a single round of optimization.




\section{Further discussion about the Demonstration in Figure~\ref{fig:reason_2}}
\label{app:further_discuss}

\begin{figure}[h!]
    \centering
    \includegraphics[scale=0.175]{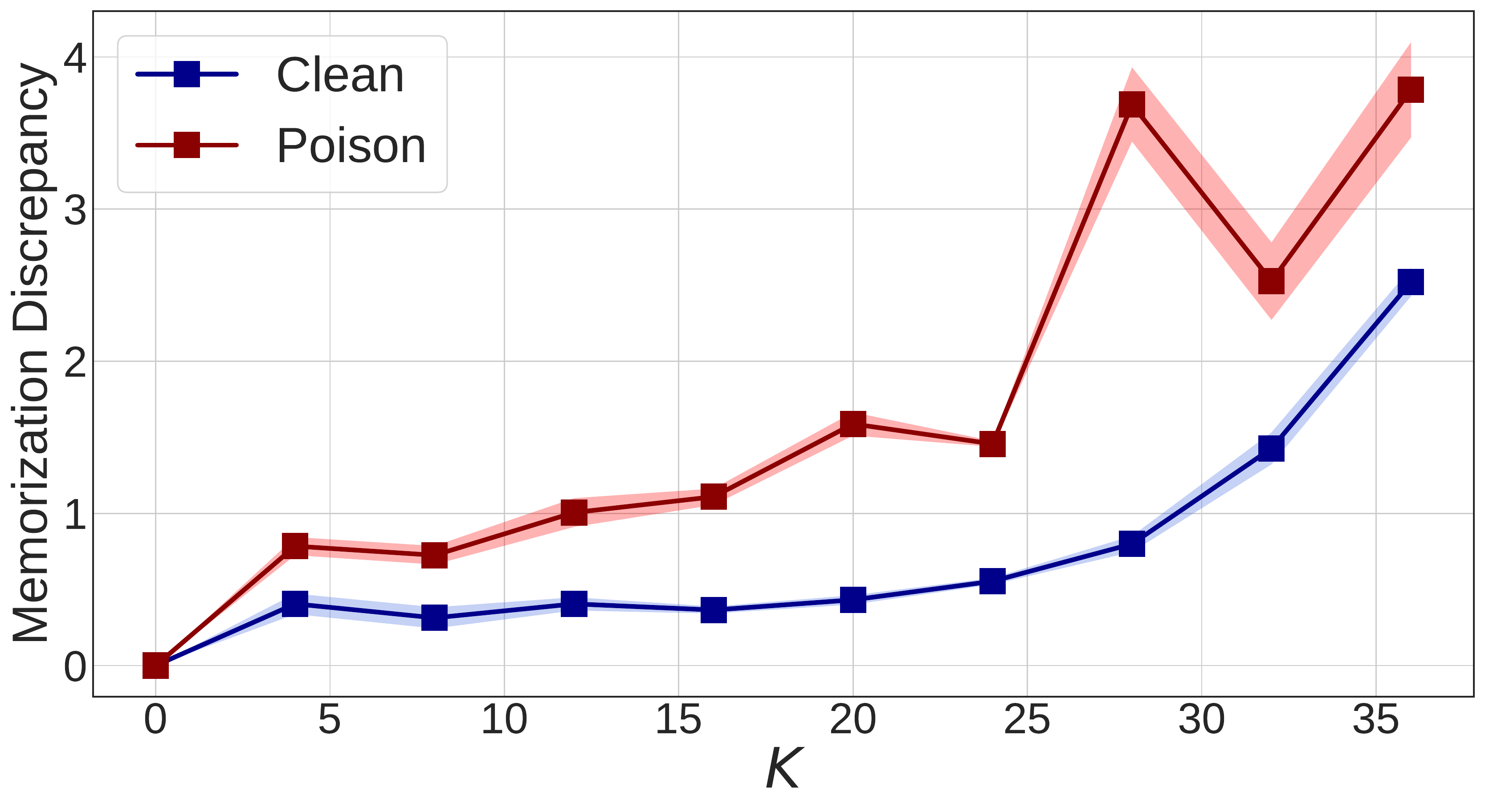}
    \includegraphics[scale=0.175]{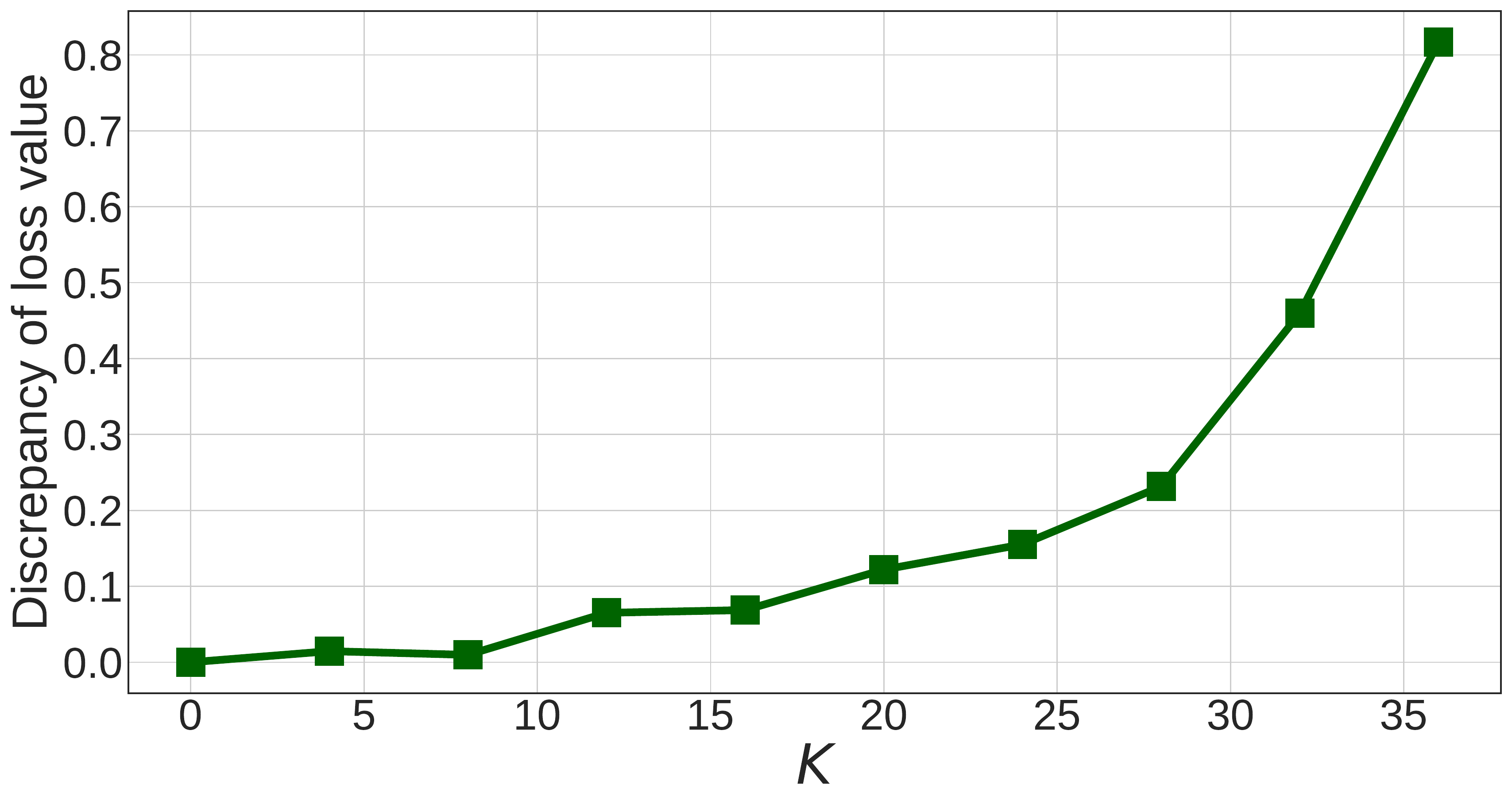}
    \includegraphics[scale=0.175]{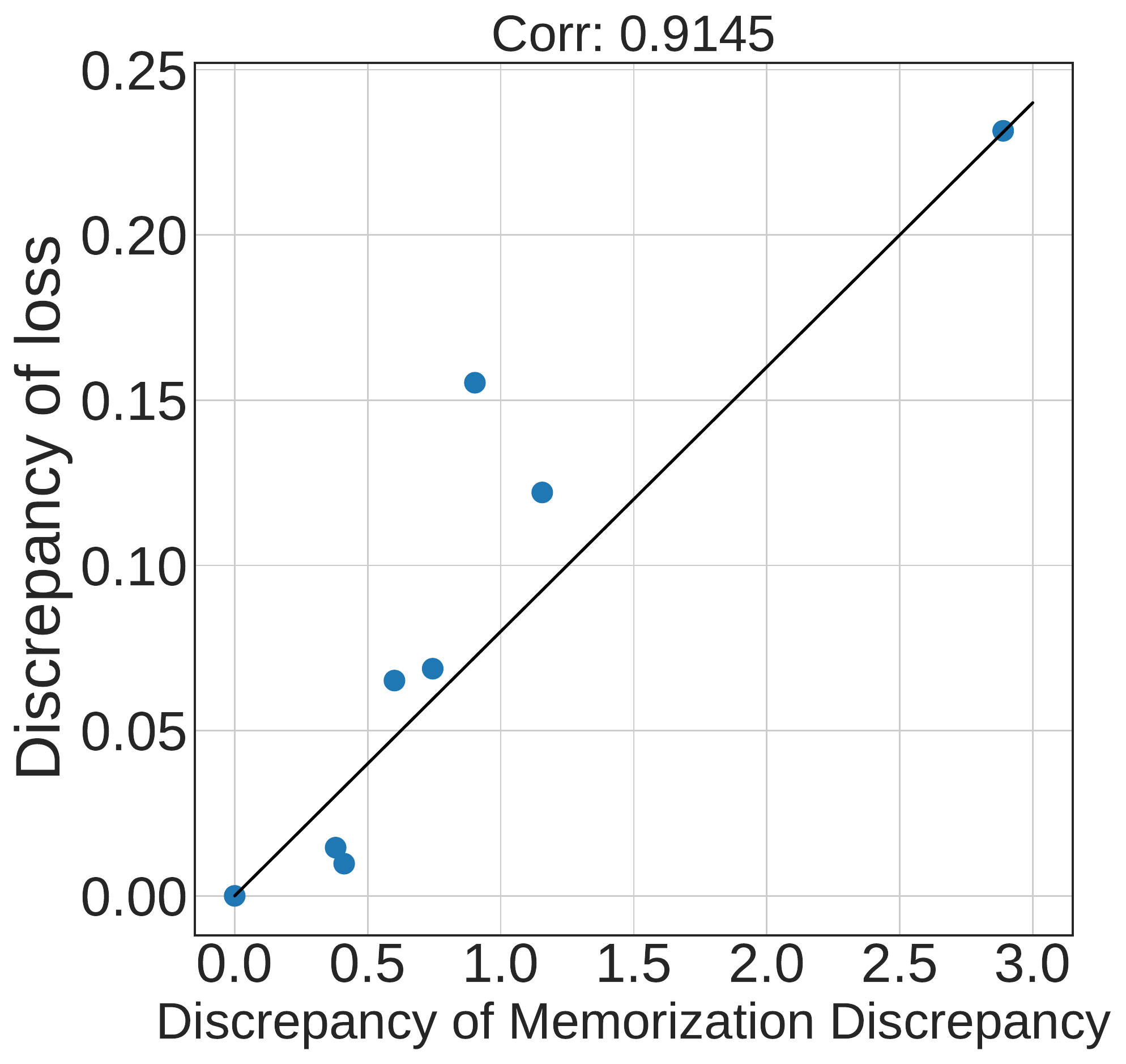}
    \caption{
    Empirical verification about the property insights on the Memorization Discrepancy.
    }
    \label{fig:empirical_jus_for_theo}
\end{figure}

In this part, we provide the empirical verification of the previous property insights draw from the discrepancy of model dynamics. On the same simulation experiments on CIFAR-10, we check the Memorization Discrepancy and the corresponding loss discrepancy between the current and historical model in Figure~\ref{fig:empirical_jus_for_theo}. It can be found that during the stable training phase (e.g. back from Epoch 40 to Epoch 5) the correlation between the discrepancy in model output and loss values are proportional. In the early stage, we can find some inconsistent relationships exist, we attribute the possible reason to the unstable optimization which can not accurately reflect the relative distance between the malicious target (training with poison samples) and the ordinary target (training with clean samples).

\begin{figure}[h!]
    \centering
    \subfigure[Illustration of Clean and Poison Samples ($\epsilon=0.064$)]{
    \includegraphics[scale=0.17]{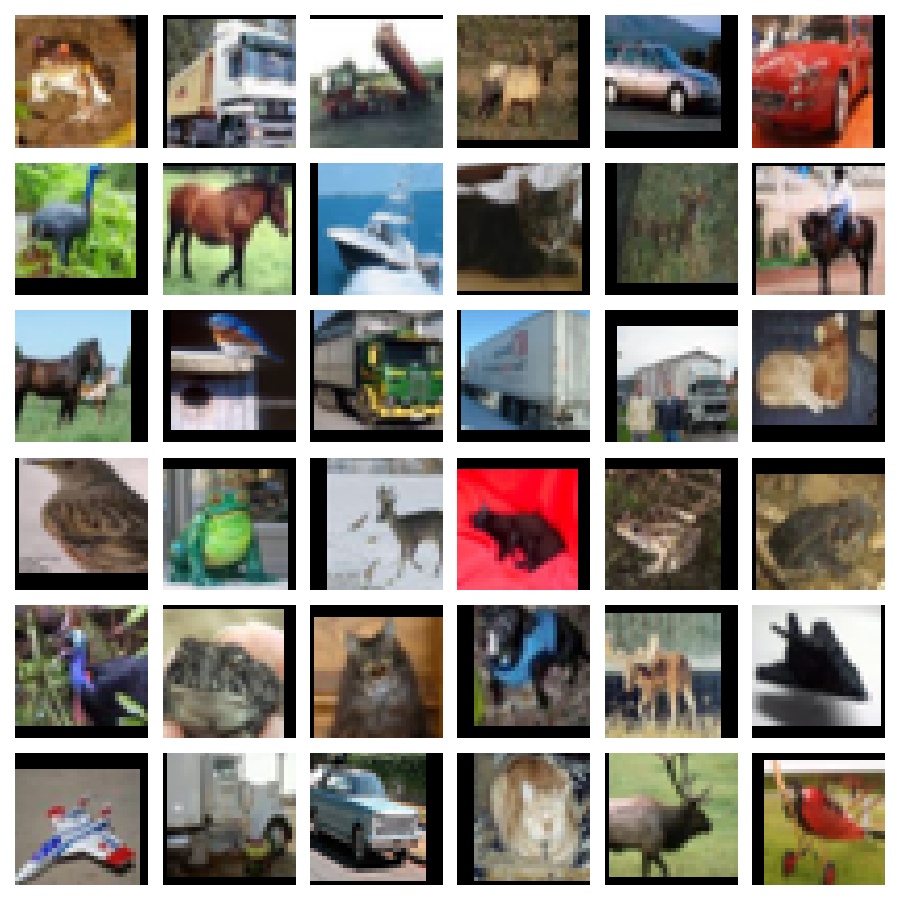}
    \hspace{0.05in}
    \includegraphics[scale=0.17]{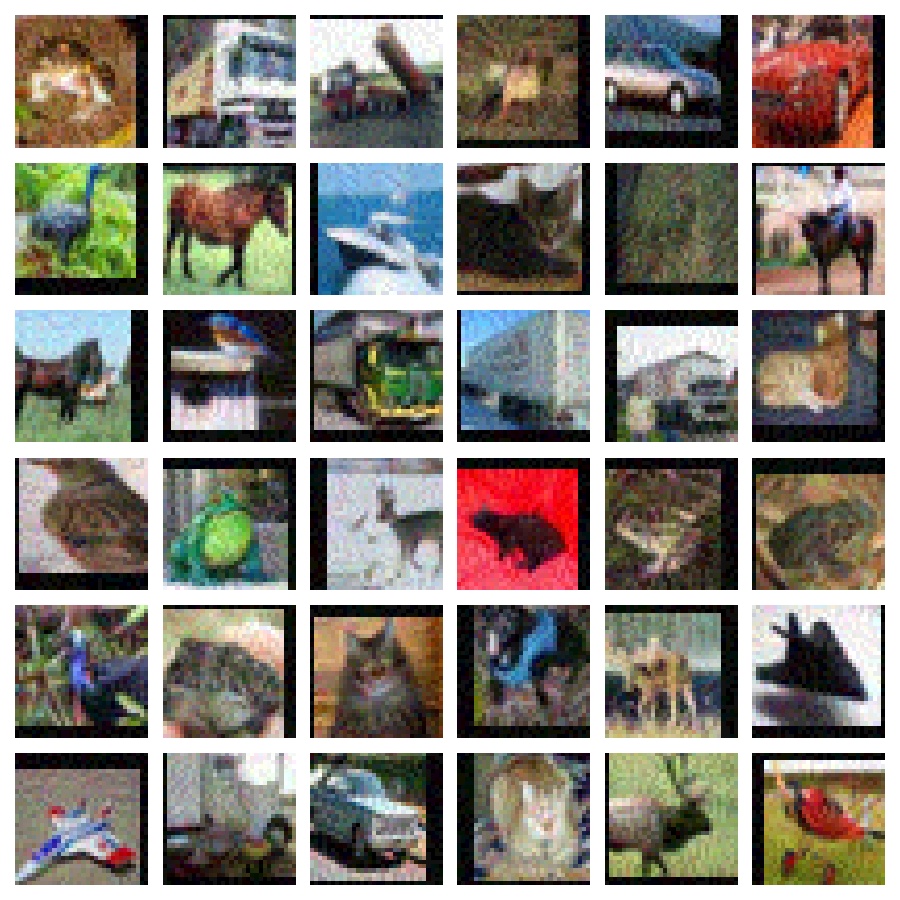}
    \label{fig1:a}
    }
    \subfigure[Empirical Unawareness of Accumulation]{
    \includegraphics[scale=0.18]{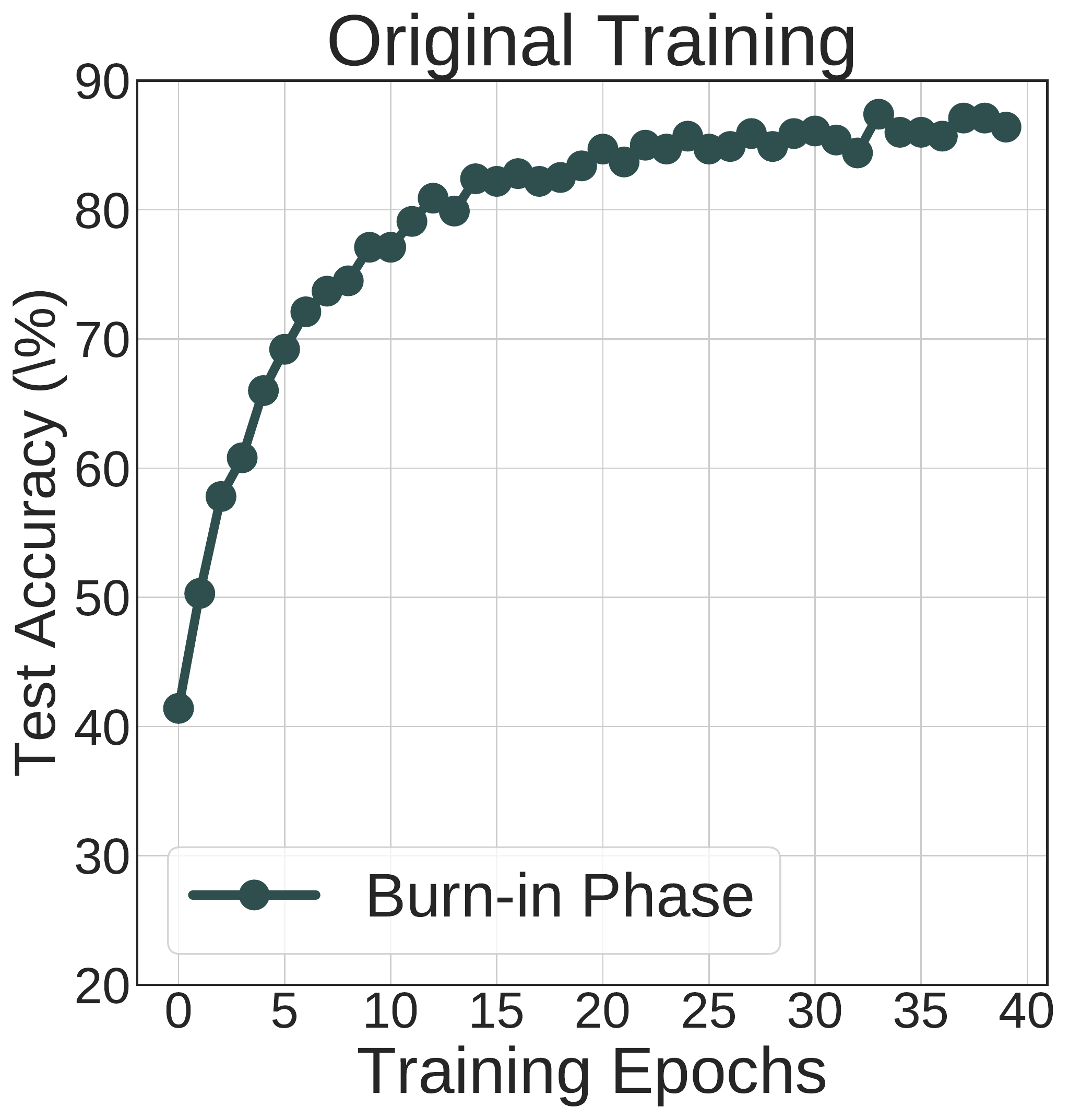}
    \hspace{0.05in}
    \includegraphics[scale=0.18]{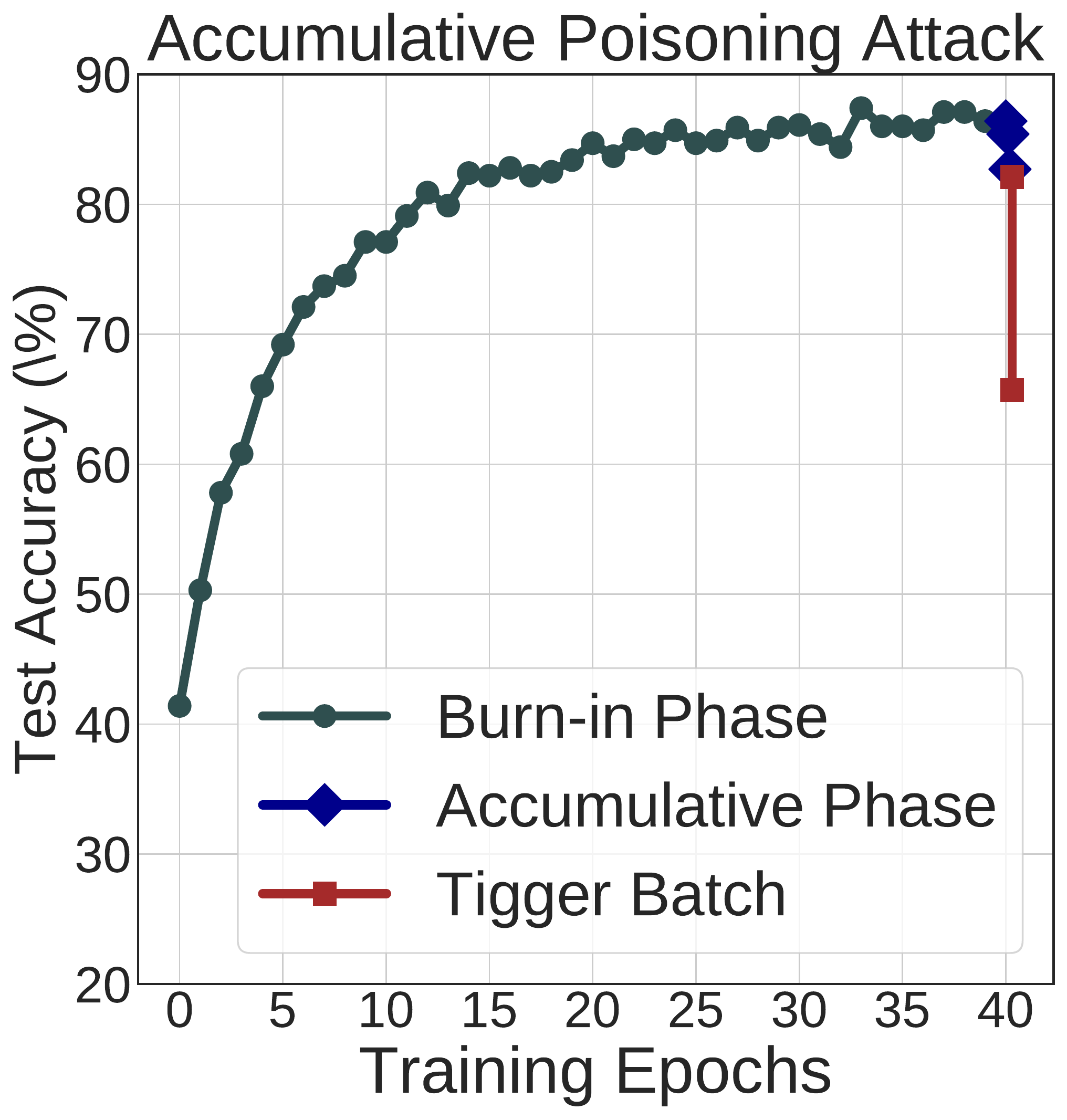}
    \label{fig1:b}
    }
    \caption{
    Visualization of the empirical imperceptibility on accumulative poisoning attack using CIFAR-10 dataset. Except for the visual-level imperceptibility, the accumulative poison samples will not induce a significant accuracy drop which may be caught by a simple monitor.
    }
    \label{fig:empirical_unawareness}
\end{figure}

\section{Additional details and explorations}
\label{sec:app_exp}

In this section, we provide completed information about the accumulative poisoning attacks with extra details of algorithm implementation, as well as extra experimental results. 

\subsection{Details about Accumulative Poisoning Attack}
\label{app:poison_details}

In this part, we describe the details of the Accumulative Poisoning Attack. Let $S_{train}$ be the clean training set and $S_{val}$ be the separate validation set, an attacker will poison the $S_{train}$ into a poisoned $\mathcal{P}(S_{train})$. Except for the original malicious objective as,
\begin{equation}
\label{eq:original_malicious_ob}
    \max_{\mathcal{P}}\mathcal{L}(S_{val};\theta^*),\quad s.t.\; \theta^*\in \arg\min_{\theta}\mathcal{L}(\mathcal{P}(S_{train});\theta),    
\end{equation}
the accumulative poisoning attack utilizes the characteristics of online learning to inject the poison samples. Hence, the real-time malicious objective is formulated as follows at the training round $T$,
\begin{equation}
\label{eq:real_time_malicious_ob}
    \max_{\mathcal{P}}\mathcal{L}(S_{val};\theta^{T+1}),\quad s.t.\; \theta^{T+1}=\theta^{T}-\beta\nabla_{\theta}\mathcal{L}(\mathcal{P}(S_{train});\theta^{T}),
\end{equation}
where $\beta$ is the learning rate of gradient descent. 

By expanding the previous malicious objective, it can be rewritten as,
\begin{equation}
\label{eq:real_time_malicious_ob_re}
    \min_{\mathcal{P}}\nabla_{\theta}\mathcal{L}(S_{val};\theta^{T})^\top\nabla_{\theta}\mathcal{L}(\mathcal{P}(S_{T});\theta^{T}),
\end{equation}

Based on Eq.~\eqref{eq:real_time_malicious_ob_re}, \citet{pang2021accumulative} introduce the accumulative phase $\mathcal{A}$ to make the model parameter at round $T$ obtained after the accumulative phase be more sensitive and 
fragile to the poisoning. So the overall objective can be formulated as,
\begin{equation}
\label{eq:real_time_malicious_ob_re_accu}
    \min_{\mathcal{P},\mathcal{A}}\nabla_{\theta}\mathcal{L}(S_{val};\mathcal{A}(\theta^{T}))^\top\nabla_{\theta}\mathcal{L}(\mathcal{P}(S_{T});\mathcal{A}(\theta^{T})),
\end{equation}
and the perturbed data batch $\mathcal{A}(S_t)$ can be crafted by solving a first-order expansion of the real-time learning update,
\begin{equation}
\label{eq:real_time_malicious_ob_re_accu_expand}
    \max_{\mathcal{P},\mathcal{A}_t}\nabla_{\theta}\mathcal{L}(\mathcal{A}_t(S_{t});\theta^t)^\top\left[ \nabla_{\theta}\mathcal{L}(S_{t};\theta^{t})+\lambda\cdot\nabla_{\theta}(\nabla_{\theta}\mathcal{L}(S_{val};\mathcal{A}(\theta^{T}))^\top\nabla_{\theta}\mathcal{L}(\mathcal{P}(S_{T});\mathcal{A}(\theta^{T})))\right],
\end{equation}
which equals to,
\begin{equation}
\label{eq:real_time_malicious_ob_re_accu_expand_v2}
    \max_{\mathcal{P},\mathcal{A}_t}\nabla_{\theta}\mathcal{L}(\mathcal{A}_t(S_{t});\theta^t)^\top\left[ \underbrace{\nabla_{\theta}\mathcal{L}(S_{t};\theta^{t})}_\text{keep accuracy}+\lambda\cdot\underbrace{\nabla_{\theta}(\nabla_{\theta}\mathcal{L}(S_{val};\theta^{T})^\top\nabla_{\theta}\mathcal{L}(\mathcal{P}(S_{T});\theta^{T}))}_\text{accumulating poisoning effects for the trigger batch}\right],
\end{equation}

Following~\citet{pang2021accumulative}, we adopt the burn-in phase that pretrains the model for 40 epochs. Then we begin to inject the accumulative poison samples~\citep{pang2021accumulative}. Specifically, the crafted sample is generated by PGD~\citep{Madry_adversarial_training} under the $\ell_\infty$-norm constraint. Since its poisoning target is the single-step drop of model accuracy, the poisoning effects of the secretly injected data will be accumulated and triggered in the final batch (termed as trigger batch). To simulate the monitor process in real-time data streaming, this final batch will be triggered when the training loss is amplified by a threshold of previous poison samples (the same as threshold in~\citet{pang2021accumulative}).

\subsection{Algorithm Realization of DSC}

Here we provide the detailed realization of our proposed Discrepancy-aware Sample Correction in Algorithm~\ref{alg:dsc}.

\begin{algorithm}[t!]
   \caption{Discrepancy-aware Sample Correction (DSC)}
   \label{alg:dsc}
   {\bf Input:} data streaming $S = \{(\bx_i, y_i) \}^{n}_{i=1}$, learning rate $\eta$, number of epochs $T$, batch size $m$, number of batches $M$, data ${x}\in \cX$, label $y \in \cY$, victim model $\theta$, loss function $\ell$, PGD step $K$, perturbation bound $\epsilon$, step size $\delta$, projection opt. $\Pi$, Memorization Discrepancy threshold $P$, auxiliary historical model $\theta^*$. \\
   {\bf Output:} model $\theta^{T}$;
\begin{algorithmic}[1]
  \FOR{epoch $= 1$, $\dots$, $T$}
    \FOR {mini-batch $=1$, $\dots$, $M$ }
    \STATE Sample a mini-batch $\{(\bx_i, y_i) \}^{m}_{i=1}$ from $S$
        \FOR{$i = 1$, $\dots$, $m$ (in parallel) }
         \STATE \colorbox[RGB]{239,240,241}{Obtain the corrected sample $\bxtidle_i$ of $x_i$:}
         \STATE $\Tilde{{x_i}} \gets {x_i}, n=1$
    \WHILE{$\mathbb{D}(f(\Tilde{{x_i}}; \theta),f(\Tilde{{x_i}}; \theta^*))>P$ \text{and} $n<K$}
   \STATE $\Tilde{{x_i}} \gets \Pi_{\mathcal{B}[{x_i},\epsilon]}\big( \Tilde{{x_i}} - \delta \cdot \text{sign}(\nabla_{\Tilde{{x_i}}} \ell(f(\Tilde{{x_i}}), y))  \big)$
   \STATE $n=n+1$
    \ENDWHILE
         \ENDFOR
    \STATE $\mathbf{\theta} \gets \mathbf{\theta} - \eta \nabla_{\mathbf{\theta}} \ell(f_{\mathbf{\theta}}(\bxtidle_i), y_i)$
  \ENDFOR
 \ENDFOR
\end{algorithmic}
\end{algorithm}

\subsection{Comparison of Training Time}

In this part, we check the training time of different defenses with the accumulative poisoning attacks. For our proposed DSC which incorporates Memorization Discrepancy in identifying the incoming samples on the data streaming, the cost of the backtracked historical model mainly lies in the storage to save the historical model checkpoints. However, considering that in practice it is common to save the checkpoints regularly during training, such cost of our method is acceptable. Here we report the training time of each method in Table~\ref{table:training_time} to give a more intuitive comparison. The experiment setups keep the same as Table~\ref{table:exp_performance_cifar10}. According to the results, we can see that DSC requires slightly more time than other methods.

\begin{table}[!ht]
    \centering
    \caption{Comparison of per mini-batch training and the accuracy (\%) after the poisoning attack across different datasets.}
    \vspace{2mm}
    \label{table:training_time}
    \begin{tabular}{c|c|c|c}
    \toprule[1.7pt]
    \midrule[0.6pt]
    Dataset  &  Method & Training Time (seconds) & Acc. +Tigger \\
    \midrule[0.6pt]
    \multirow{4}{*}{CIFAR-10} & ST & 0.0206 & 50.4 \\
    ~ & GC & 0.0253 & 75.1\\
    ~ & AT &0.3735 & 75.3\\
    ~ & DSC &0.3241  & 77.3\\
    \midrule[0.6pt]
    \multirow{4}{*}{CIFAR-100} & ST & 0.0189 & 32.6 \\
    ~ & GC & 0.0314 & 43.8\\
    ~ & AT & 0.3732 & 44.4 \\
    ~ & DSC & 0.2948 & 45.4\\
    \midrule[0.6pt]
    \multirow{4}{*}{SVHN} & ST & 0.0198 & 70.4  \\
    ~ & GC & 0.0250 & 88.3\\
    ~ & AT & 0.3630  & 88.7 \\
    ~ & DSC &0.0689 & 88.8\\
    \bottomrule[1.7pt]
\end{tabular}
\end{table}

\subsection{Extra Validation of the Threshold $P$}

We conduct more experiments to present Eq.~\eqref{eq:objective} of our DSC. First, we check the trend of discrepancy across different datasets in Table~\ref{table:exp_trend_data}, and the results on natural data confirm that it shares a similar trend (increasing along the training process) across different datasets, while the specific threshold $P$ needs different setups due to different datasets. Second, we conduct experiments about hyperparameter tuning in the proper thresholds $P$ and compare the performance on clean oracle and poisoned training data, respectively. Note that, in Table~\ref{table:exp_range_p}, there are no further results when the test accuracy drops to a certain level (indicated by "-"). The results show that the performance is stable during the specific range of the threshold $P$ value. To be more specific. Similar to the hyperparameters of gradient clipping and adversarial training, the threshold $P$ can not be too high to lose control on correcting the poisoned data and also needs to be not too low to induce over-calibration on the clean sample. It is consistent with the results presented in the left-middle panel of Figure~\ref{fig:ablation_study}.

\begin{table}[!ht]
    \centering
    \caption{The discrepancy trend of batch data across different datasets.}
    \label{table:exp_trend_data}
    \vspace{2mm}
    \resizebox{\textwidth}{!}{
    \begin{tabular}{c|c|c|c|c|c|c|c|c|c|c|c|c}
    \toprule[1.7pt]
    \midrule[0.6pt]
        Dataset & Discrepancy on/Batch & 1 & 2 & 3 & 4 & 5 & 6 & 7 & 8 & 9 & 10 & 11 \\ 
        \midrule[0.6pt]
        CIFAR-10 & Clean & 0.2941 & 0.2996 & 0.4271 & 0.4487 & 0.4489 & 0.6101 & 0.6083 & 0.5755 & 0.6505 & 0.8110 & 0.7807 \\
        CIFAR-100 & Clean & 1.7981 & 1.9814 & 2.0760 & 2.2362 & 2.4590 & 2.9175 & 3.1986 & 3.3778 & 2.9070 & 3.3067 & 3.6144 \\
        SVHN & Clean & 0.0784 & 0.0793 & 0.1007 & 0.1221 & 0.0922 & 0.1433 & 0.1249 & 0.1588 & 0.1468 & 0.1531 & 0.2175 \\
    \bottomrule[1.7pt]
    \end{tabular}}
\end{table}

\begin{table}[!ht]
    \centering
    \caption{Test accuracy during the training process w.r.t. hyperparameter tuning on the proper thresholds.}
    \label{table:exp_range_p}
    \vspace{2mm}
    \begin{tabular}{c|c|c|c|c|c|c|c|c|c|c}
    \toprule[1.7pt]
    \midrule[0.6pt]
        Dataset & Data & Threshold $P$/Batch $m$ & 1 & 2 & 3 & 4 & 5 & 6 & 7 & 8 \\ 
        \midrule[0.6pt]
        \multirow{5}{*}{CIFAR-10} & Clean & 0.8+0.02$m$ & 0.863 & 0.867 & 0.871 & 0.872 & 0.873 & 0.873 & 0.874 & 0.873 \\
        ~ & Clean & 0.5+0.02$m$ & 0.863 & 0.867 & 0.871 & 0.872 & 0.873 & 0.873 & 0.874 & 0.871 \\ 
        ~ & Clean & 0.4+0.02$m$ & 0.863 & 0.867 & 0.873 & 0.873 & 0.874 & 0.871 & 0.869 & 0.868 \\ 
        ~ & Clean & 0.3+0.02$m$ & 0.863 & 0.867 & 0.871 & 0.872 & 0.87 & 0.87 & 0.867 & 0.867 \\ 
        ~ & Clean & 0.1+0.02$m$ & 0.863 & 0.865 & 0.865 & 0.862 & 0.861 & 0.859 & 0.858 & 0.857 \\
        \midrule[0.6pt]
        \multirow{5}{*}{CIFAR-10} & Poison & 0.8+0.02m & 0.863 & 0.841 & 0.76 & 0.665 & - & - & - & - \\ 
        ~ & Poison & 0.5+0.02$m$ & 0.863 & 0.859 & 0.842 & 0.812 & 0.771 & - & - & - \\ 
        ~ & Poison & 0.4+0.02$m$ & 0.863 & 0.859 & 0.842 & 0.812 & 0.771 & - & - & - \\ 
        ~ & Poison & 0.3+0.02$m$ & 0.863 & 0.859 & 0.842 & 0.81 & 0.77 & - & - & - \\ 
        ~ & Poison & 0.1+0.02$m$ & 0.863 & 0.859 & 0.842 & 0.81 & 0.77 & - & - & - \\
    \bottomrule[1.7pt]
    \end{tabular}
\end{table}

\subsection{Ablations about Attack Success}

We have conducted extra ablation study on evaluating the attack success (keep the same setups with the left middle panel of Figure~\ref{fig:ablation_study}), and summarize the results in Table~\ref{table:attack_success_rate}. The attack success rate here is defined as the percentage of received examples that can circumvent the defense method with specific threshold. The results show there is a trade-off between the model accuracy and the attack success rate. To be specific, it is due to the critical characteristic of our Memorization Discrepancy on clean samples and poison samples. The lower threshold tend to cover the correction ability of AT that indiscriminately treat all examples as poison sample, while the higher threshold tend to behave as the ST. As for the defense for controlling the attack success rate, it can be further designed referring to other specific techniques to utilize the critical nature of our Memorization Discrepancy.

\begin{table*}[h!]
\renewcommand\arraystretch{1.0}
\centering
\caption{Evaluations (\%) about attack success w.r.t. the threshold $P$ in CIFAR-10.}
\vspace{2mm}
\label{table:attack_success_rate}
\begin{tabular}{c|c|c}
\toprule[1.7pt]
\midrule[0.6pt]
 Threshold of Different Level &  Accuracy & Attack Success Rate \\
\midrule[0.6pt]
High P & \textbf{87.1} & 87.5 \\
Medium P & 83.7 & 37.5 \\
Low P & 80.8 & \textbf{12.5} \\

\bottomrule[1.7pt]
\end{tabular}
\vspace{-0mm}
\end{table*}

\subsection{Ablations about Different Auxiliary Models}
\label{app:aux_model}

As for the phenomenon of Memorization Discrepancy (as shown in the left panel of Fig 5), it can be found in other settings that using the model checkpoint in epoch $E\;\;(E\in[1, \text{present}])$ as the auxiliary model. The overall results show a similar trend as the left panel of Figure~\ref{fig:ablation_study}. In our main experiments in Table~\ref{table:exp_performance_cifar10}, we use the checkpoint at Epoch 20 for CIFAR-10/100 as our auxiliary model. We also conduct the experiments on CIFAR-10 using the different auxiliary model with the same threshold to see how it affect our DSC and summarize the results in Table~\ref{table:performance_using_diff_auxiliary}. The results show that the threshold may need adjustment when we choose the different auxiliary models to compute the Memorization Discrepancy. It can be found if we backtrack the earlier checkpoint (i.e., Epoch 10), the threshold estimated using checkpoint at Epoch 20 maybe still compatible. However, it is not appropriate when we use the later checkpoint (i.e., Epoch 30). Using the different auxiliary models needs further estimate the threshold by a small batch of clean data used in the previous training stage.

\begin{table*}[h!]
\renewcommand\arraystretch{1.0}
\centering
\caption{Performance of DSC using the different auxiliary models with the same/different threshold setup.}
\vspace{2mm}
\label{table:performance_using_diff_auxiliary}
\begin{tabular}{l|c|c|c|c|c}
\toprule[1.7pt]
\midrule[0.6pt]
Auxiliary Epoch &  Acc. Start & Batch & Acc. +Poison & Acc. + Trigger & $\Delta$ \\
\midrule[0.6pt]
10 & 86.3\% & 3 & 80.9$\pm$0.09\% &  77.1$\pm$0.16\% &  -3.9$\pm$0.12\%  \\
10 [adjust P] & 86.3\% & 3 & 81.4$\pm$0.05\% &  77.5$\pm$0.23\% &  -3.8$\pm$0.21\%  \\
\midrule[0.6pt]
20 & 86.3\% & 3 &  81.2$\pm$0.35\%  & 77.3$\pm$0.58\%  & -3.8$\pm$0.31\%   \\
20 [adjust P] & 86.3\% & 3 & 81.0$\pm$0.09\% &  77.8$\pm$0.22\% &  -3.6$\pm$0.05\%  \\
\midrule[0.6pt]
30 & 86.3\% & 3 &  77.3$\pm$1.25\%  &  63.6$\pm$3.39\% &  -13.6$\pm$4.64\%  \\
30 [adjust P] & 86.3\% & 3 & 80.2$\pm$0.12\% &  76.8$\pm$0.27\% &  -4.0$\pm$0.32\%  \\

\bottomrule[1.7pt]
\end{tabular}
\vspace{-0mm}
\end{table*}

\subsection{Comparisons about Other AT Variants}

As for our proposed DSC, the critical part is to selectively correct the potential poison samples using the Memorization Discrepancy. We can extend those AT variants~\citep{Zhang_trades,wang2020improving_MART,ding2020mma,zhang2020fat} to be sample corrections in our problem setting. We conduct the comparison on CIFAR-10 dataset and summarize the results in Table~\ref{table:performance_using_at_variant}. Since all those variants are designed for further improving adversarial robustness or other issues in adversarial training, its objective all introduce other optimization parts which sacrifice the natural performance, the results also demonstrate that the accuracy drop using these AT-variants-based methods for accumulative poisoning defense is more severe than the original AT.

\begin{table*}[h!]
\renewcommand\arraystretch{1.0}
\centering
\caption{Comparison with variants of AT methods for the sample correction.}
\vspace{2mm}
\label{table:performance_using_at_variant}
\begin{tabular}{c|c|c|c|c|c}
\toprule[1.7pt]
\midrule[0.6pt]
Method &  Acc. Start & Batch & Acc. +Poison & Acc. + Trigger & $\Delta$ \\
\midrule[0.6pt]
ST & 86.3\% & 1 & 75.7$\pm$3.33\% &  50.4$\pm$5.03\% &  -25.3$\pm$4.13\%  \\
AT & 86.3\% & 3 &  80.1$\pm$0.10\%  & 75.3$\pm$0.26\%  & -4.7$\pm$0.20\%   \\
TRADES & 86.3\% & 3 &  78.2$\pm$0.28\%  &  72.5$\pm$0.45\% &  -5.8$\pm$0.32\%  \\
MART & 86.3\% & 3 &  77.5$\pm$0.32\%  &  68.4$\pm$0.66\% &  -9.1$\pm$1.20\%  \\
MMD & 86.3\% & 3 &  77.2$\pm$0.81\%  &  71.4$\pm$0.77\% &  -5.8$\pm$0.89\%  \\
FAT & 86.3\% & 3 &  80.4$\pm$0.27\%  &  76.2$\pm$0.23\% &  -4.2$\pm$0.45\%  \\
DSC & 86.3\% & 3 &  \textbf{81.2$\pm$0.35\%}  &  \textbf{77.3$\pm$0.58\%} &  \textbf{-3.8$\pm$0.31\%}  \\

\bottomrule[1.7pt]
\end{tabular}
\vspace{-0mm}
\end{table*}



\subsection{Ablations about Black-box Setting}

Empirically, we also verify the effect of our proposed method on the extended black-box setting for accumulative poisoning attack, and summarize the results compared with White-box setting in Table~\ref{table:performance_using_diff_setting}. In this setting, we use other surrogate models (e.g., the historical model earlier than the current model stage) to generate the adversarial examples and feed them into the vaccine model. The results show that our DSC has a comparable defense effect to that of the white-box setting.

\begin{table*}[h!]
\renewcommand\arraystretch{1.0}
\centering
\caption{Comparison with variants of AT methods for the sample correction.}
\vspace{2mm}
\label{table:performance_using_diff_setting}
\resizebox{\textwidth}{!}{
\begin{tabular}{c|c|c|c|c|c|c}
\toprule[1.7pt]
\midrule[0.6pt]
Setting & White/Black &  Acc. Start & Batch & Acc. +Poison & Acc. + Trigger & $\Delta$ \\
\midrule[0.6pt]
CIFAR-10 (Clean Oracle) & - & 86.3\% & - & 84.7\% &  84.7\% &  -  \\
Accu. Poison (DSC) & White-box & 86.3\% & 3 &  81.2$\pm$0.35\%  & 77.3$\pm$0.58\%  & -3.8$\pm$0.31\%   \\
Accu. Poison (DSC) & Black-box [30] & 86.3\% & 3 &  81.7$\pm$0.23\%  & 78.2$\pm$0.14\%  & -3.5$\pm$0.12\%   \\
Accu. Poison (DSC) & Black-box [20] & 86.3\% & 3 &  82.0$\pm$0.11\%  & 78.9$\pm$0.23\%  & -3.1$\pm$0.08\%   \\
Accu. Poison (DSC) & Black-box [10] & 86.3\% & 3 &  82.5$\pm$0.02\%  & 79.7$\pm$0.03\%  & -2.8$\pm$0.05\%   \\
\bottomrule[1.7pt]
\end{tabular}}
\vspace{-0mm}
\end{table*}

\subsection{Empirical Evaluation of the Correction Condition}
\label{app:condition}

As for the hyper-parameters $\mu$ and $\tau$, in the burn-in phase that follows the~\cite{pang2021accumulative}, we can estimate them by using a small batch sample of clean data. According to the previous properties of the Memorization Discrepancy we observed, we can approximate the $\mu$ and $\tau$ by the value computed on the clean data in some period of the burn-in phase.  And we did not change the defense parameters between these two kinds of experiments for fair evaluation. To provide more informative results, we check the experiments for running the clean oracle with 50 batches of samples and summarize how often the threshold condition is satisfied during training in Table~\ref{table:evaluation_condition}. The results show that part of the clean samples is also affected by our DSC and their value satisfies the condition in Algorithm~\ref{alg:dsc}. For the experiments with clean oracle, we use the same threshold as the experiments on defending against the accumulative poisoning attack. It shows the selective mechanism based on the condition.

\begin{table*}[h!]
\renewcommand\arraystretch{1.0}
\centering
\caption{How often the threshold condition is satisfied during training?}
\vspace{2mm}
\label{table:evaluation_condition}
\begin{tabular}{c|c|c|c}
\toprule[1.7pt]
\midrule[0.6pt]
 Dataset &  Acc. Start & Acc. Oracle & Frequency (Satisfy the Correction Condition) \\
\midrule[0.6pt]
CIFAR-10 & 86.3\% & 84.7\% & 28\% \\
CIFAR-100 & 59.0\% & 55.0\% & 24\% \\
\bottomrule[1.7pt]
\end{tabular}
\vspace{-0mm}
\end{table*}

\subsection{Preliminary Exploration on Federated Setting}

Different from real-time data streaming, for the accumulative poisoning attacks in a federated setting, we need to adapt our method to the federated learning framework where we can not directly conduct the sample-wise correction. Specifically, we incorporate the proposed Memorization Discrepancy into the selective defense (e.g., Discrepancy-aware Gradient Clipping (DGC)) against accumulative poisoning attacks and conduct the experiments in the following table. We can see that the extended method can also perform comparable or better based on selectively adjusting the training.

\begin{table}[!ht]
    \centering
    \caption{Classification accuracy (\%) on CIFAR-10 during the accumulative phase for 500 steps. Our new information measure on learning dynamics with the historical model can serve as an auxiliary for gradient clipping operations.}
    \vspace{2mm}
    \begin{tabular}{c|c|c|c|c|c|c|c}
    \toprule[1.7pt]
    \midrule[0.6pt]
        CIFAR-10 & Method & 10 & 100 & 200 & 300 & 400 & 500 \\ 
    \midrule[0.6pt]    
        \multirow{3}{*}{Clean Oracle} & ST & 85.35 & 83.87 & 83.9 & 83.88 & 83.81 & 83.86 \\
        ~ & GC & 84.34 & 85.27 & 85.5 & 85.48 & 85.46 & \textbf{85.43} \\ 
        ~ & DGC & 84.34 & 85.31 & 85.49 & 85.5 & 85.45 & \textbf{85.43} \\ 
    \midrule[0.6pt]  
        \multirow{3}{*}{Accu. Poisoned} & ST & 84.65 & 69.96 & 68.74 & 69.36 & 69.31 & 69.23 \\ 
        ~ & GC & 84.88 & 84.27 & 83.14 & 81.78 & 80.15 & 78.95 \\ 
        ~ & DGC & 84.87 & 84.31 & 83.3 & 82.13 & 80.66 & \textbf{79.84} \\ 
        \bottomrule[1.7pt]
    \end{tabular}
\end{table}


\subsection{More Dynamics of the Memorization Discrepancy}
\label{app:more_dynamics}

In this part, we present more exploration about the dynamics of the proposed Memorization Discrepancy. For the poisoning generation, we follow the same malicious objective in Eq.~\eqref{eq:mal_1} and adopt~\citet{fowl2021adversarial} to generate the poison samples for presenting the discrepancy trend, i.e., generating the adversarial poison samples via the adversarial generation procedure~\citep{Madry_adversarial_training}. In Figure~\ref{fig:reason_0_app}, we change the backtracking interval $k$ during the historical 40 epochs. The differences between the Memorization Discrepancy of clean and poison samples approximately become more separable when we increase $K$. In Figure~\ref{fig:reason_1_app}, we fix the auxiliary model at Epoch 1 and investigate the value of Memorization Discrepancy using different intervals. The overall results show a similar trend with the previous analysis, that we can better utilize the model dynamics via enlarging the backtracking interval in computing the Memorization Discrepancy. In Figure~\ref{fig:reason_2_app}, we change the different auxiliary models from Epoch 1 to Epoch 28. Although there exists the same trend as the previous two explorations, the value of Memorization Discrepancy varies among the different auxiliary models. It can draw the same conclusion as the experiment in Appendix~\ref{app:aux_model} that we may need further estimate the appropriate threshold for distinguishing the clean and poison samples. The overall results demonstrate that model dynamics are aware of the imperceptible poison samples.

\begin{figure}[t!]
    \centering
    \vspace{4mm}
    \includegraphics[scale=0.19]{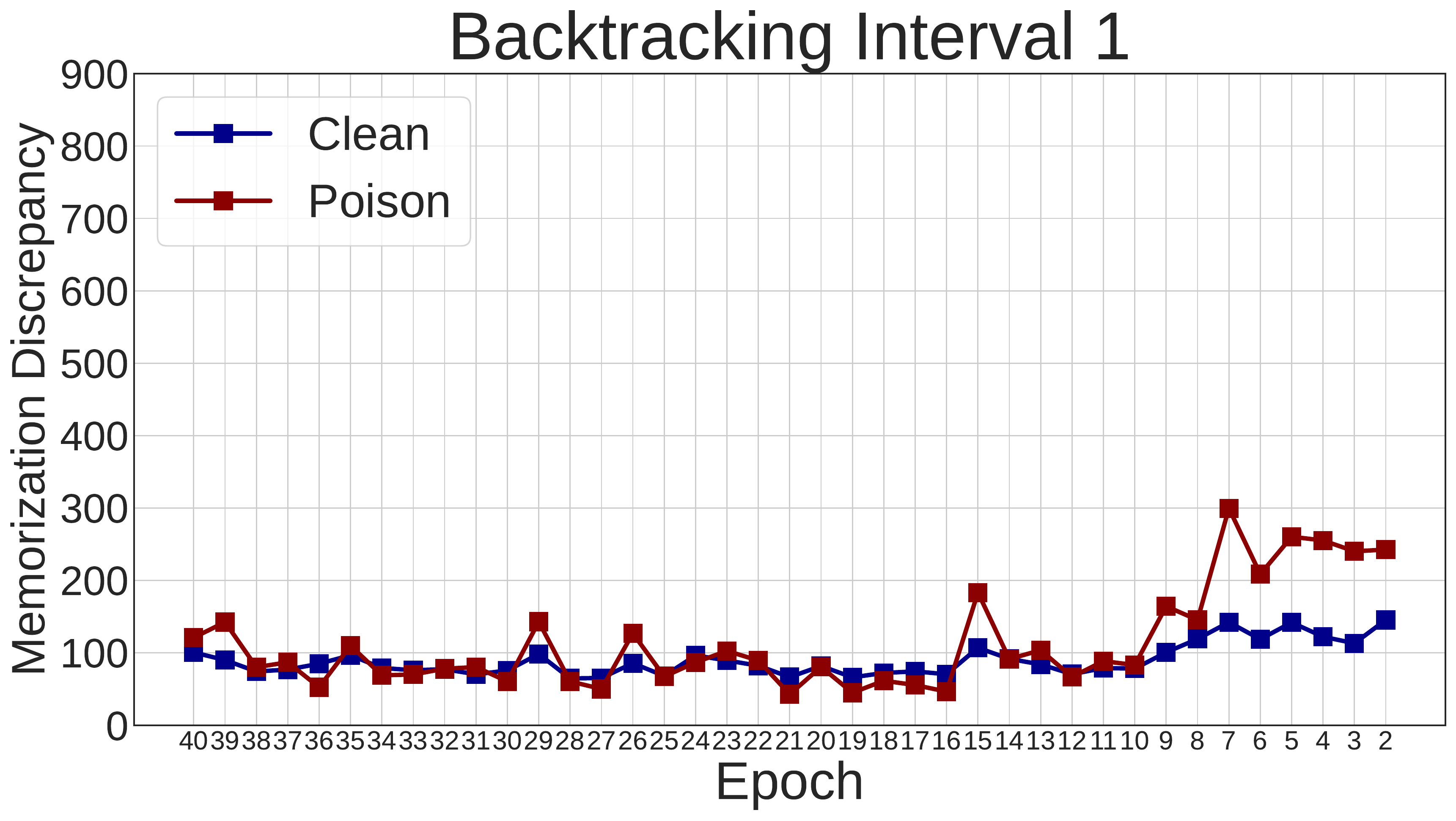}
    \includegraphics[scale=0.19]{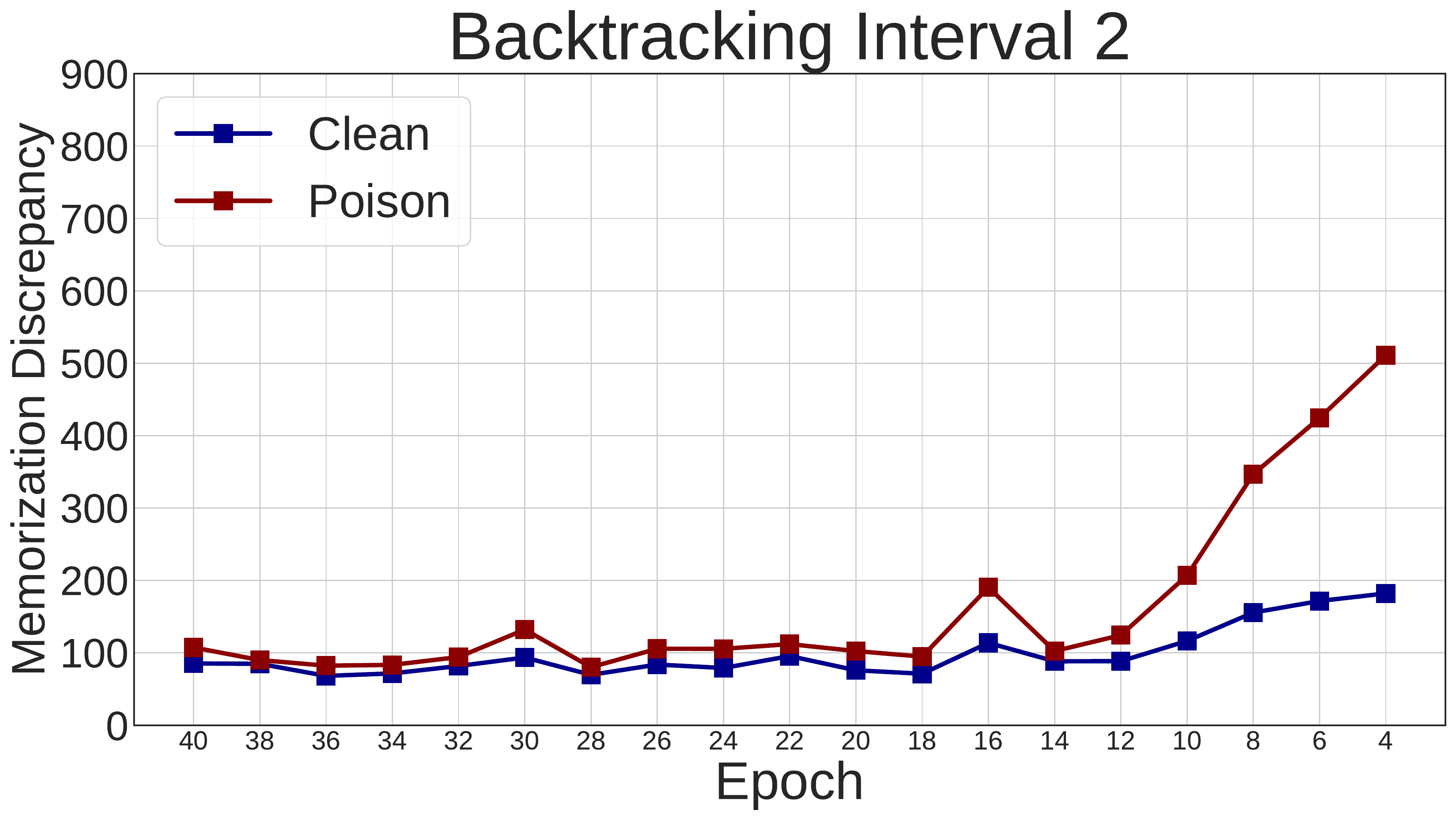}\\
    \vspace{2mm}
    \includegraphics[scale=0.19]{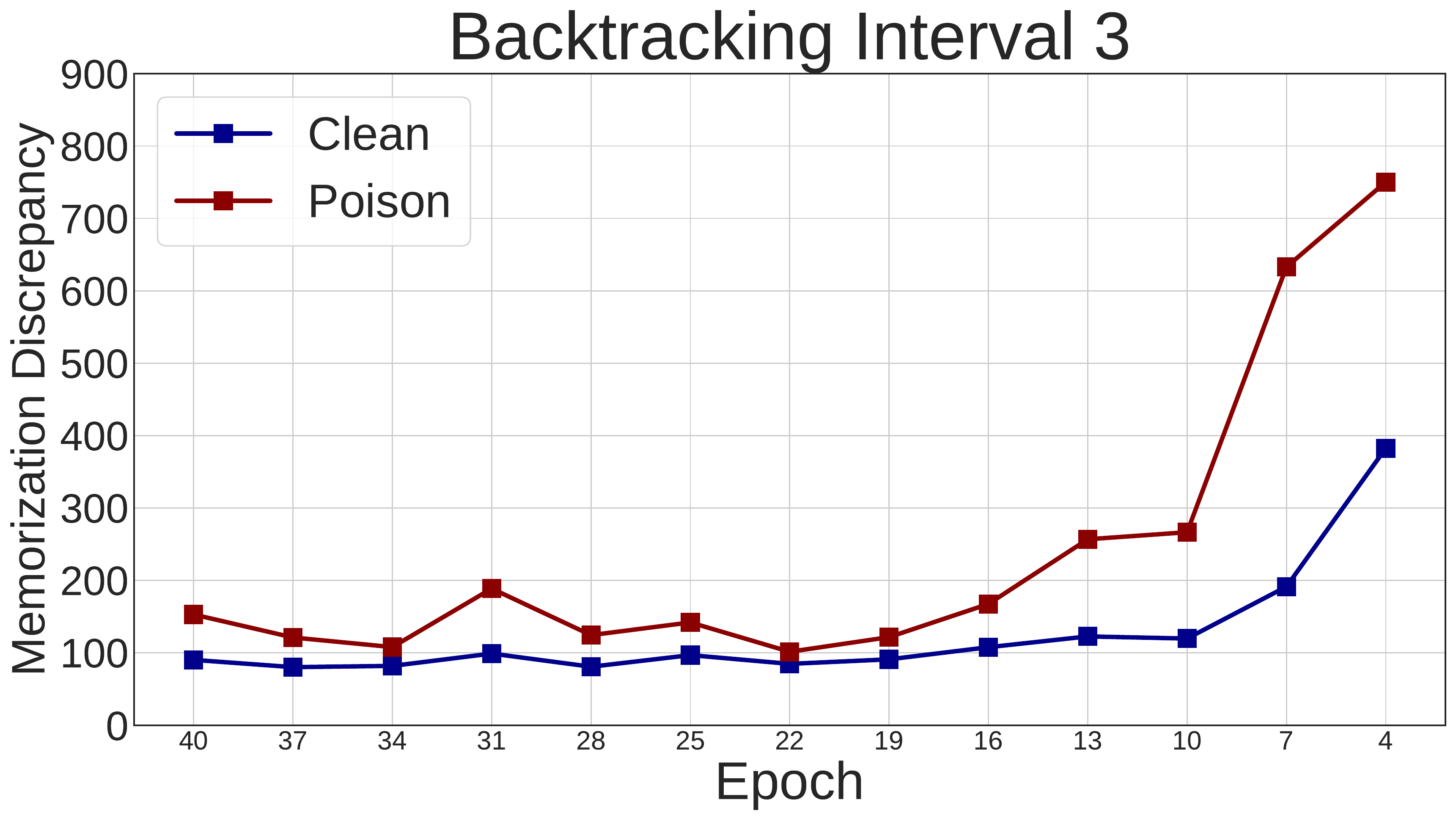}
    \includegraphics[scale=0.19]{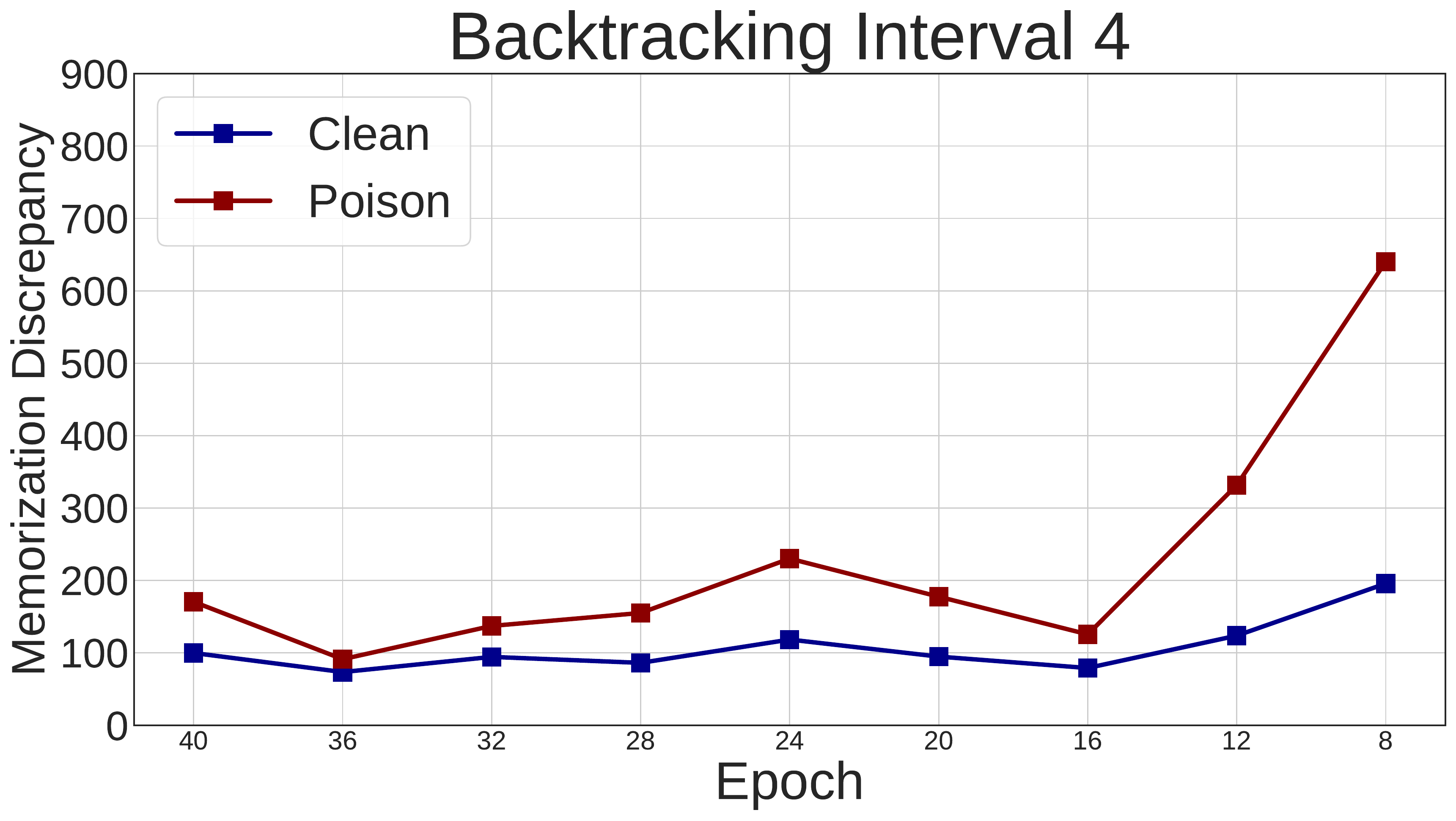}\\
    \vspace{2mm}
    \includegraphics[scale=0.19]{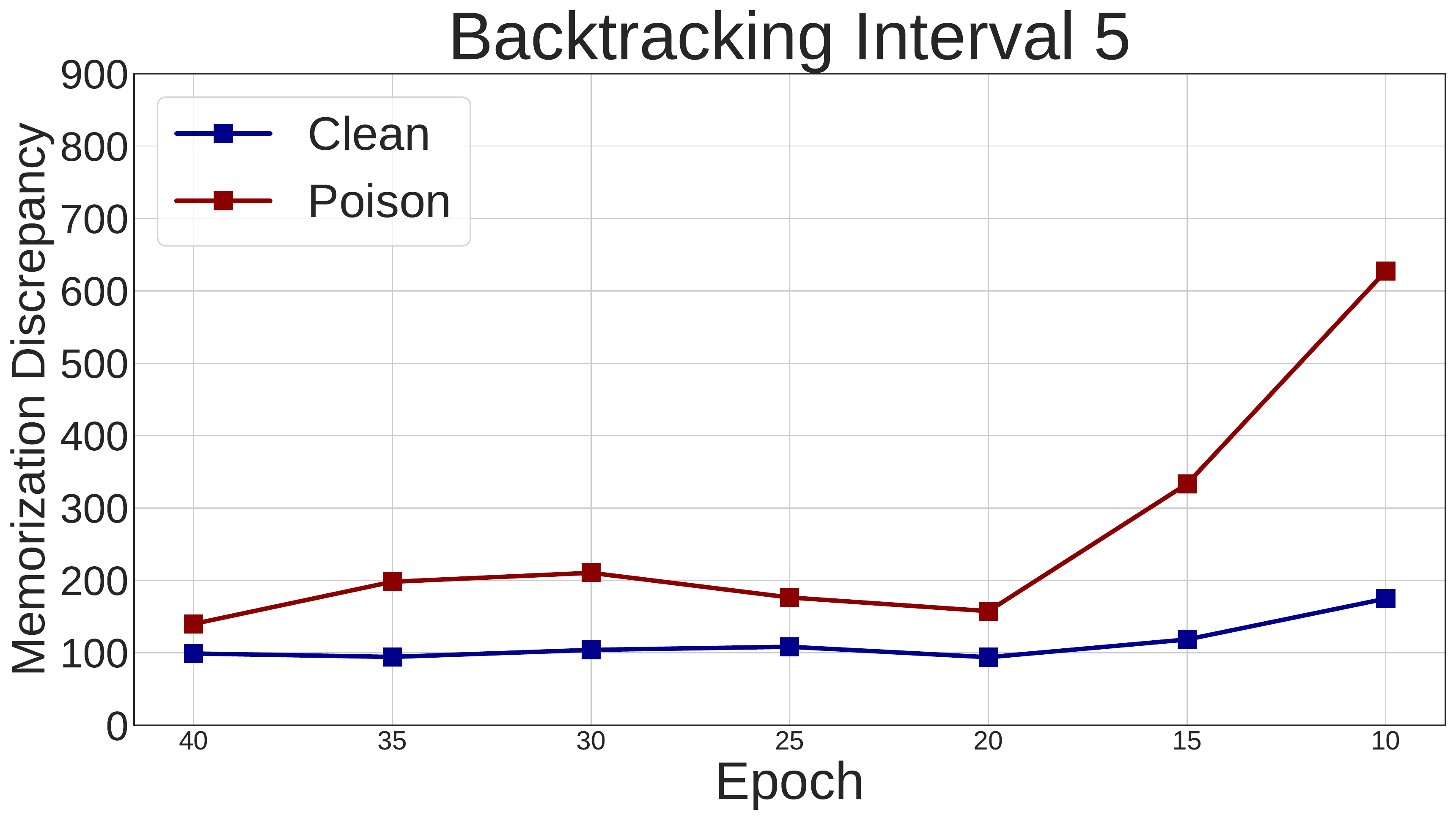}
    \includegraphics[scale=0.19]{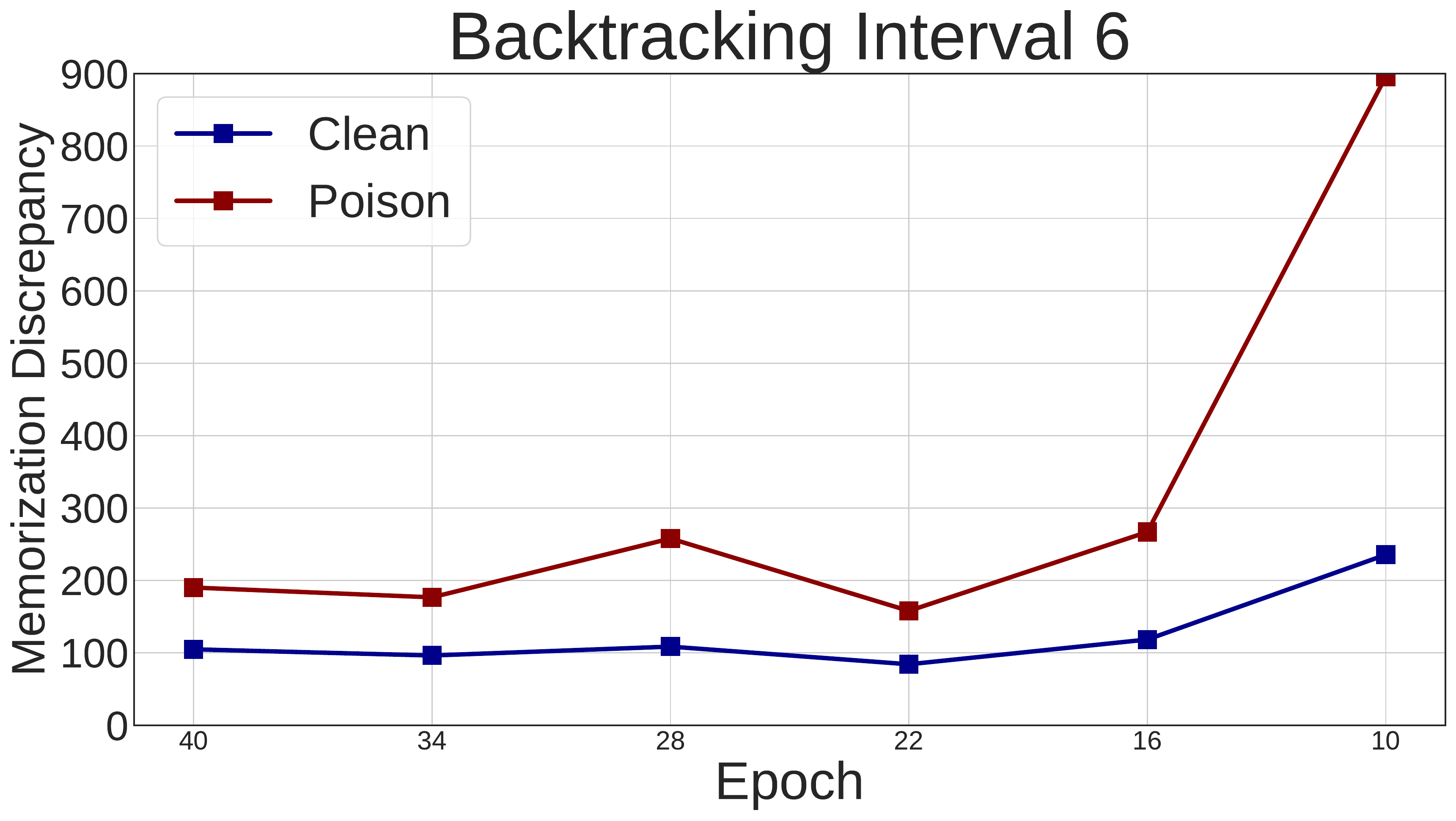}\\
    \vspace{2mm}
    \includegraphics[scale=0.19]{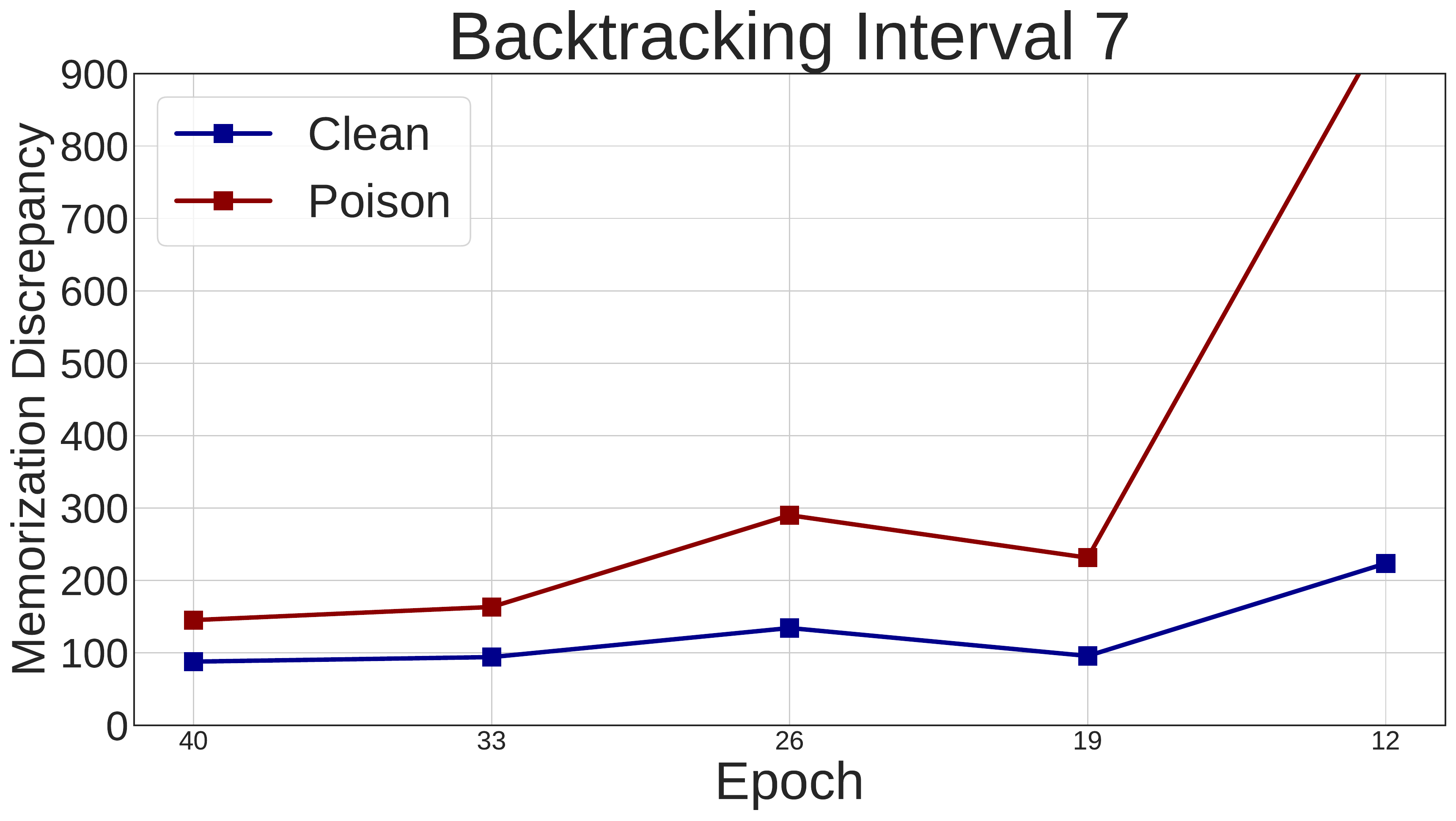}
    \includegraphics[scale=0.19]{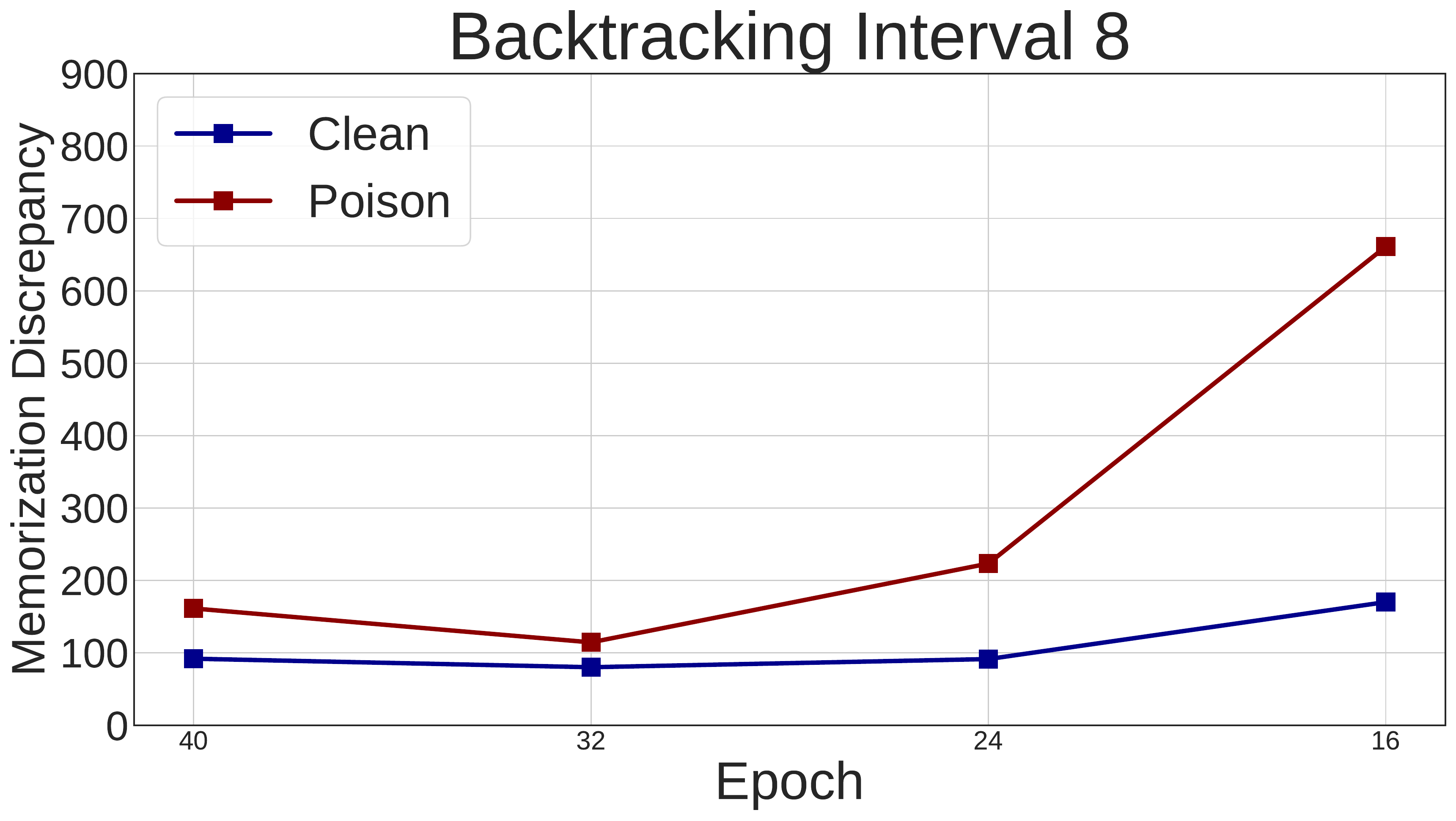}\\
    \caption{Dynamics of backtracking interval on Memorization Difference in CIFAR-10.}
    \label{fig:reason_0_app}
    \vspace{2mm}
\end{figure}

\begin{figure}[t!]
    \centering
    \vspace{4mm}
    \includegraphics[scale=0.19]{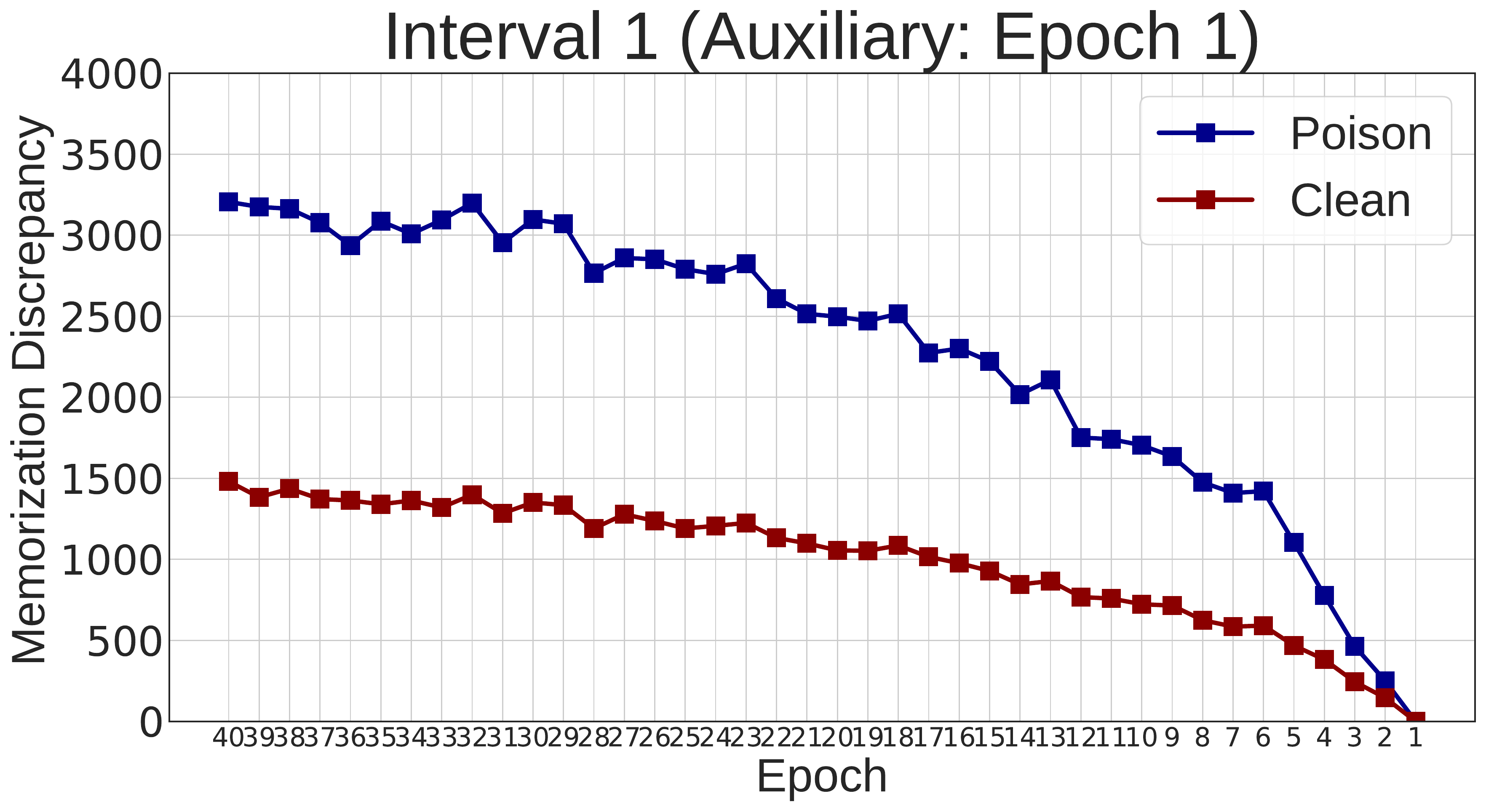}
    \includegraphics[scale=0.19]{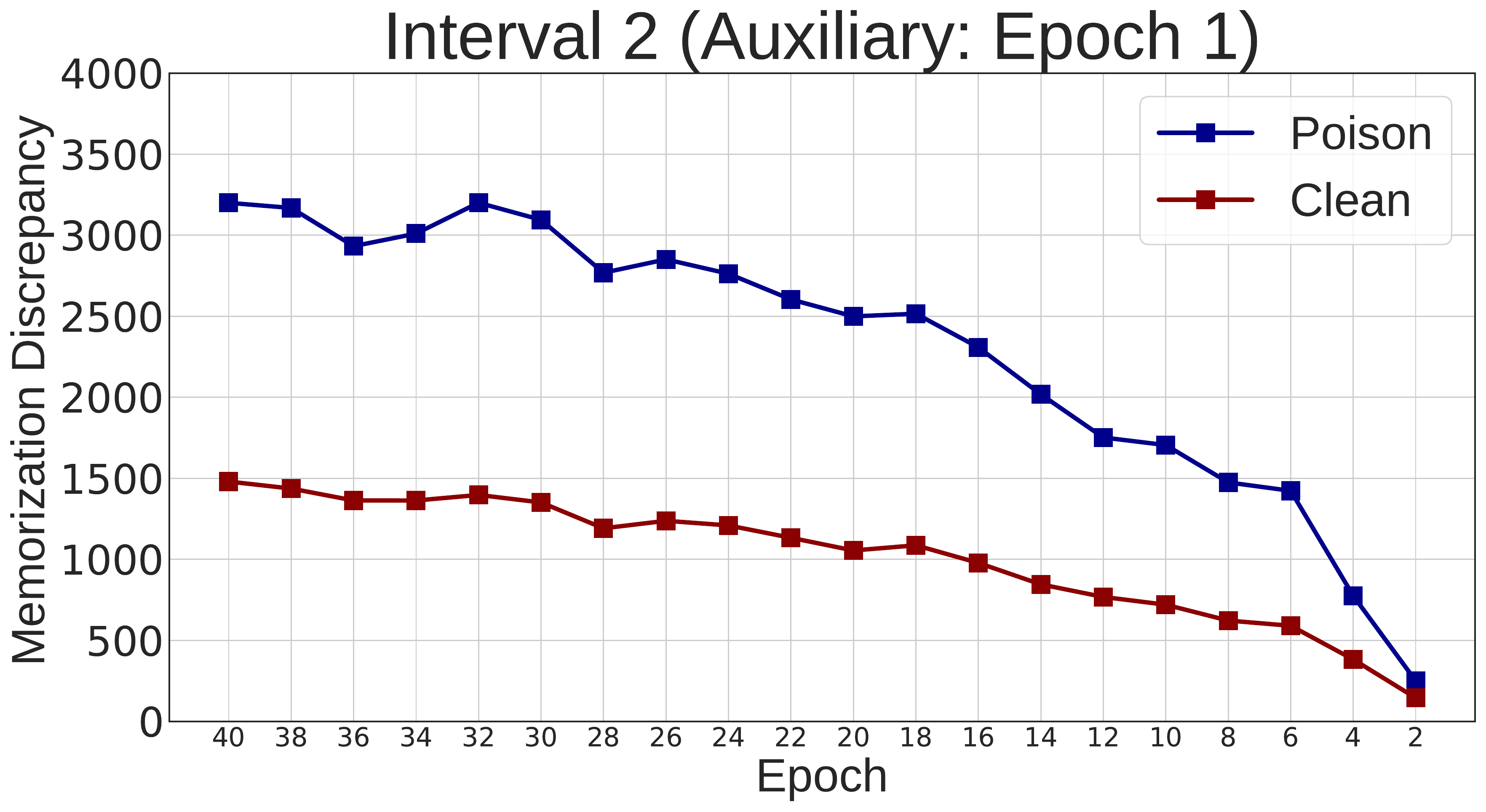}\\
    \vspace{2mm}
    \includegraphics[scale=0.19]{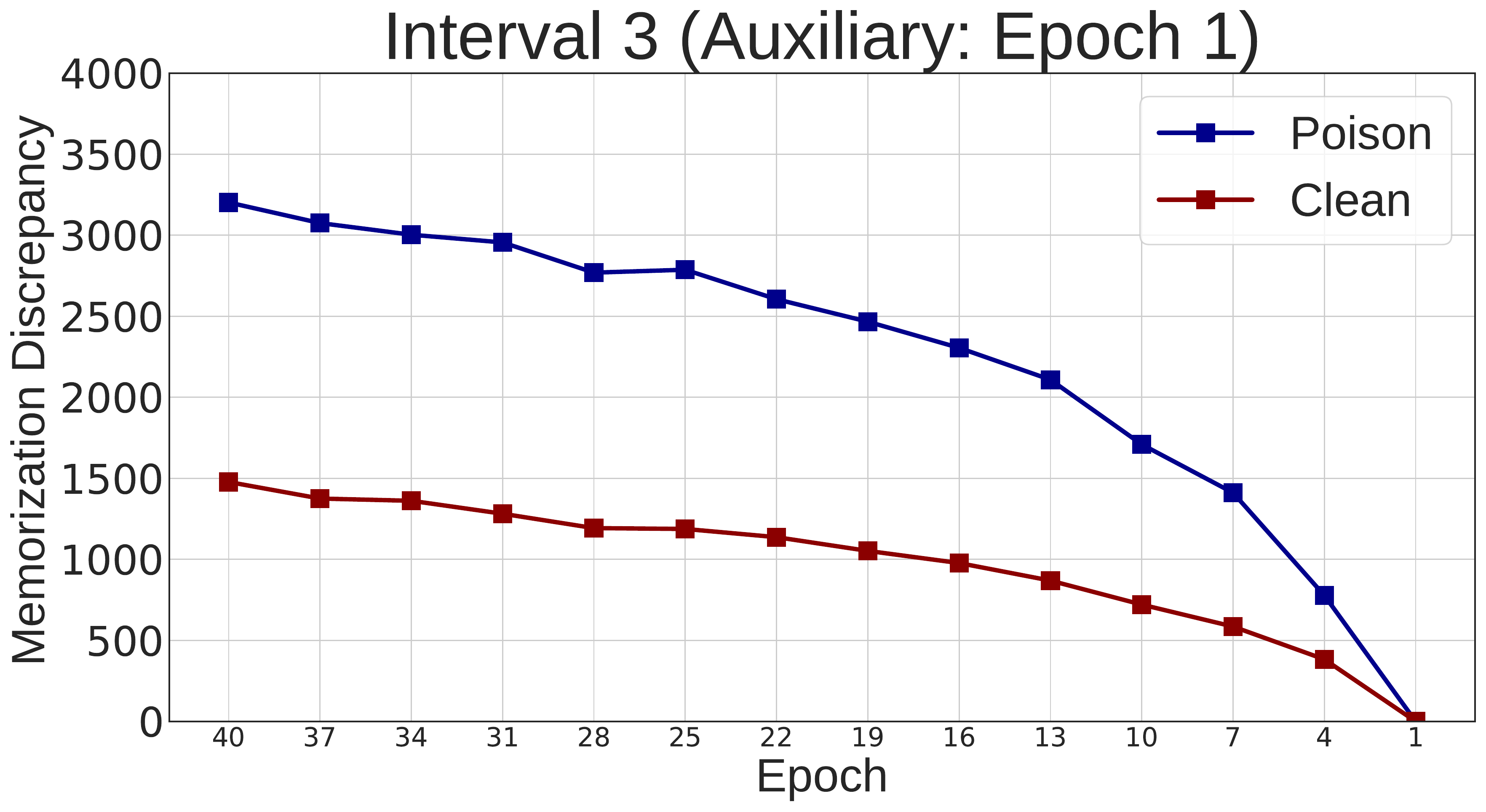}
    \includegraphics[scale=0.19]{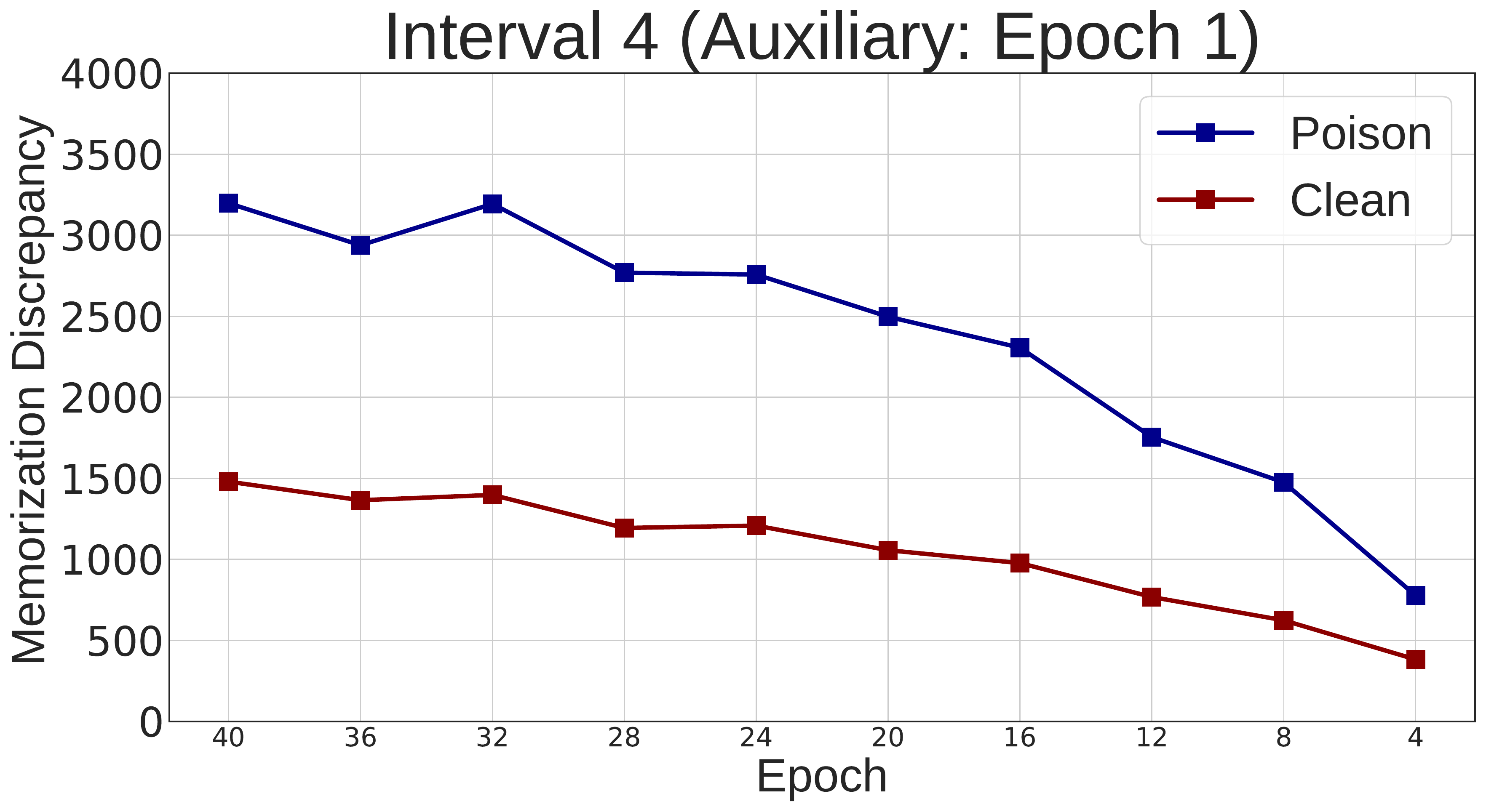}\\
    \vspace{2mm}
    \includegraphics[scale=0.19]{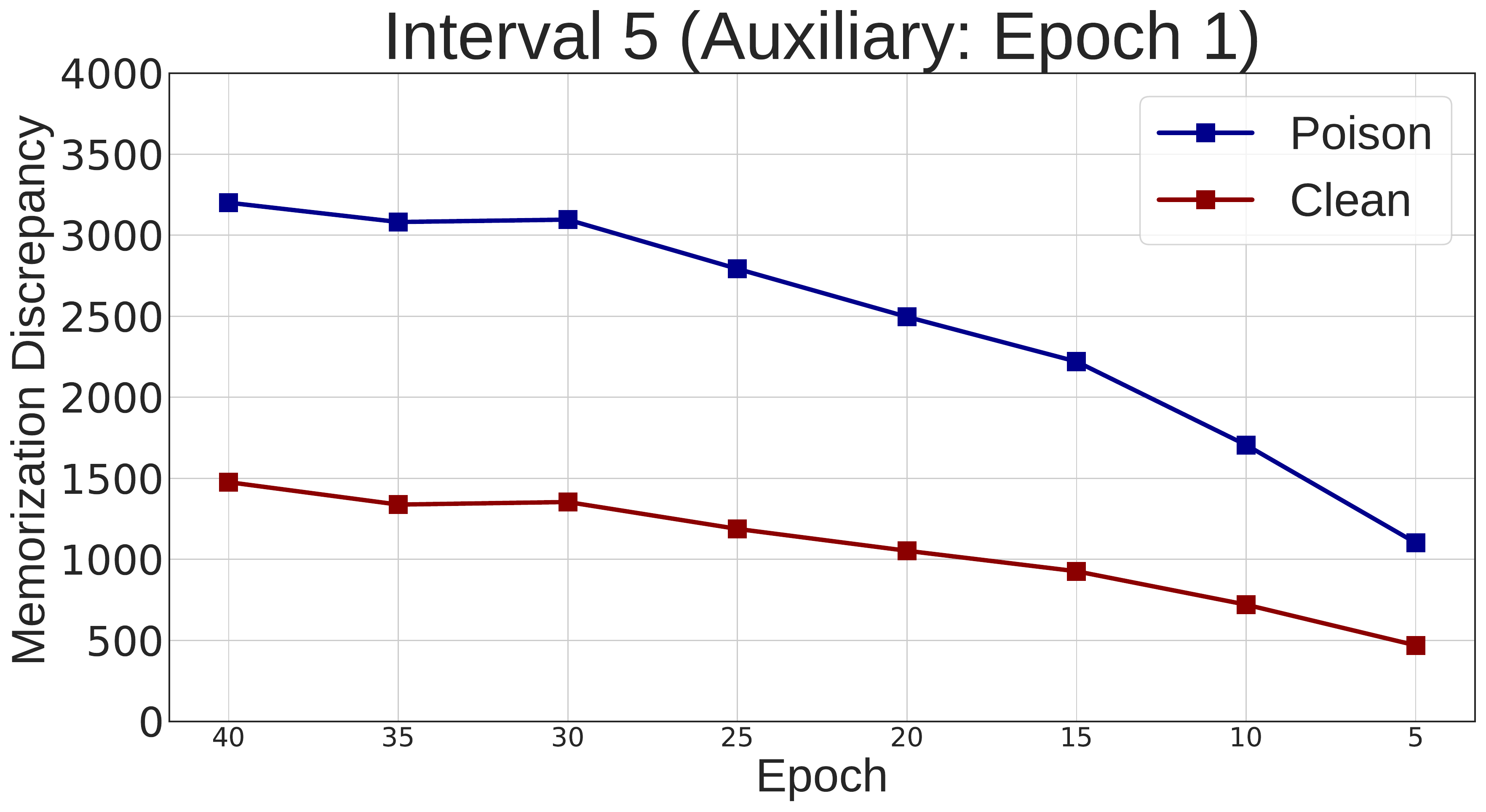}
    \includegraphics[scale=0.19]{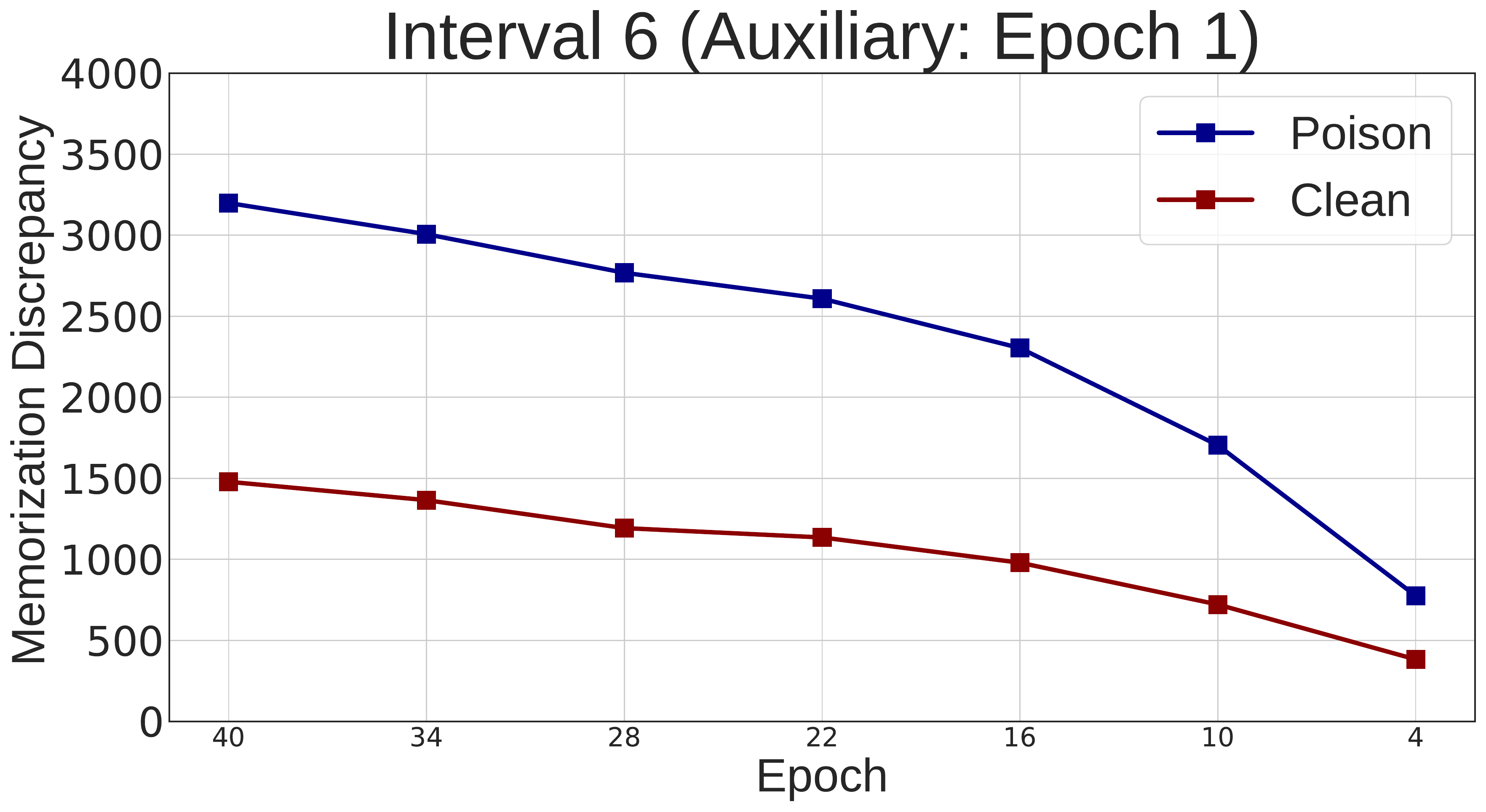}\\
    \vspace{2mm}
    \includegraphics[scale=0.19]{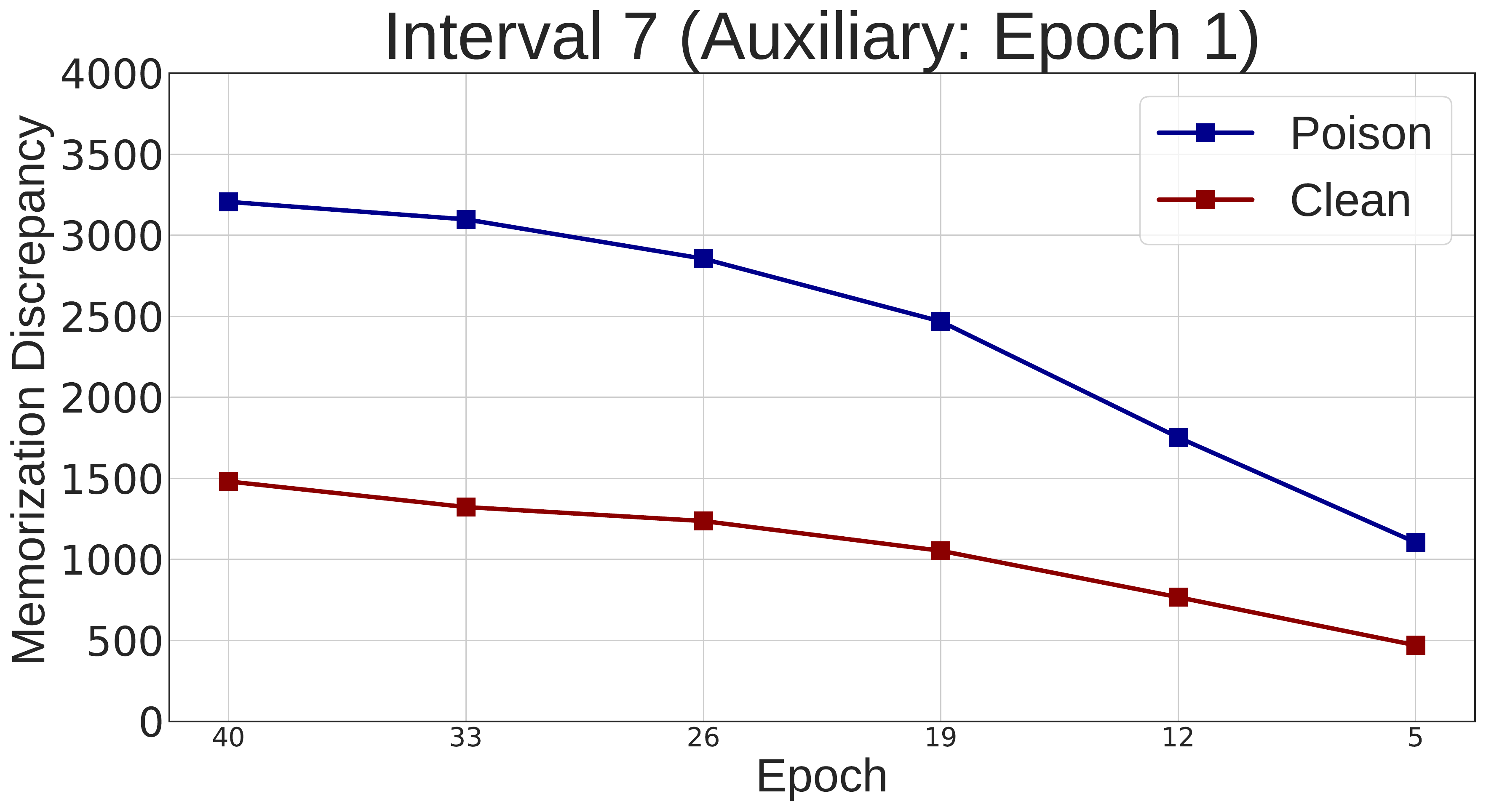}
    \includegraphics[scale=0.19]{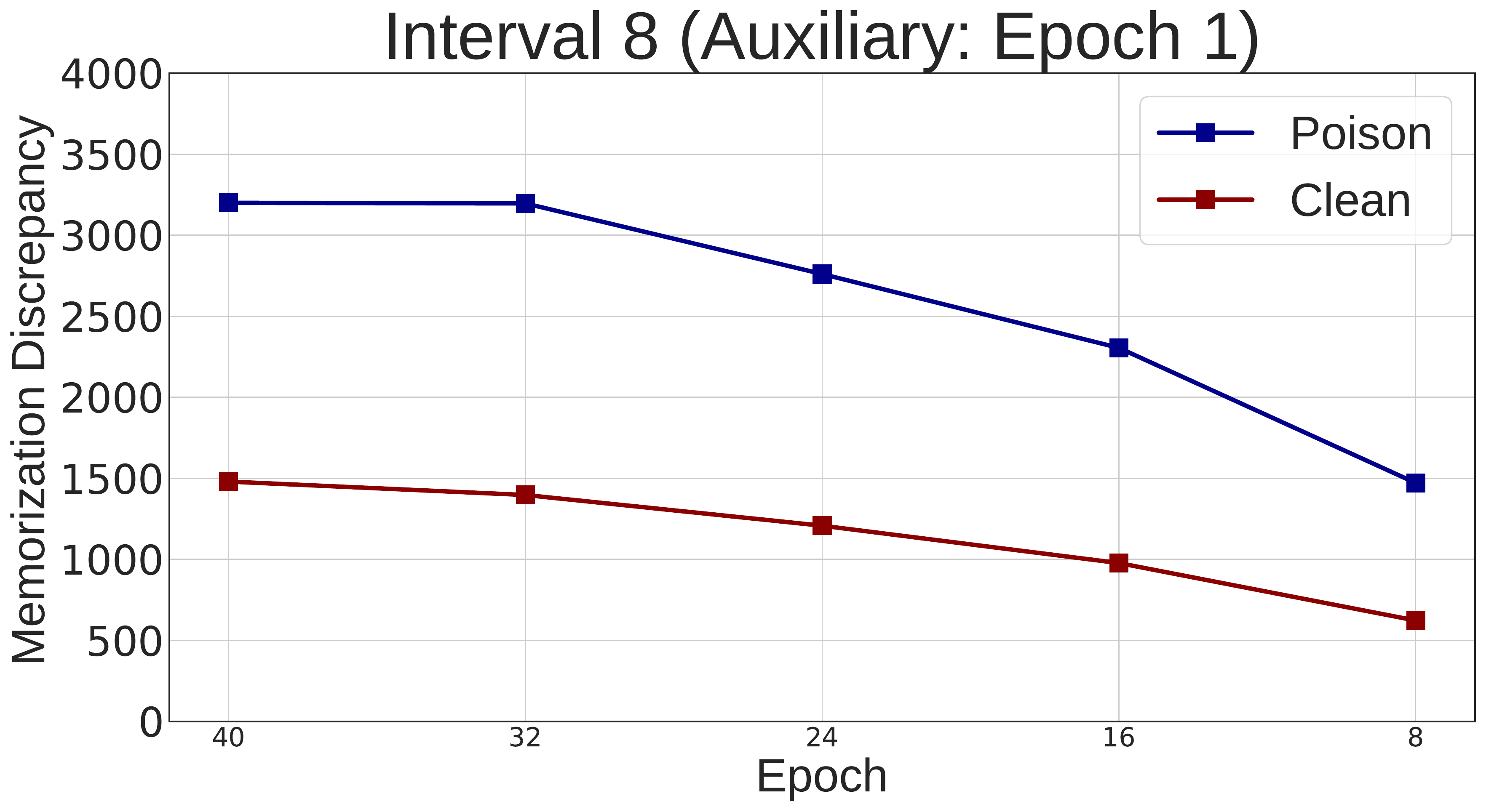}\\
    \caption{Dynamics of same auxiliary epoch on Memorization Difference in CIFAR-10.}
    \label{fig:reason_1_app}
\end{figure}

\begin{figure}[t!]
    \centering
    \vspace{4mm}
    \includegraphics[scale=0.19]{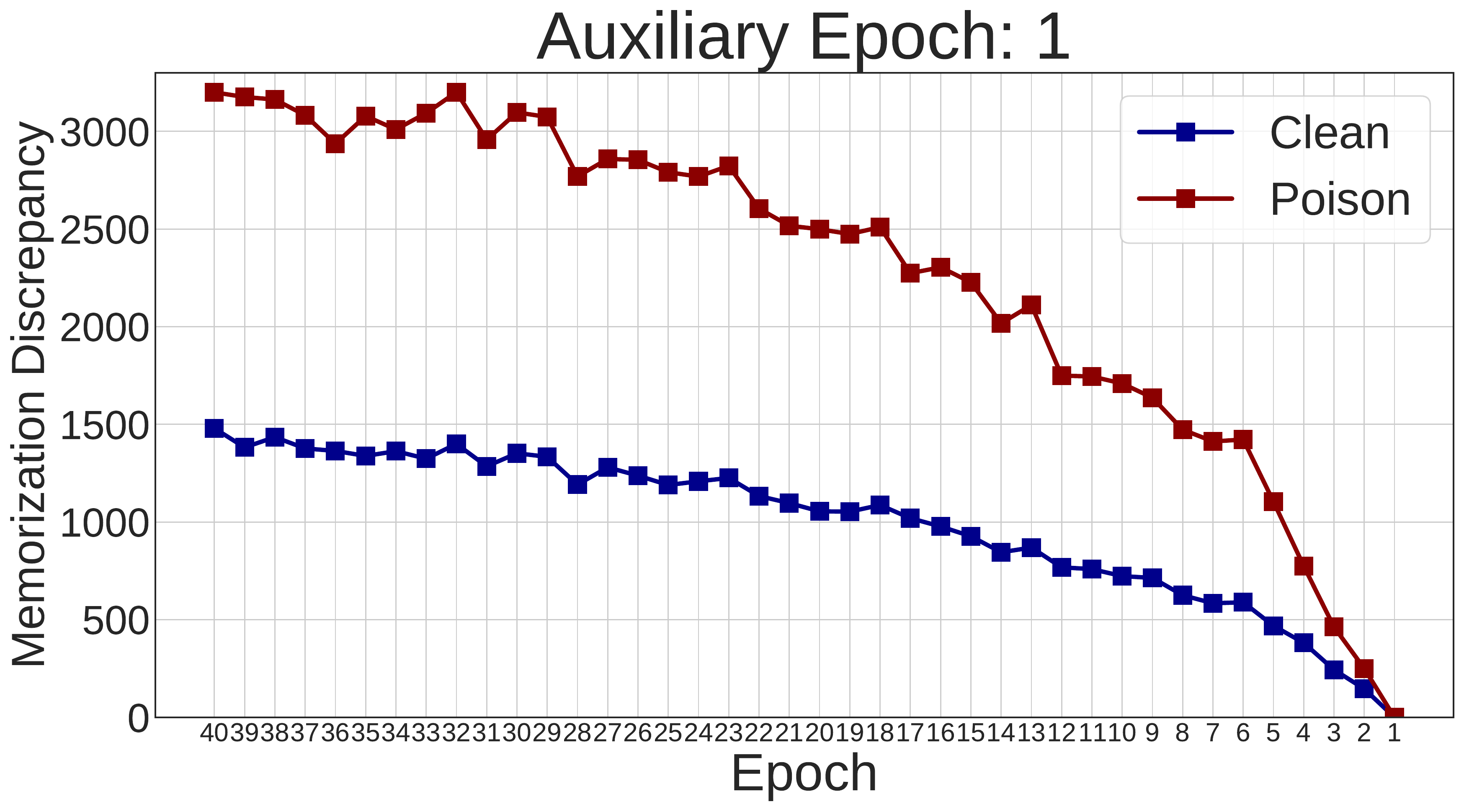}
    \includegraphics[scale=0.19]{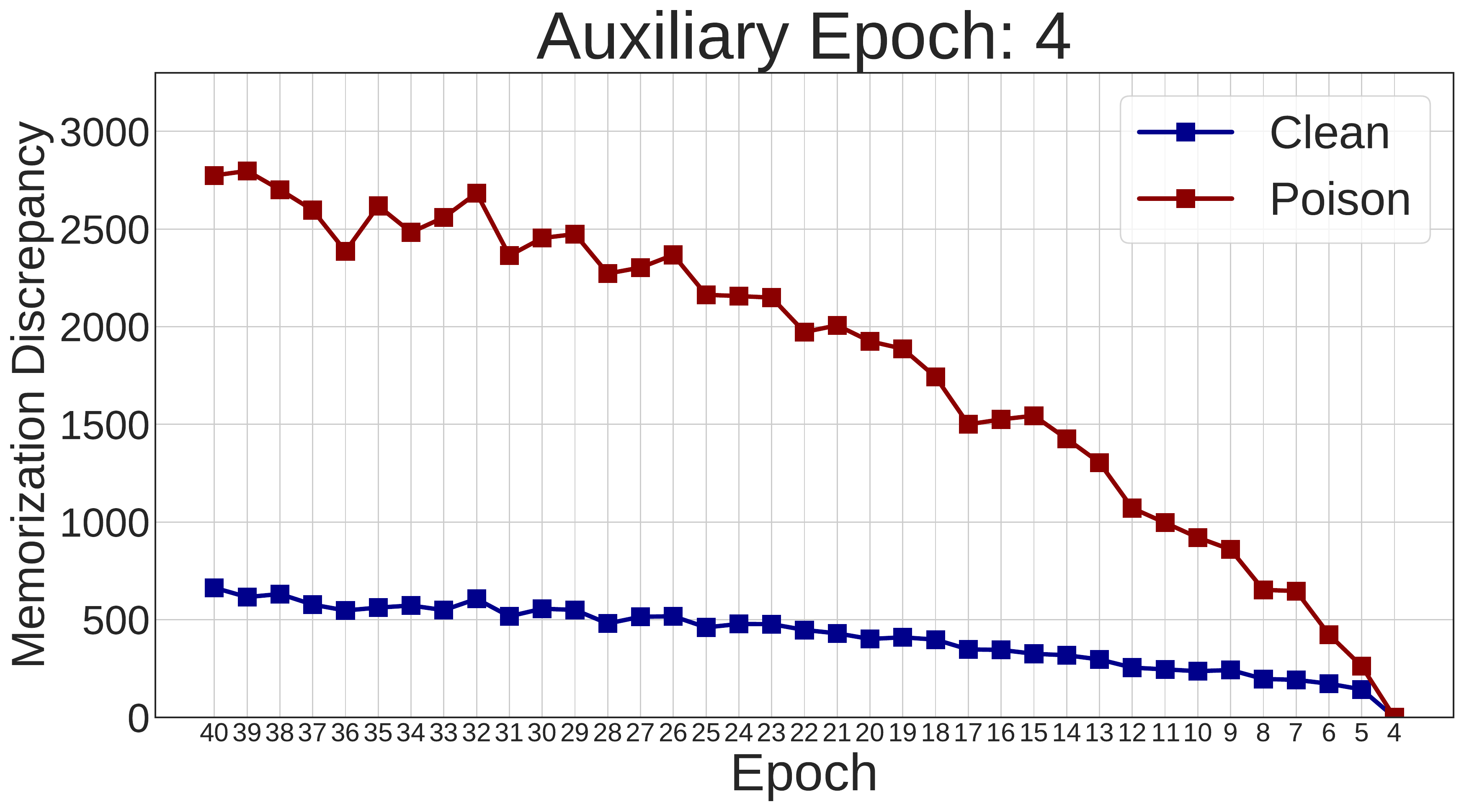}\\
    \vspace{2mm}
    \includegraphics[scale=0.19]{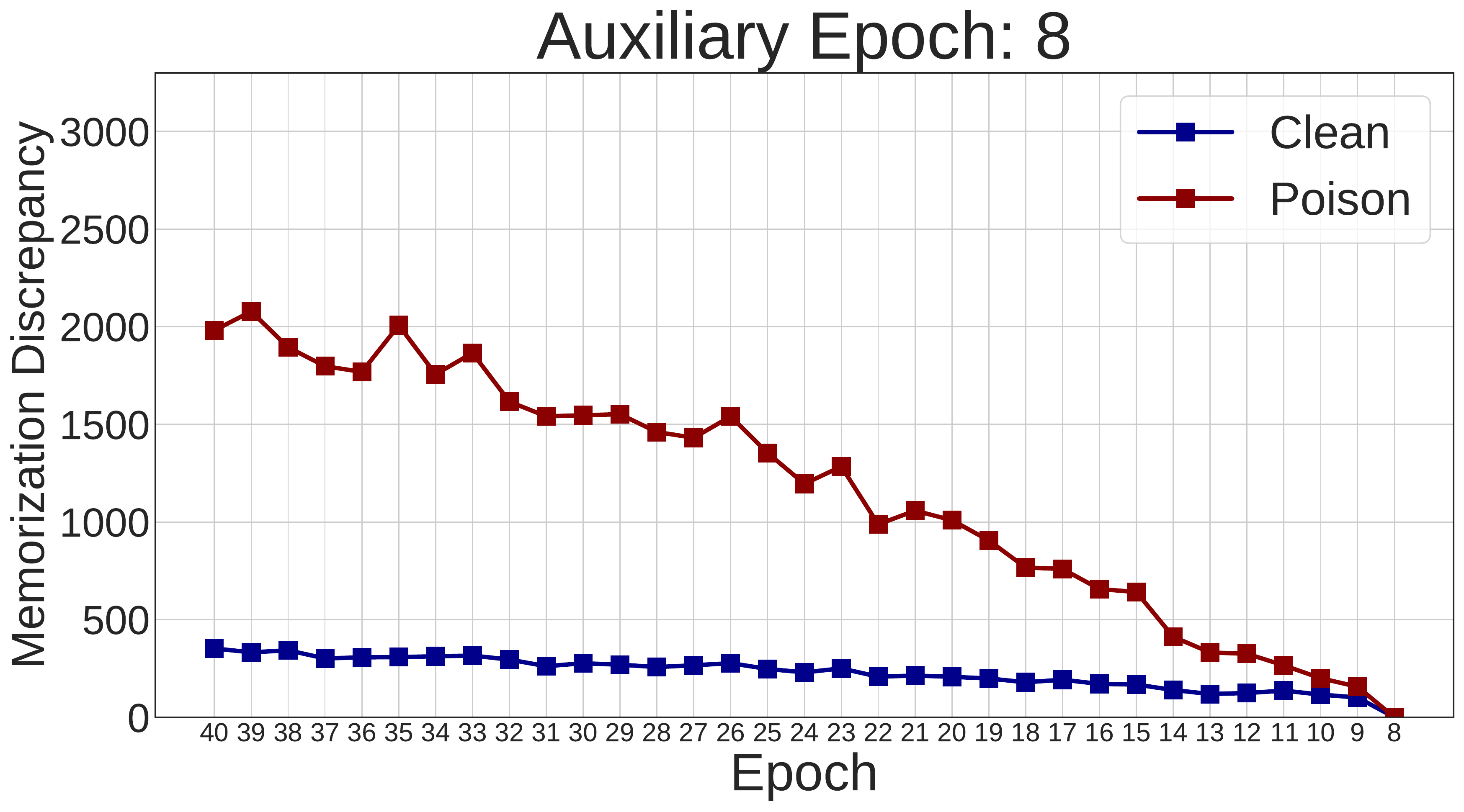}
    \includegraphics[scale=0.19]{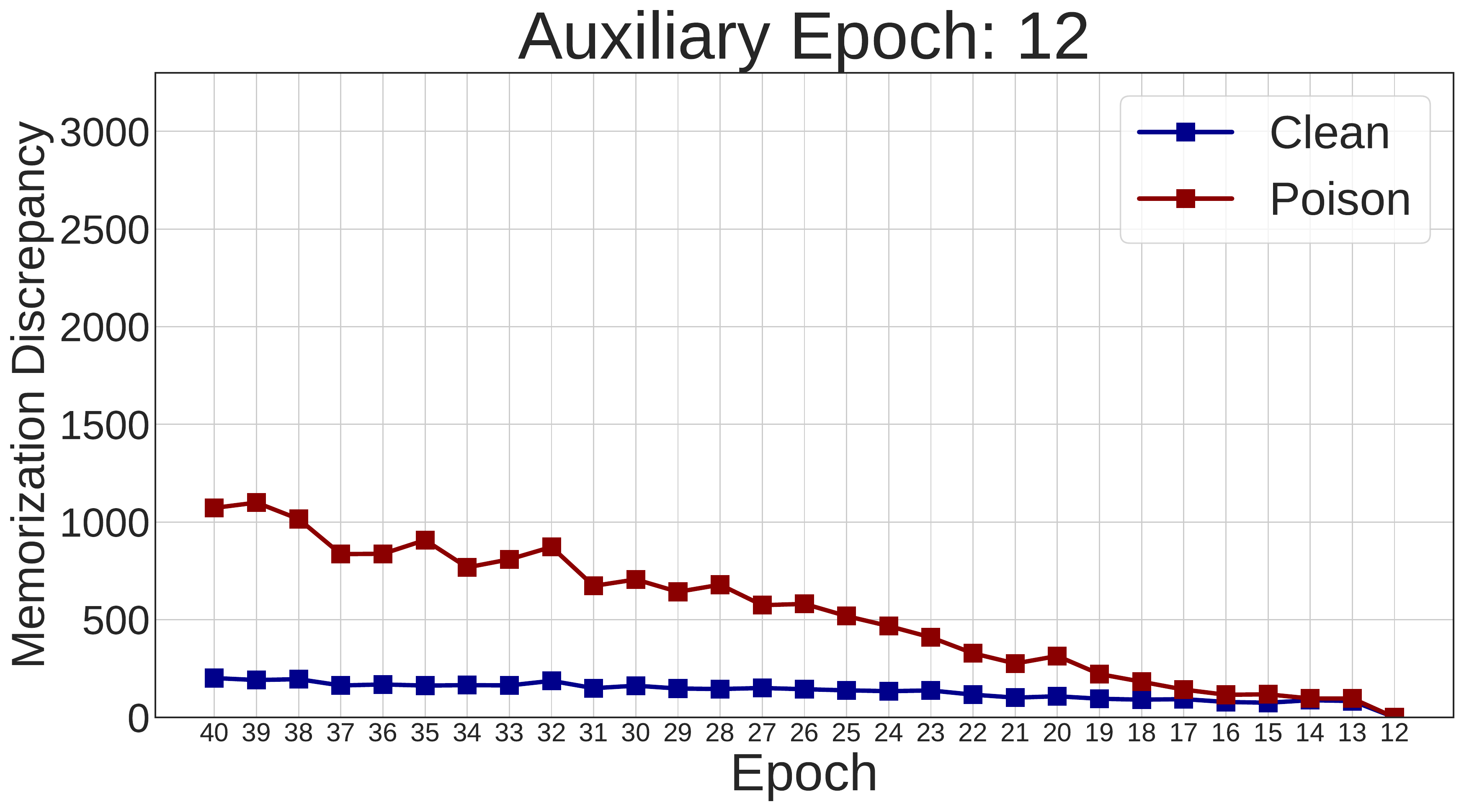}\\
    \vspace{2mm}
    \includegraphics[scale=0.19]{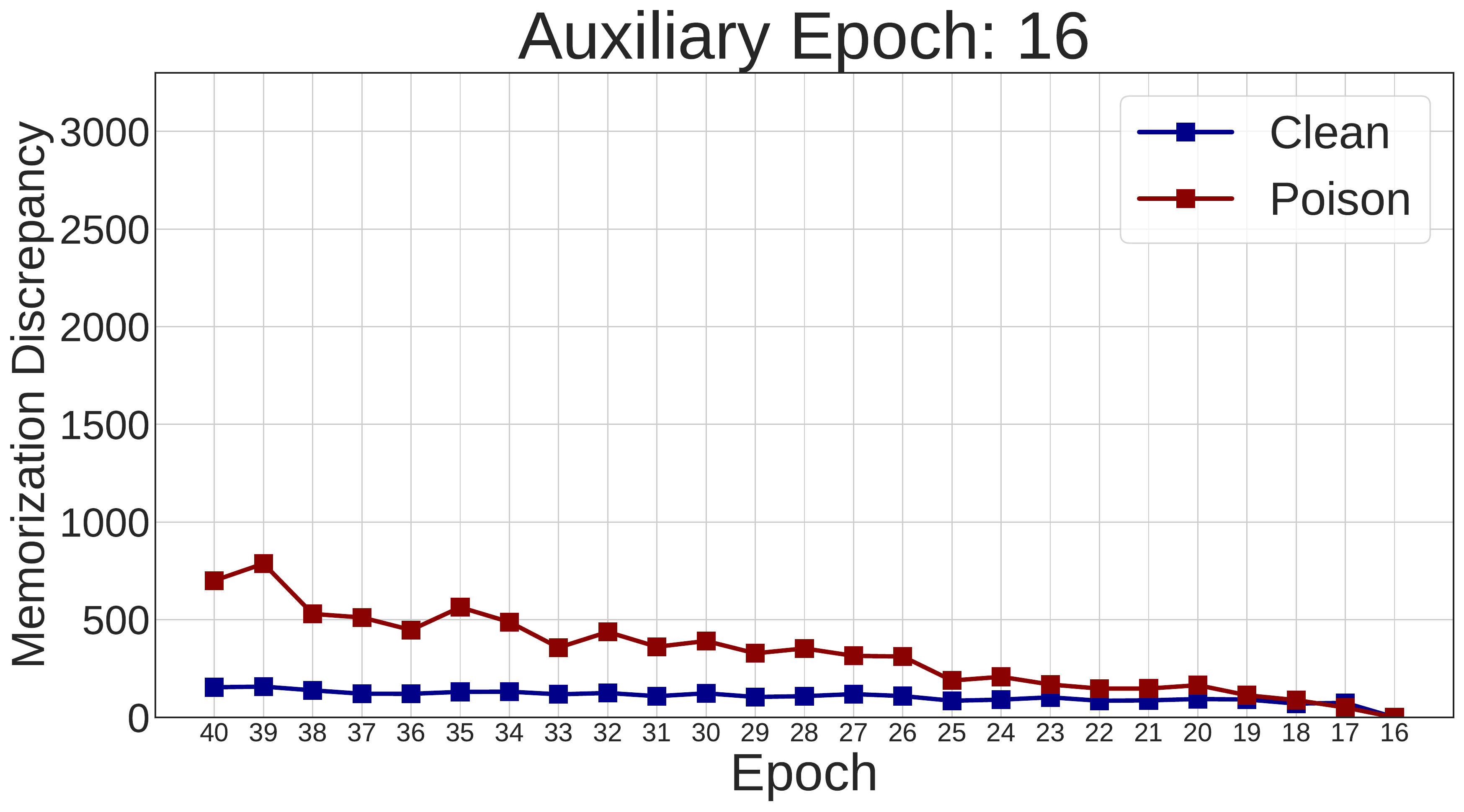}
    \includegraphics[scale=0.19]{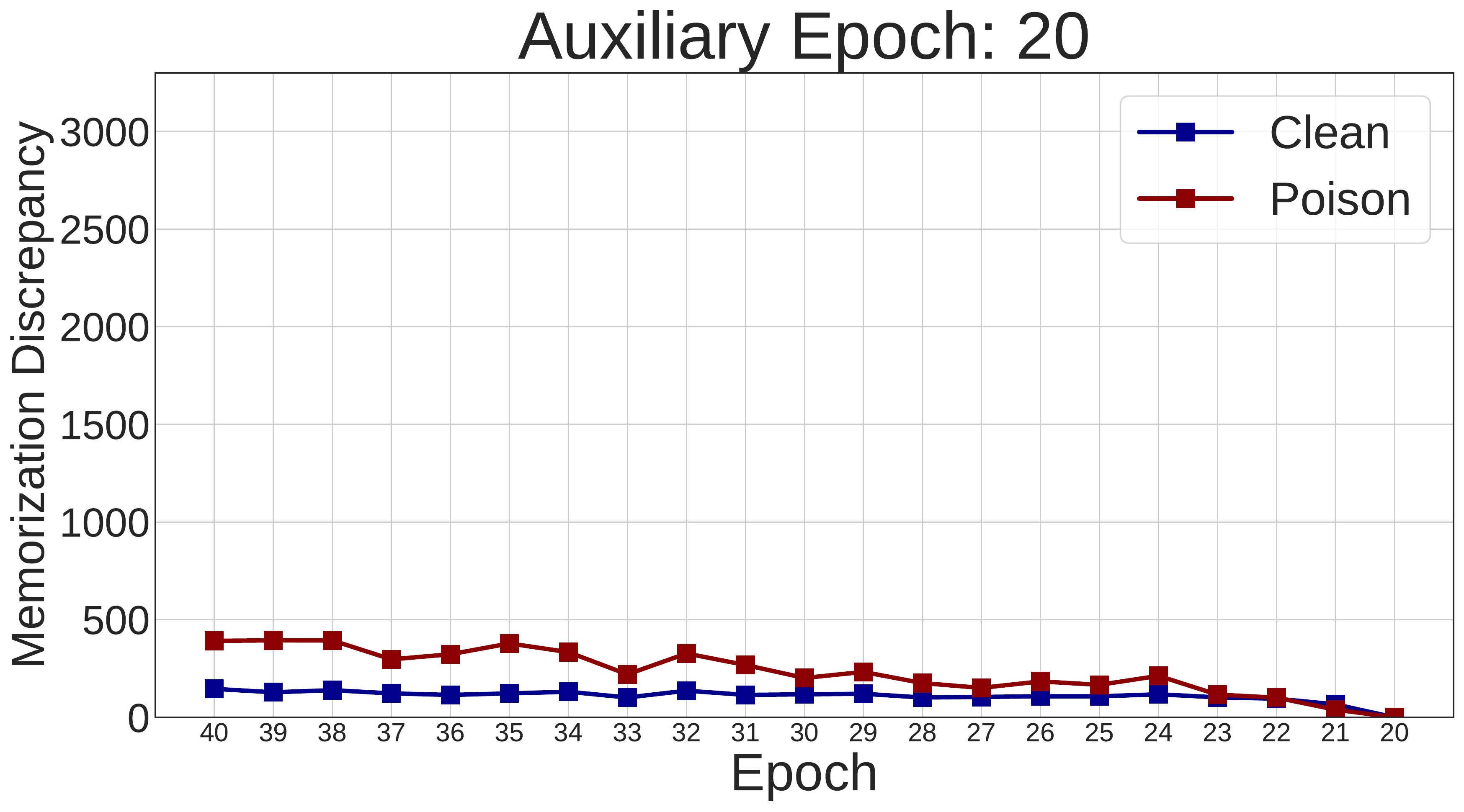}\\
    \vspace{2mm}
    \includegraphics[scale=0.19]{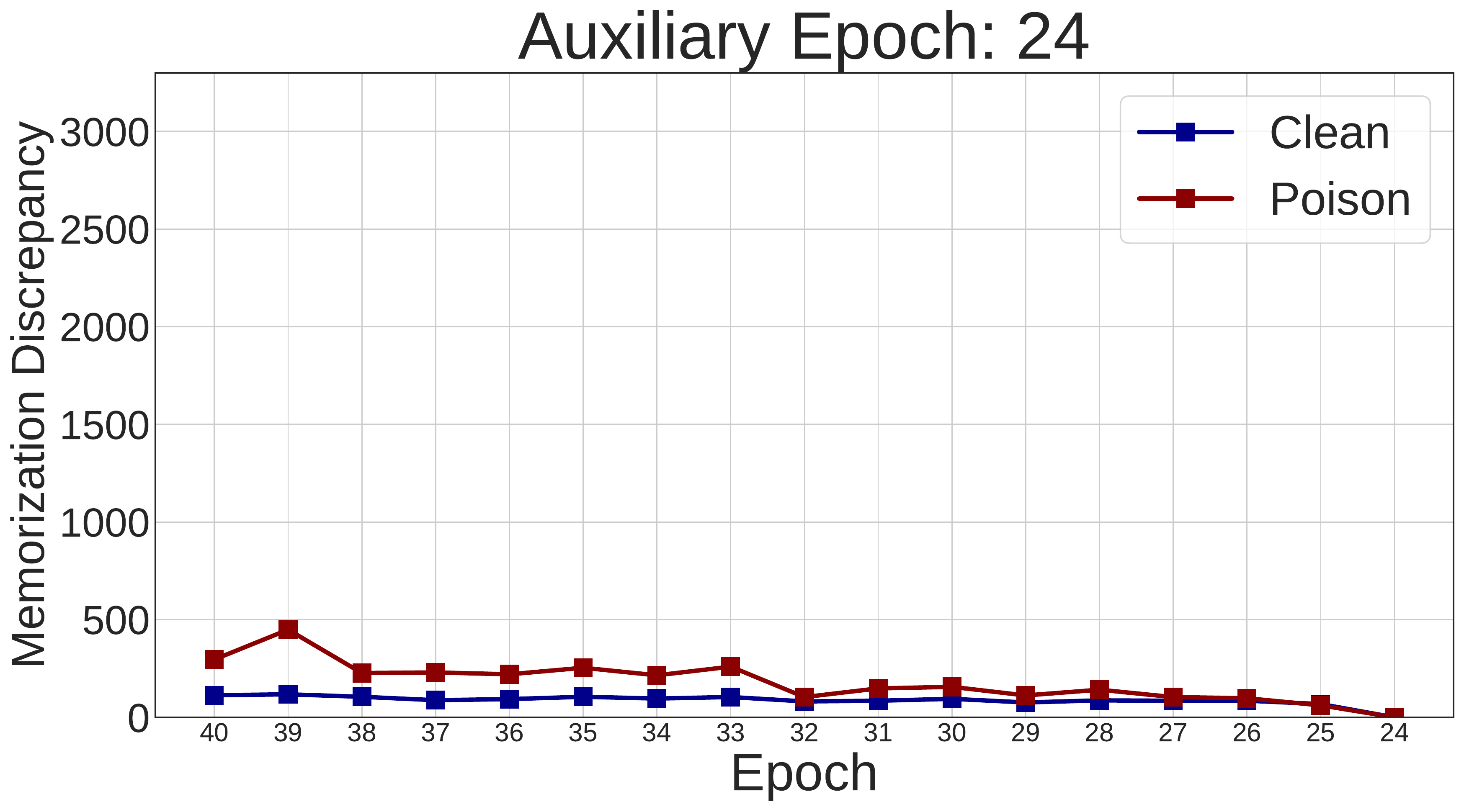}
    \includegraphics[scale=0.19]{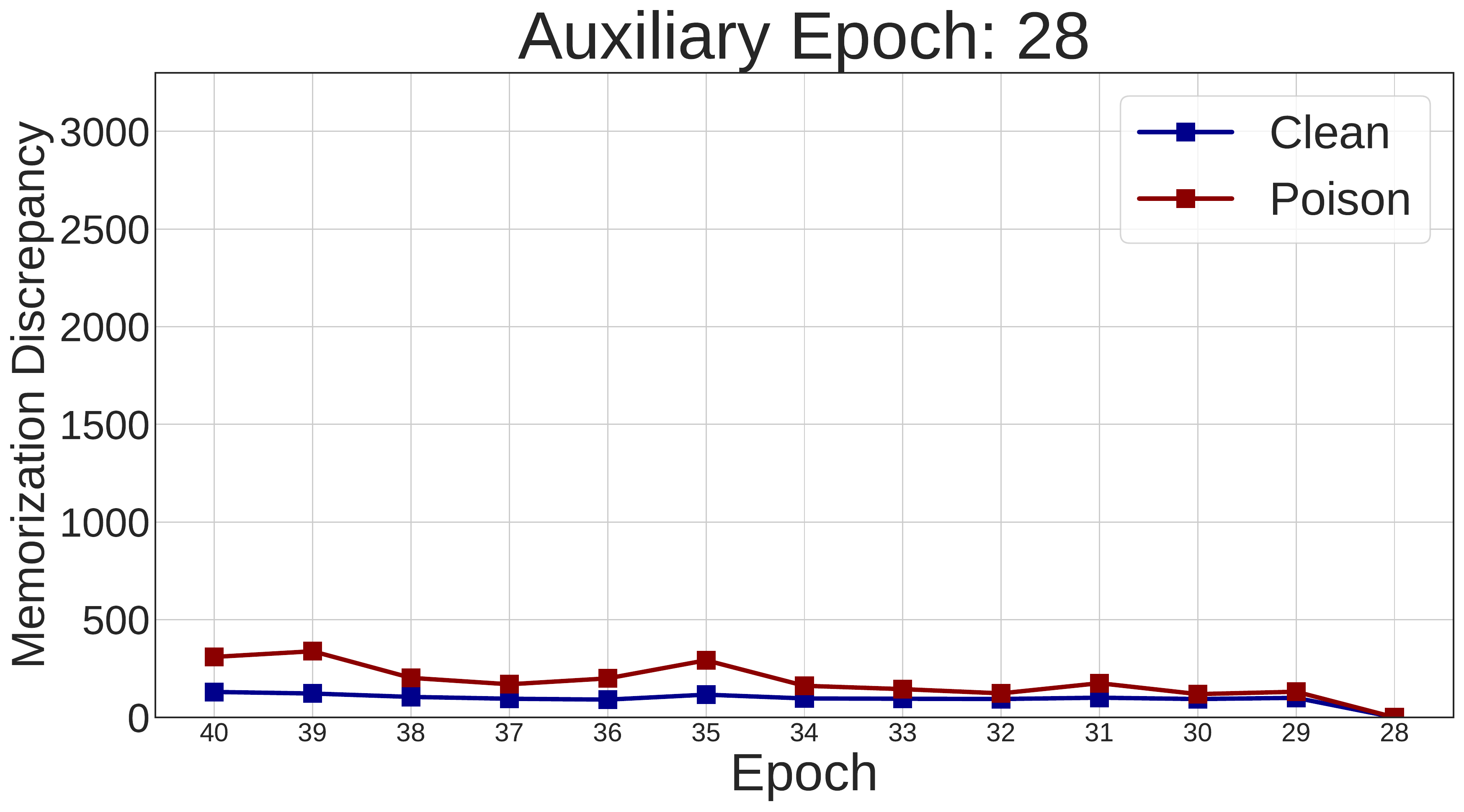}\\
    \caption{Dynamics of different auxiliary model on Memorization Difference in CIFAR-10.}
    \label{fig:reason_2_app}
\end{figure}

\subsection{Detailed Discussion about Attackers Being Aware of Memorization Discrepancy}
\label{app:adaptive_attack}

Considering the concern about adaptive attackers in conventional adversarial literatures~\citep{tramer2020adaptive}, we also present a further discussion about a stronger attacker being aware of our Memorization Discrepancy and trying to incorporate it into the poison sample generation~\citep{pang2021accumulative} with the auxiliary model that used in our experiments.

Before that, we also try different adversarial attacking objectives (e.g., PGD~\citep{Madry_adversarial_training}, KL-based method in TRADES~\citep{Zhang_trades}, and C\&W~\citep{Carlini017_CW}) in generating the poison samples. Our empirical results in Table~\ref{table:exp_other_attack} show that different adversarial generation methods in conventional adversarial literature have limited differences from each other. Then, we delve into the stronger attacker that is also optimized for Memorization Discrepancy.

However, unlike the previous adaptive adversarial attacks~\citep{Carlini017_CW,tramer2020adaptive} utilizing the extra search space to find a stronger adversarial example to satisfy the misclassification requirement, and meanwhile keep the imperceptibility. Our empirical results in Table~\ref{table:exp_adaptive_attack} show that keeping the constraint of Memorization Discrepancy can directly affect the poisoning effect induced by the generated poison samples, indicating the underlying difference between generating adversarial examples~\citep{Goodfellow14_Adversarial_examples} for misleading the model inference and generating adversarial poison samples for misleading the model training. In other words, the constraint on Memorization Discrepancy in poison generation will directly mitigate the poison effect on the target model.

To be specific, following the detailed optimization procedure of accumulative poisoning attack, we incorporate the constraint of Memorization Discrepancy into the original generation equation used in~\citep{pang2021accumulative}. Similar to the first constraint in Eq.~\eqref{eq:real_time_malicious_ob_re_accu_expand_v2} used for keeping the accuracy (which is targeted for escaping from a simple monitor based on accuracy statics), we add the second term in Eq.~\eqref{eq:extend_malicious_ob_re_accu_expand_v2} for Memorization Discrepancy, where $\mathcal{L}_\text{MD}=\mathbb{D}(f(\hat{x}(\theta^t); \theta^{*}), f(\hat{x}(\theta^t); \theta^{t}))$ and the auxiliary historical model $\theta^*$ are kept same as DSC. The whole generation objective is extended as follows,
\begin{align}
\label{eq:extend_malicious_ob_re_accu_expand_v2}
    \max_{\mathcal{P},\mathcal{A}_t}\nabla_{\theta}\mathcal{L}(\mathcal{A}_t(S_{t});\theta^t)^\top\left[ \underbrace{\nabla_{\theta}\mathcal{L}(S_{t};\theta^{t})}_\text{keep accuracy}+\underbrace{\beta\cdot\nabla_{\theta}\mathcal{L}_\text{MD}}_\text{keep imperceptibility}+
    \lambda\cdot\underbrace{\nabla_{\theta}(\nabla_{\theta}\mathcal{L}(S_{val};\theta^{T})^\top\nabla_{\theta}\mathcal{L}(\mathcal{P}(S_{T});\theta^{T}))}_\text{accumulating poisoning effects for the trigger batch}\right],
\end{align}
Intuitively, it is reasonable that the search space about generating a perturbation for adversarial examples may be easier to utilize for keeping the imperceptibility than constructing the adversarial poison samples in accumulative poisoning attacks or other delusive attacks. The above exploration verifies Memorization Discrepancy has significance in identifying accumulative poisoning attacks and also in increasing the difficulty of generating poison samples with satisfactory poisoning effects and better statistical unawareness.

\section{Further Discussion}




As for the underlying mechanism of Memorization Discrepancy, it has no special assumption on the types of poisoning generation but reflects the target-level discrepancy (i.e., the differences between poisoning target $\max\mathcal{L}(S,\theta)$ and the original target $\min\mathcal{L}(S, \theta)$) by exploring model dynamics. Memorization Discrepancy is a characteristic of poisoned behavior that can be considered in different defensive methods or detection strategies. We primarily focus on this problem set in our study since the delusive attack and its corresponding defense are important and of great interest in the related literature~\citep{newsome2006paragraph, fowl2021adversarial, pang2021accumulative}. One possible strategy to extend our work to different types of poisoning is to explore indications in the nature of the specific poisoning objective using model dynamics. However, since the poisons have distinct targets~\citep{fowl2021adversarial,geiping2021doesn,pang2021accumulative} and various different objectives, we would leave expanding our approaches to be one major work in future.


Here we also discuss the potential limitations of our work, there are two points that need to be improved in the future. First, as our work mainly focuses on defending against the accumulative poison attack on real-time data streaming, currently, there is a certain gap in generalizing our method to an offline setting (e.g., training with the poisoned samples from scratch). To be specific, utilizing the Memorization Discrepancy in other settings may require more improvement or adjustment. Second, regarding the proposed DSC, the current method still requires carefully checking the model dynamics to set the threshold P. The predefined threshold may increase the extra analytical workload for adopting the method in practice.

Regarding the future directions, there are two directions corresponding to previously discussed limitations. First, the learning dynamics revealed by the Memorization Discrepancy capture the relationship between the natural objective and the poisoning objective, which can be extended to or explored in other settings like the offline poisoning defense. Second, it can be found that all the current methods still suffer from performance degradation induced by the accumulative poisoning attack. Considering the practical and special scenarios, how to enhance the defense or detection method is also a worthwhile topic to explore further.

\end{document}